\DeclarePairedDelimiter{\floor}{\lfloor}{\rfloor}
\setlist[itemize]{leftmargin=5mm}
\newcommand{\mini}{\,\wedge\,}
\newcommand{\tsc}[1]{{\fontfamily{pcr}\selectfont #1}}
\newcommand{\vb}{{\vct{v}}}
\newcommand{\gb}{{\vct{g}}}
\newcommand{\vct}[1]{\bm{#1}}
\newcommand{\vsn}{\vspace{-2mm}}
\newcommand{\rr}{\lambda}
\newcommand{\vr}{\kappa}
\newcommand{\wb}{\overline}
\newcommand{\zb}{\wb \z}
\newtheorem{assumption}{Assumption}
\newtheorem{lemma}{Lemma}
\newtheorem{propo}{Proposition}
\newcommand{\cut}[1]{\textcolor{red}{}}
\newcommand{\W}{\vct{W}}
\newcommand{\sign}[1]{\texttt{sign}(#1)}
\newcommand{\M}{\vct{M}}
\newcommand{\V}{\vct{V}}
\newcommand{\Mh}{\hat{\M}}
\newcommand{\Vh}{\hat{\V}}
\newcommand{\G}{\vct{G}}
\newcommand{\mub}{\boldsymbol{\mu}}
\newcommand{\x}{\vct{x}}
\newcommand{\ub}{\vct{u}}
\newcommand{\w}{{\vct{w}}}
\newcommand{\s}{\vct{s}}
\newcommand{\z}{\vct{z}}
\newcommand{\ab}{\vct{a}}
\newcommand{\Nc}{\mathcal{N}}
\newcommand{\Lhat}{\widehat{L}}
\newcommand{\beq}{\begin{equation}}
\newcommand{\eeq}{\end{equation}}
\newcommand{\bea}{\begin{align}}
\newcommand{\eea}{\end{align}}
\newcommand{\R}{\mathbb{R}}
\newcommand{\E}{\mathbb{E}}
 \newcommand{\om}{\omega}
  \newcommand{\eps}{\epsilon}
\newcommand{\ind}[1]{\mathds{1}[#1]}
\DeclarePairedDelimiterX{\inp}[2]{\langle}{\rangle}{#1, #2}
\providecommand{\norm}[1]{\lVert#1\rVert}
\providecommand{\abs}[1]{\left\lvert#1\right\rvert}
\providecommand{\Unif}[1]{\operatorname{Unif}(#1)}
\newcommand{\xb}{{\vct{x}}}
\newcommand{\calN}{\mathcal{N}}
\newtheorem{theorem}{Theorem}
\newtheorem*{remark*}{Remark}
\newcommand{\indi}[1]{\mathbb{1}[#1]} 
\title{The Rich and the Simple: \\On the Implicit Bias of Adam and SGD}
\author{%
  Bhavya Vasudeva \quad Jung Whan Lee \quad Vatsal Sharan \quad Mahdi Soltanolkotabi \\
  Department of Computer Science\\
  University of Southern California\\
  \texttt{\{bvasudev,jlee7870,vsharan,soltanol\}@usc.edu} \\
}
\begin{document}

\maketitle

\begin{abstract}
  Adam is the de facto optimization algorithm for several deep learning applications, but an understanding of its implicit bias and how it differs from other algorithms, particularly standard first-order methods such as (stochastic) gradient descent (GD), remains limited. In practice, neural networks (NNs) trained with SGD are known to exhibit simplicity bias --- a tendency to find simple solutions. In contrast, we show that Adam is more resistant to such simplicity bias. First, we investigate the differences in the implicit biases of Adam and GD when training two-layer ReLU NNs on a binary classification task  with Gaussian data. We find that GD exhibits a simplicity bias, resulting in a linear decision boundary with a suboptimal margin, whereas Adam leads to much richer and more diverse features, producing a nonlinear boundary that is closer to the Bayes' optimal predictor. This richer decision boundary also allows Adam to achieve higher test accuracy both in-distribution and under certain distribution shifts. We theoretically prove these results by analyzing the population gradients. Next, to corroborate our theoretical findings, we present extensive empirical results showing that this property of Adam leads to superior generalization across various datasets with spurious correlations where NNs trained with SGD are known to show simplicity bias and do not generalize well under certain distributional shifts.
\end{abstract}

\vsn
\vsn
\vsn
\section{Introduction}
\vsn
\label{sec:intro}

Adaptive optimization algorithms, particularly Adam \citep{Kingma2014AdamAM}, have become ubiquitous in training deep neural networks (NNs) due to their faster convergence rates and better performance, particularly on large language models (LLMs), as compared to (stochastic) gradient descent (SGD) \citep{Zhang2019WhyAB}. Despite its widespread use, the theoretical understanding of how Adam works and when/why it outperforms (S)GD remains limited. 

Modern NNs are heavily overparameterized and thus the training landscape has numerous global optima. As a result, different training algorithms may exhibit preferences or biases towards different global optima a.k.a.~\emph{implicit bias}. There is extensive prior work on the implicit bias of GD \citep{ib-gd-log,guna-ib,wu-ib,ji-ib}, for both linear and nonlinear models (see \cref{sec:rel-work-main} for a detailed discussion of related work). However, there is limited work investigating the implicit bias of Adam. Recently, \citet{zhang2024implicitbiasadamseparable} showed that for linear logistic regression with separable data, Adam iterates directionally converge to the minimum $\ell_\infty$-norm solution, in contrast to GD which converges to minimum $\ell_2$-norm \citep{ib-gd-log}. This difference between the implicit bias of Adam vs GD in simple linear settings motivates the central question of this paper:
\vsn
\begin{quote}
\centering \emph{What is the implicit bias of Adam for nonlinear models such as NNs, and how does it differ from the implicit bias of (S)GD?}
\end{quote}
\begin{figure}[t]  
  \centering  
  \begin{minipage}[t]{0.45\textwidth}
    \centering
\includegraphics[width=0.45\linewidth]{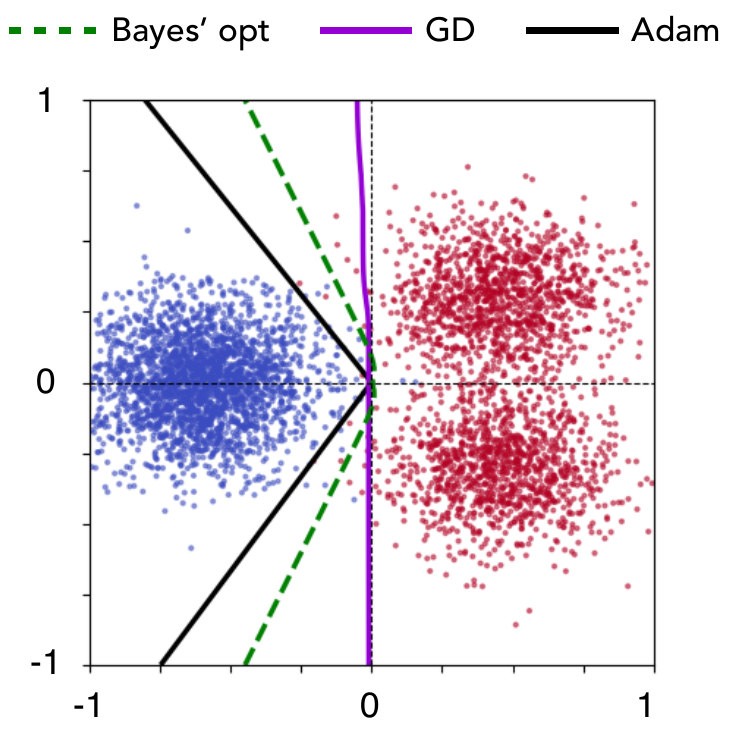}
    \caption{Illustration of the synthetic dataset considered in this work, and comparison of the Bayes' optimal predictor with the decision boundaries of two-layer NNs trained with Adam and GD.}
    \label{fig:main}
  \end{minipage}
  \hfill   
  \begin{minipage}[t]{0.53\textwidth}
    \centering
\includegraphics[width=\linewidth]{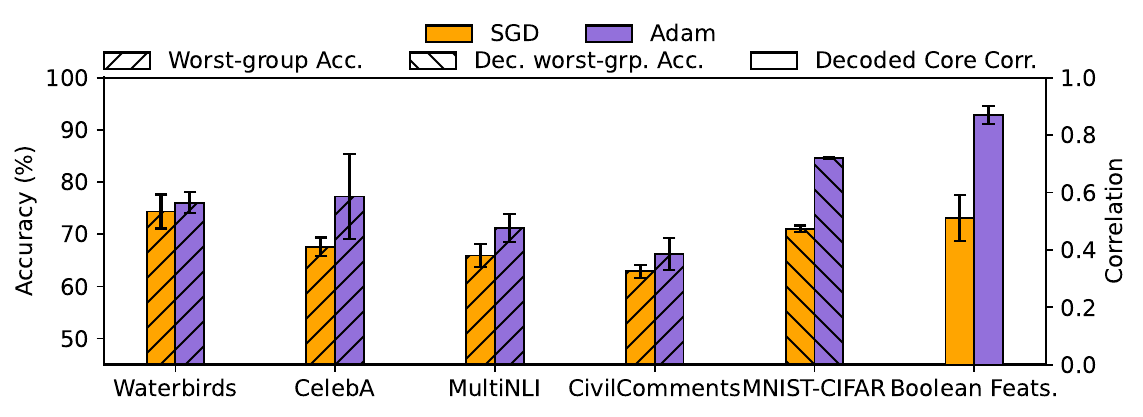} 
    \caption{Training with Adam leads to better performance across different test set metrics on six benchmark datasets with spurious correlations, as compared to SGD. See \cref{sec:expts} for details. }
    \label{fig:main_results}
  \end{minipage}
\end{figure}

\vsn
Given the popularity of Adam, surprisingly   little is known about the implicit bias of Adam for training NNs. A notable exception is the recent work \citet{tsilivis2024flavorsmarginimplicitbias}, which characterized the late-stage implicit bias of a family of steepest descent algorithms, on homogeneous NNs, in terms of maximizing an algorithm-dependent geometric margin ($\ell_\infty$ for signGD or Adam without momentum) and convergence to KKT points. However, the stationary points of the respective margin-maximization may not be unique. Additionally, this characterization does not relate the implicit bias of the algorithm with properties of the learned solution, such as the type of features learned, or the complexity of the decision boundary, which in turn impact its generalization. In particular, in many applications, NNs trained with SGD are known to exhibit a \emph{simplicity bias}, that is, they learn simple solutions \citep{shah2020pitfalls}, which can lead to suboptimal generalization. For example, SGD-trained two-layer NNs rely on low-dimensional projections of the data to make predictions \citep{morwani2023simplicity}. This simplicity bias can be particularly detrimental in the presence of spurious features where it is \emph{simpler} for {NNs trained with} SGD to utilize them to achieve zero training error\footnote{We remark that simplicity bias may not always be detrimental; for instance, it can be beneficial for in-distribution generalization (see \cref{sec:lim} for further discussion). Our focus is characterizing and contrasting the implicit bias of Adam vs (S)GD in terms of rich vs simple feature learning, not advocating for one to always be better.}. This leads to the question: Does training with Adam lead to solutions that are resistant to this simplicity bias?

In this paper, we answer these questions in two ways. \emph{Theoretically}, we show that two-layer ReLU NNs trained on a Gaussian mixture data setting (see \cref{fig:main}) with SGD exhibit simplicity bias while training with Adam leads to richer feature learning. \emph{Empirically}, we demonstrate that training with Adam can lead to better performance as compared to SGD on various benchmark datasets with spurious features (see \cref{fig:main_results}). Our main contributions are as follows.\vsn

\begin{itemize}
    \item We identify a simple yet informative setting with mixture of Gaussians where GD and Adam exhibit different implicit biases (see \cref{fig:main}). The Bayes' optimal predictor in this setting is nonlinear (piecewise linear), and \emph{we show --- both theoretically and empirically --- that while GD exhibits a simplicity bias resulting in a linear predictor, Adam encourages reliance on richer features leading to a nonlinear decision boundary}, which is closer to the Bayes' optimal predictor. We theoretically prove this difference in the implicit bias by analyzing the population gradients and updates of GD and Adam without momentum (signGD). We also show that this leads to better test accuracy in distribution as well as across some distribution shifts. Additionally, to theoretically understand the behaviour of Adam with momentum, we analyze a simpler setting where the variance of the Gaussians approaches $0$, in the infinite width limit. We show that the decision boundaries learned with signGD and Adam are more nonlinear (and closer to the Bayes' optimal predictor) than the one learned with GD.
    
    \item \emph{We conduct extensive experiments on various datasets and show that Adam leads to richer features that are more robust compared to simpler features learned via SGD, allowing Adam to achieve higher test accuracy both in distribution and under certain distribution shifts}. First, we consider an MNIST-based task with a colored patch as a spurious feature and show that compared to SGD, training with Adam leads to a more nonlinear decision boundary, larger margins overall, and has better generalization on a test set with flipped correlation. Next, we show that Adam achieves better worst-group accuracy on four benchmark datasets (Waterbirds, CelebA, MultiNLI, and CivilComments) for subgroup robustness, and better decoded worst-group accuracy on the Dominoes or MNIST-CIFAR dataset \citep{shah2020pitfalls}, with images from CIFAR and MNIST classes as the complex/core and simple/spurious features, respectively. Finally, we study the Boolean features dataset proposed in \citet{qiu2024complexity}, and show that training with Adam leads to better core feature learning as compared to SGD. 
\end{itemize}
\vsn
\vsn
\section{Setup}
\vsn

We consider a two-layer {homogeneous} neural network with fixed final layer and ReLU activation, defined as $
f(\W;\xb) := \ab^\top \sigma(\W\xb)$, 
where $\xb\in\R^d$ denotes the input, $\W\in \R^{m\times d}$ denotes the trainable parameters, $\ab\in \{\pm 1\}^m$ are the final layer weights, and $\sigma(\cdot):=\max(0,\cdot)$ is the ReLU activation. Let $S:=\{(\xb_i,y_i)\}_{i=1}^n$ denote the set of train samples, where the label $y\in\{-1,1\}$. The model is trained to minimize the empirical risk $
    \Lhat(\W):=\frac{1}{n}\sum_{i=1}^n\ell(-y_if(\W;\xb_i))$,
where $\ell$ denotes a decreasing loss function. We consider two loss functions, namely logistic loss, where $\ell(z):=\log(1+\exp(z))$, and correlation or linear loss, where $\ell(z):=z$, for $z\in\R$. We focus on the following two update rules.

\vsn
\paragraph{Gradient Descent.} The updates for GD with step-size $\eta>0$ at iteration $t\geq 0$ are written as $\W_{t+1}=\W_t-\eta \G_t$, where $\G_t:=\nabla_{\W} \Lhat(\W_t)$, each row of which is written as:
\begin{align*}
    \tfrac{-1}{n}\sum_{i=1}^n \ell_{i,t}'y_i\nabla_{\w_j} f (\W_t;\xb_i)\!=\!\tfrac{-1}{n}\sum_{i=1}^n \ell_{i,t}'a_j\sigma'(\w_{j,t}^\top\xb_i)(y_i\xb_i),
\end{align*}
where $\ell_{i,t}'$ denotes $\ell'(-y_if(\W_t,\xb_i))$ for convenience, and $\sigma'(z):=\ind{z\geq 0}$, for $z\in\R$.

\vsn
\paragraph{Adam.} The update rule for the Adam optimizer \cite{Kingma2014AdamAM} is as follows:
\begin{equation*}
    \W_{t+1} = \W_{t} - \eta \Mh_{t}\odot (\Vh_{t}+\epsilon\mathbf{1}\mathbf{1}^\top)^{\circ-1/2},
\end{equation*}
where $\Mh_{t} = \dfrac{\M_{t+1}}{1 - \beta_1^{t+1}} = \dfrac{1}{1 - \beta_1^{t+1}} \Big(\beta_1 \M_{t} + (1-\beta_1) \G_{t} \Big)$ is the bias-corrected first-moment estimate, and $\Vh_t = \dfrac{\V_{t+1}}{1-\beta_2^{t+1}} = \dfrac{1}{1-\beta_2^{t+1}} \Big(\beta_2 \V_{t} + (1-\beta_2) \G_{t} \odot \G_{t}\Big)$ is the bias-corrected second (raw) moment estimate. $\epsilon$ is the numerical precision parameter, which is set as $0$ for the theoretical results. Also, $\odot$ and $(\cdot)^\circ$ denote the Hadamard product and power, respectively, and $\M_0$ and $\V_0$ are initialized as zeroes. Note that we can write \begin{align*}
\Mh_{t}=\dfrac{\sum_{\tau=0}^t\beta_1^\tau\G_{t-\tau}}{\sum_{\tau=0}^t\beta_1^\tau} \quad \text{and} \quad \Vh_{t}=\dfrac{\sum_{\tau=0}^t\beta_2^\tau\G_{t-\tau} \odot \G_{t-\tau}}{\sum_{\tau=0}^t\beta_2^\tau}.
\end{align*}

At each optimization step, the descent direction is different from the gradient direction because of the entry-wise division with the second (raw) moment. Further, the first update step exactly matches the update of signGD, which uses the sign of the gradient $\sign{\G_t}$ instead of directly using the gradient $\G_t$ for the update. This is because $\Mh_0=\G_0$ and $\Vh_0=\G_0\odot \G_0$, and hence $(\Mh_0\odot\Vh_0^{\circ-1/2})_{i,j}=\frac{(G_0)_{i,j}}{\abs{(G_0)_{i,j}}}=\sign{(G_0)_{i,j}}$. Similarly, when the parameters $\beta_1$ and $\beta_2$ are set as $0$, the Adam updates are the same as signGD for every $t\geq 0$.

\vsn
\paragraph{Dataset.} Our synthetic dataset is designed to investigate the impact of feature diversity on the implicit biases of \bvreplace{optimization algorithms}{GD and Adam} in NN training. It models two classes with differing feature distributions to emulate real-world scenarios where feature complexity may vary between classes. See \cref{fig:main} for an illustration of the dataset. Concretely, each sample $(\xb,y)$ is generated as follows:
\begin{align}\label{eq:data}
    y\sim \Unif{\{\pm 1\}},& \quad \eps \sim \Unif{\{\pm 1\}}\\
    x_1\sim \calN\left(\tfrac{\mu_1-\mu_3}{2}+y\tfrac{\mu_1+\mu_3}{2}, \sigma_x^2\right), \quad x_2\sim \,&\calN\left(\eps\left(\tfrac{y+1}{2}\right)\mu_2,\sigma_y^2\right),\quad
    x_j\sim\calN(0,\sigma_z^2),  \forall j\in\{3,\dots,d\}.\nonumber
\end{align}

The first two dimensions contain information about the label while the rest are noisy. Our dataset construction is inspired by the synthetic ``slabs'' dataset introduced by \citet{shah2020pitfalls}. While their approach utilizes slab features to represent non-linearly separable components, we consider Gaussian features instead. This modification enhances the realism of the synthetic data and facilitates a more nuanced analysis of the NN training dynamics.

We first write the Bayes' optimal predictor for this dataset \bvedit{when using only the signal dimensions} as follows.
\begin{propo}[Bayes' Optimal Predictor]\label{prop:bayes-opt} The optimal predictor for the data in \cref{eq:data} with $d\!=\!2$ is:
\begin{align*}
(\mu_1 + \mu_3)x_1+ \tfrac{\sigma_x^2}{\sigma_y^2}\mu_2x_2 = \tfrac{\mu_1^2 - \mu_3^2}{2}  + \tfrac{\mu_2^2\sigma_x^2}{2\sigma_y^2}-\sigma_x^2\log\left(0.5\left(1 + \exp\left(-\tfrac{2\mu_2 x_2}{\sigma_y^2}\right)\right)\right).
\end{align*}
\end{propo}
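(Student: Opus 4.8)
The plan is to characterize the Bayes' optimal predictor as the level set where the posterior log-odds $\log\frac{\Pro(y=1\mid x_1,x_2)}{\Pro(y=-1\mid x_1,x_2)}$ vanishes. Since $y\sim\Unif{\{\pm1\}}$, the two classes have equal prior probability, so by Bayes' rule this log-odds equals the log-likelihood ratio $\log\frac{p(x_1,x_2\mid y=1)}{p(x_1,x_2\mid y=-1)}$. Conditioned on $y$, the coordinates $x_1$ and $x_2$ are independent, so this ratio splits additively into an $x_1$-contribution and an $x_2$-contribution, and I would analyze each separately.

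First I would handle $x_1$. Substituting $y=\pm1$ into \cref{eq:data} gives $x_1\mid y{=}1\sim\calN(\mu_1,\sigma_x^2)$ and $x_1\mid y{=}{-}1\sim\calN(-\mu_3,\sigma_x^2)$. Taking the log-ratio of these two Gaussian densities, the quadratic terms in $x_1$ cancel because the variances coincide, leaving the linear expression $\frac{(\mu_1+\mu_3)x_1}{\sigma_x^2}+\frac{\mu_3^2-\mu_1^2}{2\sigma_x^2}$.

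The $x_2$ coordinate is where the nonlinearity enters, and this is the main obstacle. For $y=-1$ we simply have $x_2\sim\calN(0,\sigma_y^2)$, but for $y=1$ the auxiliary sign $\eps\sim\Unif{\{\pm1\}}$ makes $x_2$ an equal mixture of $\calN(\mu_2,\sigma_y^2)$ and $\calN(-\mu_2,\sigma_y^2)$. I would write out this mixture density, factor out $\exp(-\tfrac{x_2^2+\mu_2^2}{2\sigma_y^2})$, and recognize the remaining sum of two exponentials as $2\cosh(\mu_2 x_2/\sigma_y^2)$. Dividing by the $y=-1$ density, the $x_2^2$ terms cancel and the log-ratio reduces to $-\frac{\mu_2^2}{2\sigma_y^2}+\log\cosh\!\big(\frac{\mu_2 x_2}{\sigma_y^2}\big)$.

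Finally I would set the sum of the two log-ratios to zero, multiply through by $\sigma_x^2$, and apply the identity $\log\cosh(u)=u+\log\!\big(\tfrac12(1+e^{-2u})\big)$ with $u=\mu_2 x_2/\sigma_y^2$ to split the $\log\cosh$ term into the linear piece $\frac{\sigma_x^2\mu_2 x_2}{\sigma_y^2}$ plus a residual logarithmic term. Moving the residual term to the right-hand side and collecting the constants $\frac{\mu_1^2-\mu_3^2}{2}$ and $\frac{\mu_2^2\sigma_x^2}{2\sigma_y^2}$ then yields exactly the stated equation. The only delicate part is the bookkeeping in the $\cosh$-to-logistic rewriting and the sign conventions; the rest is routine Gaussian algebra.
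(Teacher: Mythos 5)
Your proposal is correct and is essentially the same argument as the paper's: both equate the class-conditional densities (equal priors reduce the posterior log-odds to the log-likelihood ratio) and take logarithms, and your coordinate-wise split with the identity $\log\cosh(u)=u+\log\bigl(\tfrac{1}{2}(1+e^{-2u})\bigr)$ is exactly the factoring the paper carries out implicitly in its ``simplification yields'' step. All of your intermediate expressions (the conditional means $\mu_1$ and $-\mu_3$ for $x_1$, the two-component mixture for $x_2\mid y{=}1$, and the final rearrangement after multiplying by $\sigma_x^2$) check out against the stated equation.
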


Since we consider homogeneous NNs (no bias parameter), we make the following assumption on the data generating process to make the setting realizable, \textit{i.e.}, ensure that the Bayes' optimal predictor passes through the origin. 
\begin{assumption}[Realizability]
    Let $\mu\!:=\!\mu_2$, $\vr:=\frac{\sigma_x^2}{\sigma_y^2}$, $\om:=\frac{\mu_1+\mu_3}{\vr\mu}\geq 1$. For realizability, $\mu_1=\frac{\mu}{2}\left(\vr\om-\frac{1}{\om}\right)$ and $\mu_3=\frac{\mu}{2}\left(\vr\om+\frac{1}{\om}\right)$. 
\end{assumption}
Here, $\vr$ denotes the degree of anisotropy of the clusters, and $\pm\tfrac{1}{\om}$ corresponds to the slopes of the two linear components of the optimal predictor.

\vsn
\section{Theoretical Results}
\vsn
In this section, we aim to theoretically analyze GD and Adam and the differences in the learned solution arising from the update rules. For clarity of exposition we focus on a simple setting. \emph{Specifically, in this section, we consider the infinite sample limit, fixed outer layer weights, $d=2$, and training with correlation or linear loss.} As we will see later in \cref{sec:expts}, these algorithms learn different solutions even when these assumptions are relaxed. 

\begin{figure*}[t]
    \centering
    \includegraphics[width=\linewidth]{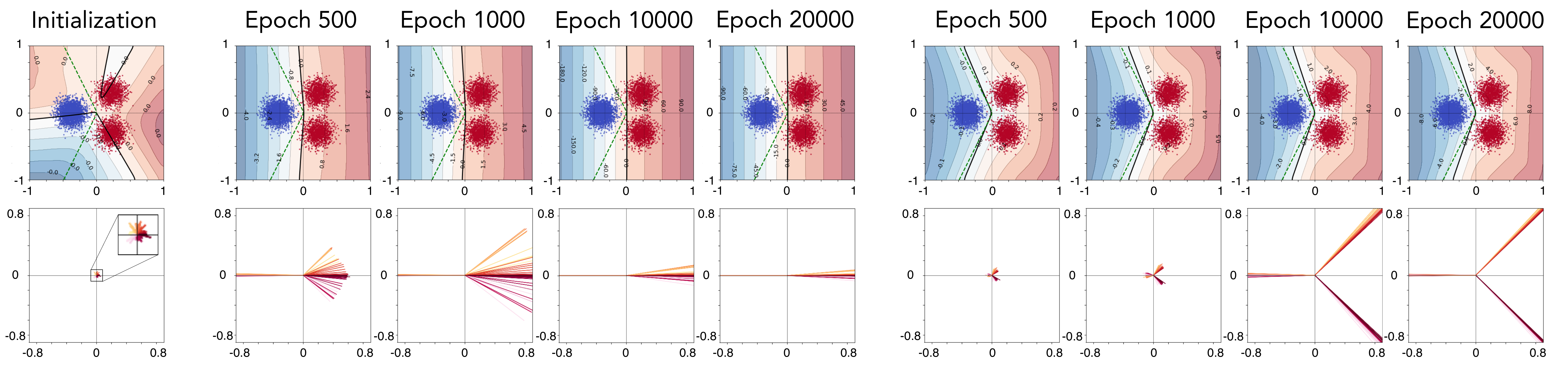}
    \caption{Evolution of the decision boundary (top row) and the neurons (bottom row) over time, for GD (left) and Adam (right) with learning rates $0.1$ and $10^{-4}$ over $20\,000$ epochs of training a width $100$ NN with small initialization (the neurons are colored based on the quadrant they were initialized in) using population gradients (the samples are plotted for illustration purposes) on the Gaussian data setting (\cref{eq:data}) with $\mu=0.3,\om=2,\sigma=0.1$. GD leads to a linear decision boundary, with neurons mostly aligned with the directions $[\pm1,0]^\top$, while Adam (with $\beta_1=\beta_2=0.9999$) leads to a non-linear decision boundary, with neurons aligned with three main directions $[-1,0]^\top, [1,1]^\top, [1,-1]^\top$, which is closer to the Bayes' optimal predictor. }
    \label{fig:population}
\end{figure*}

\vsn
\subsection{Gaussian Data}
\vsn
We first obtain a closed-form of the population gradient for the Gaussian cluster data as follows. The proof is included in \cref{app:proofs}.
\begin{propo}[Population Gradient]\label{prop:pop-grad} Consider the data in \cref{eq:data} with $\sigma_x\!=\!\sigma_y\!=\!\sigma_z\!=\!\sigma$. When using correlation loss, the population gradient for neuron $\w$ is written as:
\begin{align*}
    &\nabla_{\w} \Lhat(\W) = -a \mathbb{E}_{(\xb,y) \sim \mathcal{D}}[\mathbb{1}[\w^\top\xb\geq 0]y \xb]\\  &=-\tfrac{a\sigma}{4}\Big(\Phi(\rr\bar{\mub}_+^\top\bar{\w})\rr\bar{\mub}_++\Phi(\rr\bar{\mub}_-^\top\bar{\w})\rr\bar{\mub}_- -2\Phi(\rr\bar{\mub}_0^\top\bar{\w})\rr\bar{\mub}_0   +\Big(\phi(\rr\bar{\mub}_+^\top\bar{\w})+\phi(\rr\bar{\mub}_-^\top\bar{\w})-2\phi(\rr\bar{\mub}_0^\top\bar{\w}) \Big)\bar{\w}\Big),
    \end{align*}
where $\rr:=\tfrac{\mu}{\sigma}\tfrac{\om^2+1}{2\om}$ $\bar{\mub}_+:=\left[\tfrac{\om^2-1}{\om^2+1},\tfrac{2\om}{\om^2+1},0,\dots,0\right]^\top$, $\bar{\mub}_-:=\left[\tfrac{\om^2-1}{\om^2+1},-\tfrac{2\om}{\om^2+1},0,\dots,0\right]^\top$, $\bar{\mub}_0:=\left[-1,0,0,\dots,0\right]^\top$, and $\phi$ and $\Phi$ denote the normal PDF and CDF, respectively. 
\end{propo}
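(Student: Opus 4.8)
The plan is to reduce the claim to a single Gaussian halfspace-moment computation and then assemble the mixture. First note that under correlation loss $\ell(z)=z$ we have $\ell'\equiv 1$, so every loss-derivative factor in the gradient row formula from the Setup disappears. Passing to the infinite-sample limit then gives the first displayed equality $\nabla_{\w}\Lhat(\W) = -a\,\mathbb{E}_{(\xb,y)\sim\mathcal{D}}[\mathbb{1}[\w^\top\xb\geq 0]\,y\,\xb]$ directly from the definition of the population gradient, with $\sigma'(\w^\top\xb)=\mathbb{1}[\w^\top\xb\geq 0]$. It then remains to evaluate this expectation in closed form.

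Second, I would decompose the mixture by conditioning on $(y,\eps)\in\{\pm1\}^2$ (independent, each value probability $\tfrac12$). With $\sigma_x=\sigma_y=\sigma_z=\sigma$, the conditional law of $\xb$ is isotropic Gaussian $\mathcal{N}(\mub_\bullet,\sigma^2\Iden)$ with only three distinct centers: $\mub_+=(\mu_1,\mu_2,0,\dots,0)$ and $\mub_-=(\mu_1,-\mu_2,0,\dots,0)$, both arising from $y=+1$ with weight $\tfrac14$ each, and $\mub_0=(-\mu_3,0,\dots,0)$, arising from $y=-1$ where $\eps$ is irrelevant, hence with total weight $\tfrac12$. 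Carrying the signed label $y$ through the expectation assigns weights $(+\tfrac14,+\tfrac14,-\tfrac12)$ to the three centers, so it suffices to compute the per-cluster quantity $\mathbb{E}[\mathbb{1}[\w^\top\xb\geq 0]\,\xb]$ for $\xb\sim\mathcal{N}(\mub_\bullet,\sigma^2\Iden)$.

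Third — and this is the technical crux — I would establish the halfspace first-moment identity: for $\xb\sim\mathcal{N}(\mub,\sigma^2\Iden)$ and unit direction $\bar{\w}=\w/\norm{\w}$,
\[
\mathbb{E}[\mathbb{1}[\w^\top\xb\geq 0]\,\xb] = \mub\,\Phi\!\left(\tfrac{\bar{\w}^\top\mub}{\sigma}\right) + \sigma\,\bar{\w}\,\phi\!\left(\tfrac{\bar{\w}^\top\mub}{\sigma}\right).
\]
The argument conditions on the scalar $g:=\w^\top\xb\sim\mathcal{N}(m,s^2)$ with $m=\w^\top\mub$ and $s=\sigma\norm{\w}$, and uses the jointly-Gaussian conditional mean $\mathbb{E}[\xb\mid g]=\mub+\tfrac{\w}{\norm{\w}^2}(g-m)$, which follows from $\mathrm{Cov}(\xb,g)=\sigma^2\w$ and $\mathrm{Var}(g)=s^2$. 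Applying the tower property, $\mathbb{E}[\mathbb{1}[g\geq 0]\,\mathbb{E}[\xb\mid g]]$ collapses to two elementary one-dimensional integrals, $\mathbb{P}(g\geq 0)=\Phi(m/s)$ and $\mathbb{E}[\mathbb{1}[g\geq 0](g-m)]=s\,\phi(m/s)$, the latter from $\int_a^\infty u\phi(u)\,du=\phi(a)$ together with the symmetry $\phi(-a)=\phi(a)$. Since $m/s=\bar{\w}^\top\mub/\sigma$, this yields the identity above.

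Finally, I would invoke the Realizability Assumption (here $\vr=1$) to pin down the geometry: a short computation gives $\norm{\mub_+}=\norm{\mub_-}=\norm{\mub_0}=\mu\tfrac{\om^2+1}{2\om}=\sigma\rr$, and the directions $\mub_\bullet/\norm{\mub_\bullet}$ equal precisely the stated unit vectors $\bar{\mub}_+,\bar{\mub}_-,\bar{\mub}_0$, so that $\mub_\bullet=\sigma\rr\,\bar{\mub}_\bullet$ and $\bar{\w}^\top\mub_\bullet/\sigma=\rr\,\bar{\mub}_\bullet^\top\bar{\w}$. Substituting the halfspace identity into each of the three clusters, combining with the signed weights $(+\tfrac14,+\tfrac14,-\tfrac12)$, and factoring out $\sigma/4$ and the prefactor $-a$ produces exactly the claimed formula: the $\mub_\bullet$-terms furnish the $\Phi(\rr\bar{\mub}_\bullet^\top\bar{\w})\,\rr\bar{\mub}_\bullet$ contributions, while the $\phi$-terms all collect onto the shared $\bar{\w}$ direction as $\big(\phi(\rr\bar{\mub}_+^\top\bar{\w})+\phi(\rr\bar{\mub}_-^\top\bar{\w})-2\phi(\rr\bar{\mub}_0^\top\bar{\w})\big)\bar{\w}$. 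I expect the only genuine obstacle to be the halfspace moment lemma in the third step; the mixture bookkeeping and the norm/direction identification from realizability are routine once the centers are written out.
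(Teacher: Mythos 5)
Your proposal is correct and takes essentially the same route as the paper's proof: the same conditioning on $(y,\eps)$ to decompose the mixture into the three centers with signed weights $(+\tfrac14,+\tfrac14,-\tfrac12)$, the same Gaussian halfspace first-moment computation at the core, and the same use of realizability to identify $\norm{\mub_\bullet}=\sigma\rr$ and the unit directions $\bar{\mub}_\bullet$. The only cosmetic difference is that you derive the halfspace identity self-contained via the conditional mean $\E[\xb\mid \w^\top\xb]$ and the tower property, whereas the paper decomposes $\xb$ into components parallel and perpendicular to $\w$ and cites the truncated-normal mean formula --- the two computations are equivalent, with $\Phi\cdot(\mub+\sigma\Gamma\bar{\w})$ in the paper's notation collapsing to exactly your $\mub\,\Phi+\sigma\bar{\w}\,\phi$.
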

{Here, $\bar{\mub}_+$, $\bar{\mub}_-$ denote the (normalized) mean vectors of the clusters with label $1$ and $\bar{\mub}_0$ denotes the mean of the cluster with label $-1$.}

{Before theoretically analyzing the training dynamics of GD and Adam, we compare them empirically. \cref{fig:population} shows the evolution of the decision boundary and the neurons for two-layer NNs trained with GD and Adam using the population gradients in \cref{prop:pop-grad}, as a function of the training epochs. We observe that for GD (left), all neurons align in direction $[\pm1,0]^\top$, and the learned decision boundary is linear, \textit{i.e.}, the model only relies on the first dimension to make predictions. On the other hand, for Adam (right), neurons converge along three different directions, $[-1,0]^\top, \tfrac{1}{\sqrt{2}}[1,1]^\top, \tfrac{1}{\sqrt{2}}[1,-1]^\top$, and the learned decision boundary is piece-wise linear, \textit{i.e.}, the model uses both the signal dimensions.}

We will now prove these results for the two optimizers in \cref{th:gauss-gd-res} and \cref{th:gauss-signgd-res}. First, we leverage the gradient expression in \cref{prop:pop-grad}  to show that gradient descent with infinitesimal step size, \textit{i.e.}, gradient flow (GF) exhibits simplicity bias and learns a linear predictor. 

\begin{theorem}(Informal) \label{th:gauss-gd-res} Consider the data in \cref{eq:data}, neurons initialized such that $a_k=\pm1$ with probability $0.5$, $d=2$, $\sigma_x=\sigma_y=\sigma$, and $\om>c_1$ and $\tfrac{\mu}{\sigma}>c_2$, where $c_1,c_2$ are constants. Let $\w_{k,\infty}:=\lim_{t\rightarrow\infty}\tfrac{\w_{k,t}}{t}$ and $\bar{\w}_{k,\infty}:=\tfrac{\w_{k,\infty}}{\norm{\w_{k,\infty}}}$,  
for $k\in[m]$. Then, the solution learned by gradient flow is: $
\bar{\w}_{k,\infty}=a_k[1,0]^\top$.
\end{theorem}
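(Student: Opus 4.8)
The plan is to exploit the fact that for correlation loss one has $\ell'\equiv 1$, so the population gradient of each neuron (given in \cref{prop:pop-grad}) depends only on that neuron and its fixed sign $a_k$. Hence gradient flow \emph{decouples} across neurons, and it suffices to analyze a single planar ODE $\dot\w = -\nabla_\w\Lhat$ for one neuron with sign $a\in\{\pm 1\}$. First I would write $\w = r\bar\w$ with $\bar\w=(\cos\theta,\sin\theta)$ and separate the radial and angular parts. Since the last term of the gradient in \cref{prop:pop-grad} is parallel to $\bar\w$, only the three cluster terms contribute tangentially, giving $\dot\theta = \tfrac{a\sigma\rr}{4r}H(\theta)$ with
\[
H(\theta) = \Phi(\rr\cos(\psi-\theta))\sin(\psi-\theta) - \Phi(\rr\cos(\psi+\theta))\sin(\psi+\theta) - 2\Phi(-\rr\cos\theta)\sin\theta,
\]
where $\psi$ is the angle of $\bar{\mub}_+$, i.e.\ $\cos\psi=\tfrac{\om^2-1}{\om^2+1}$ and $\sin\psi=\tfrac{2\om}{\om^2+1}$. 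The target directions $[1,0]$ and $[-1,0]$ correspond to $\theta=0$ and $\theta=\pi$, and one checks directly that $H(0)=H(\pi)=0$.

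Next I would establish local stability. Since $H$ is odd it suffices to work on $[0,\pi]$. Writing $\alpha=\cos\psi$, $\beta=\sin\psi$, differentiation yields $H'(0)=2\rr\phi(\rr\alpha)\beta^2 - 2\alpha\Phi(\rr\alpha) - 2\Phi(-\rr)$ and $H'(\pi)=2\rr\phi(\rr\alpha)\beta^2 + 2\alpha\Phi(-\rr\alpha) + 2\Phi(\rr)$. The latter is manifestly positive for $\om>1$, so $\theta=\pi$ is stable for the $a=-1$ flow. For the former, as $\om\to\infty$ we have $\beta\to 0$, $\alpha\to 1$ and $\rr\to\infty$, so $H'(0)\to -2<0$; quantifying this produces explicit constants $c_1,c_2$ (through $\om>c_1$ and $\mu/\sigma>c_2$, which force $\rr\alpha$ large) ensuring $H'(0)<0$, so $\theta=0$ is stable for the $a=1$ flow. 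Matching signs to the claim, a neuron with $a_k=1$ is attracted to $[1,0]$ and one with $a_k=-1$ to $[-1,0]$, i.e.\ to $a_k[1,0]^\top$.

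To upgrade local stability to almost-global convergence I would prove the stronger statement that $H(\theta)<0$ for all $\theta\in(0,\pi)$, which by oddness gives $H>0$ on $(-\pi,0)$; then for $a=1$ the angular flow is strictly monotone toward $0$ from every $\theta_0\neq\pi$, and for $a=-1$ strictly toward $\pi$, the complementary endpoint being an unstable (measure-zero) repeller. Useful anchors are the exact value $H(\pi/2)=-(1+\cos\psi)<0$ (where the $\rr$-dependence cancels) together with the endpoint derivative signs above; the global sign is then obtained by controlling $H$ as a perturbation of the limiting field $H(\theta)\to -2\sin\theta$ as $\om\to\infty$. Finally, for the normalized limit I would verify that the radial drift $\dot r = \langle -\nabla_\w\Lhat,\bar\w\rangle$ is bounded away from $0$ near the stable direction, so $r(t)\sim ct$ grows linearly, while $\int^\infty r(t)^{-1}\,dt=\infty$ guarantees the angle fully converges rather than freezing despite the $1/r$ slowdown; hence $\w_{k,t}/t\to c\,a_k[1,0]^\top$ and $\bar\w_{k,\infty}=a_k[1,0]^\top$.

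The main obstacle is the global sign analysis $H<0$ on $(0,\pi)$: $H$ couples Gaussian CDFs and PDFs with trigonometric factors, and because $\rr$ grows with $\om$, a crude Taylor expansion in $\psi$ yields an $O(1)$-looking correction near $\theta=\pi/2$. The exact cancellation at $\theta=\pi/2$ indicates the right route is to exploit the structural pairing of the two positive-cluster terms (i.e.\ $\psi-\theta$ against $\psi+\theta$) rather than to bound the perturbation term by term, so that the negative-cluster contribution $-2\Phi(-\rr\cos\theta)\sin\theta$ can be shown to dominate uniformly on $(0,\pi)$.
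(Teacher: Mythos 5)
Your proposal is correct and essentially the paper's own argument in polar form: your tangential field satisfies $H(\theta)=-\sin\theta$ times the bracketed expression the paper shows is positive (the paper tracks $a\,\tfrac{d\cos\theta_t}{dt}$ directly), your ``structural pairing'' of the $\psi-\theta$ and $\psi+\theta$ terms is exactly their mean-value-theorem step $\Phi(\rr\bar{\mub}_+^\top\bar{\w})-\Phi(\rr\bar{\mub}_-^\top\bar{\w})=\phi(c)\,\tfrac{4\rr\om\sin\theta}{\om^2+1}$ with the $\cos\theta<0$ versus $\cos\theta>0$ case split, and your non-freezing argument via $\|\w_{k,t}\|\le c(t+1)$ and $\int r(t)^{-1}\,dt=\infty$ matches their contradiction-by-integration step. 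The extra anchors you add (the exact value $H(\pi/2)=-(1+\cos\psi)$, the endpoint derivatives, and the $\om\to\infty$ limit $-2\sin\theta$) are correct but not needed beyond what the paper's bound already delivers.
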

The proof is included in \cref{app:proofs}. The main step in the proof is using the relation $\cos{\theta_{k,t}} = \bar{\w}_{k,t}^\top\bar{\w}^*$, where $\bar{\w}^*:=[1,0]^\top$, and showing that
\begin{align*}
    a_k\tfrac{d \cos{\theta_{k,t}}}{dt} = a_k\dot{\bar{\w}}_{k,t}^\top\bar{\w}^*=a_k\tfrac{\dot{\w}_{k,t}^\top}{\norm{\w_{k,t}}}(I-\bar{\w}_{k,t}\bar{\w}_{k,t}^\top)\bar{\w}^*>C\tfrac{(\sin\theta_{k,t})^2}{\norm{\w_{k,t}}},
\end{align*}
where $C>0$ is a constant, for every $t>0$. To show this, we use the fact that $\dot{\w}_{k,t}=-\nabla_{\w_{k,t}}\Lhat(\W_t)$ for gradient flow, and use the population gradient from \cref{prop:pop-grad}. Further, using  \cref{prop:pop-grad} and showing that $\norm{\w_{k,t}}\leq c (t+1)$ for some constant $c>0$, we prove by contradiction that $\sin\theta_{k,t}\to0$ as $t\to\infty$, and hence $\cos\theta_{k,t}\to1$. 

Next, we analyze Adam with $\beta_1=\beta_2=0$ (signGD), and show that it learns both features resulting in a nonlinear predictor.
\begin{theorem}(Informal) \label{th:gauss-signgd-res} Consider the data in \cref{eq:data}, neurons initialized such that $a_k=\pm1$ with probability $0.5$, small initialization scale, $d=2$, $\sigma_x=\sigma_y=\sigma$, $\om>c$ and $c_1\leq\tfrac{\mu}{\sigma}\leq c_2$, where $c,c_1,c_2$ are constants. Let $\w_{k,\infty}:=\lim_{t\rightarrow\infty}\tfrac{\w_{k,t}}{t}$ and $\bar{\w}_{k,\infty}:=\tfrac{\w_{k,\infty}}{\norm{\w_{k,\infty}}}$, for $k\in[m]$. Let $\theta_0$ denote the direction of $\w_{k,0}$. Then, the solution learned by signGD is:
\begin{align*}
\bar{\w}_{k,\infty}=\begin{cases}
    \tfrac{1}{\sqrt{2}}[1,1]^\top & a_k>0, \sin\theta_{k,0}> 0,\\
    \tfrac{1}{\sqrt{2}}[1,-1]^\top & a_k>0, \sin\theta_{k,0}< 0,\\
    [1,0]^\top & a_k>0, \sin\theta_{k,0}= 0,\\
    [-1,0]^\top & a_k<0.
\end{cases}
\end{align*}
\end{theorem}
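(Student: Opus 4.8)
The plan is to use that Adam with $\beta_1=\beta_2=0$ coincides with signGD at every step, so it suffices to analyze the coordinate-wise sign dynamics of the population gradient from \cref{prop:pop-grad}. Writing $\nabla_{\w}\Lhat(\W)=-\tfrac{a\sigma}{4}\,\gb(\bar{\w})$, where $\gb=(g_1,g_2)$ is the bracketed vector (a function of the direction $\bar{\w}$ only, since the ReLU indicator is scale-invariant), the signGD update of neuron $k$ reads $w_{i,t+1}=w_{i,t}+\eta\,a_k\,\operatorname{sign}(g_i(\bar{\w}_{k,t}))$ for $i\in\{1,2\}$ (using $\operatorname{sign}(a_k)=a_k$). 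Each coordinate thus moves by exactly $\pm\eta$ per step, and the limiting direction is governed entirely by the long-run signs of $g_1,g_2$. Introducing $s_\pm:=\rr\bar{\mub}_\pm^\top\bar{\w}$ and $s_0:=\rr\bar{\mub}_0^\top\bar{\w}=-\rr\bar w_1$, and using the exact identity $\rr\cdot\tfrac{2\om}{\om^2+1}=\tfrac{\mu}{\sigma}$, the coordinates become
\begin{align*}
g_1 &= \rr\Big[\tfrac{\om^2-1}{\om^2+1}\big(\Phi(s_+)+\Phi(s_-)\big)+2\Phi(s_0)\Big]+\big(\phi(s_+)+\phi(s_-)-2\phi(s_0)\big)\bar w_1,\\
g_2 &= \tfrac{\mu}{\sigma}\big(\Phi(s_+)-\Phi(s_-)\big)+\big(\phi(s_+)+\phi(s_-)-2\phi(s_0)\big)\bar w_2.
\end{align*}

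The core of the argument is two sign lemmas, valid for all $\bar{\w}$ under $\om>c$ and $c_1\le\tfrac{\mu}{\sigma}\le c_2$: (i) $g_1>0$; and (ii) $\operatorname{sign}(g_2)=\operatorname{sign}(\bar w_2)$, with $g_2=0$ iff $\bar w_2=0$. Granting these, the four cases follow from the coordinate dynamics. For $a_k>0$: $g_1>0$ forces $w_{1,t}=w_{1,0}+\eta t$, so $w_{1,t}/t\to\eta$; if $\sin\theta_{k,0}>0$ then $\bar w_2>0$ is self-reinforcing (by (ii), $w_2$ increases and stays positive), giving $w_{2,t}/t\to\eta$ and $\bar{\w}_{k,\infty}=\tfrac{1}{\sqrt2}[1,1]^\top$; the case $\sin\theta_{k,0}<0$ is symmetric and yields $\tfrac{1}{\sqrt2}[1,-1]^\top$; and $\sin\theta_{k,0}=0$ gives $w_{2,t}\equiv0$, hence $[1,0]^\top$. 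For $a_k<0$: $w_{1,t}=w_{1,0}-\eta t\to-\infty$, while the factor $a_k=-1$ turns (ii) into a trapping rule — whenever $w_2>0$ it decreases and whenever $w_2<0$ it increases — so $w_2$ stays in an $O(\eta)$ band around $0$; thus $w_{2,t}/t\to0$ and $\bar{\w}_{k,\infty}=[-1,0]^\top$ irrespective of $\sin\theta_{k,0}$. Taking $\w_{k,\infty}=\lim_t \w_{k,t}/t$ then reads off the stated directions.

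To prove the lemmas I would first exploit the reflection symmetry $x_2\mapsto-x_2$, which swaps $\bar{\mub}_+\leftrightarrow\bar{\mub}_-$ and fixes $\bar{\mub}_0$: under $\bar w_2\mapsto-\bar w_2$ one has $s_+\leftrightarrow s_-$, so $g_1$ is even and $g_2$ is odd in $\bar w_2$; it therefore suffices to treat $\bar w_2>0$ and show $g_2>0$. The CDF term $\tfrac{\mu}{\sigma}(\Phi(s_+)-\Phi(s_-))$ is strictly positive there since $s_+-s_-=2\tfrac{\mu}{\sigma}\bar w_2>0$. The difficulty is the PDF term, whose sign is that of $\phi(s_+)+\phi(s_-)-2\phi(s_0)$ and which turns negative exactly where $|s_0|=\rr|\bar w_1|$ is small, i.e. near the $x_2$-axis. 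The binding case is $\bar{\w}=[0,1]^\top$, where $s_+=\tfrac{\mu}{\sigma}$, $s_-=-\tfrac{\mu}{\sigma}$, $s_0=0$ (exactly, independent of $\om$), and $g_2>0$ reduces to the scalar inequality $\tfrac{\mu}{\sigma}\big(2\Phi(\tfrac{\mu}{\sigma})-1\big)>2\big(\phi(0)-\phi(\tfrac{\mu}{\sigma})\big)$, which holds on the admissible range. Away from the axis $|s_0|$ grows and $\phi(s_0)$ shrinks, so the PDF term becomes nonnegative and the inequality only eases; a quantitative, $\theta$-uniform version of this is what $\om>c$, $c_1\le\mu/\sigma\le c_2$ buy. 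For (i), the bracket multiplying $\rr$ is bounded below by a positive constant uniformly in $\theta$ (it is at least $2\Phi(s_0)$, and when $s_0\ll0$ the terms $\Phi(s_\pm)\approx1$ keep it near $2\tfrac{\om^2-1}{\om^2+1}$), whereas the radial term is bounded by $\tfrac{1}{\sqrt{2\pi}}$; taking $\om$, hence $\rr$, large makes $g_1>0$ everywhere.

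The main obstacle is proving lemma (ii) uniformly in $\theta$: unlike (i), where a large prefactor $\rr$ overwhelms a bounded remainder, in $g_2$ the favorable CDF term carries the fixed, bounded prefactor $\mu/\sigma$, so its competition with the PDF term is genuinely quantitative and must be controlled at every angle, not merely at the $x_2$-axis. Establishing this domination for all $\bar{\w}$ with $\bar w_2>0$ — and pinning down the constants $c,c_1,c_2$ — is the technical heart; the remaining convergence bookkeeping, together with the small-initialization transient (which ensures the sign pattern stabilizes from the first step, so that $\lim_t\w_{k,t}/t$ is exactly the claimed direction), is routine.
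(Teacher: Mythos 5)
Your skeleton is the paper's proof: reduce signGD to coordinate-wise sign dynamics of the population gradient from \cref{prop:pop-grad}, prove that the first gradient coordinate never vanishes (your lemma (i) is the paper's $[1,0]\nabla_{\w}\Lhat$ check) and that the second coordinate has the sign of $\bar w_2$, vanishing only on the axis (your lemma (ii) is the paper's $[0,1]\nabla_{\w}\Lhat$ check), then conclude via the sign table, the small-initialization lock-in after one step, self-reinforcement for $a_k>0$, and the $O(\eta)$ trapping band giving $w_{2,t}/t\to 0$ for $a_k<0$ --- all of which match the paper and are sound. The genuine gap is that lemma (ii), which you yourself flag as the technical heart, is never proved: you verify only the single direction $\bar\w=[0,1]^\top$ and appeal to the heuristic that ``away from the axis $|s_0|$ grows \dots\ and the inequality only eases.'' That mechanism is false as stated: the PDF coefficient $\phi(s_+)+\phi(s_-)-2\phi(s_0)$ stays negative over a wide band of angles (e.g.\ at $\om=2$, $\tfrac{\mu}{\sigma}=0.8$, $\theta=70^\circ$ it is $\approx -0.16$), while the favorable CDF difference $\Phi(s_+)-\Phi(s_-)$ simultaneously \emph{decays} off the axis, so a single-point check does not propagate. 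A telling symptom: your binding-case inequality $\tfrac{\mu}{\sigma}\bigl(2\Phi(\tfrac{\mu}{\sigma})-1\bigr)>2\bigl(\phi(0)-\phi(\tfrac{\mu}{\sigma})\bigr)$ in fact holds for \emph{every} $\mu/\sigma>0$, so it cannot produce the upper-bound constant $c_2$ appearing in the theorem --- evidence that the axis point alone does not capture the uniform problem.

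The paper closes exactly this step with a mean-value-theorem factorization rather than pointwise domination: since $s_+-s_-=2\tfrac{\mu}{\sigma}\sin\theta$ (using $\rr\tfrac{2\om}{\om^2+1}=\tfrac{\mu}{\sigma}$), write $\Phi(s_+)-\Phi(s_-)=\phi(c)(s_+-s_-)$ with $c$ between $s_-$ and $s_+$, so that $g_2=\sin\theta\,\bigl(2(\tfrac{\mu}{\sigma})^2\phi(c)+\phi(s_+)+\phi(s_-)-2\phi(s_0)\bigr)$; both competing terms now carry the same $\sin\theta$ factor, and the sign claim reduces to a $\theta$-uniform positive lower bound on the bracket, which the paper bounds below by $2\bigl(((\tfrac{\mu}{\sigma})^2+1)\phi(\tfrac{\mu}{\sigma})-\phi(0)\bigr)$ --- positive precisely when $\tfrac{\mu}{\sigma}\le 1.5$, which is where $c_2$ comes from. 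This is the missing quantitative ingredient your plan defers. Secondarily, your lemma (i) via ``take $\om$ large so $\rr$ dominates'' is acceptable given the theorem's unspecified constants, though the paper instead obtains explicit constants ($\om\ge 2$, $\tfrac{\mu}{\sigma}\ge 0.8$) by a monotonicity argument in $(\tfrac{\mu}{\sigma},\om)$; with the MVT factorization substituted for your unproven lemma (ii), the remainder of your argument goes through as in the paper.
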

The proof is included in \cref{app:proofs}. At a high level, we leverage \cref{prop:pop-grad} to show that at each $t>0$, the gradient update in each of the two dimensions is nonzero and signGD updates are in the direction $ [\sign{a_k}, \sign{a_k\sin\theta_{k,t}}]^\top$. Notably, when $a_k>0$, neurons are either in $\tfrac{1}{\sqrt{2}}[1,1]^\top$ or in $\tfrac{1}{\sqrt{2}}[1,-1]^\top$ direction at each iteration. However, when $a_k<0$, the neurons point to $\tfrac{1}{\sqrt{2}}[-1,\pm 1]^\top$ and $\tfrac{1}{\sqrt{2}}[-1,\mp 1]^\top$ in alternating iterations, leading to convergence in the $[-1,0]^\top$ direction.

These results characterize the direction in which each neuron converges asymptotically. For GF, all neurons are in the same direction, with exactly half the neurons in $[1,0]^\top$ and $[-1,0]^\top$ directions, which leads to a linear predictor. In contrast, for signGD, there is a fraction of neurons aligned in the directions $\tfrac{1}{\sqrt{2}}[1,1]^\top$ and $\tfrac{1}{\sqrt{2}}[1,-1]^\top$ which leads to a piece-wise linear decision boundary. These results explain the behaviour we observed in \cref{fig:population}. We note that while our theoretical result considers GF, the continuous time version of GD, it is still predictive of the behaviour we observe for discrete-time GD with small step-size. We also observe a similar behaviour for Adam vs GD in \cref{fig:main}, where we consider finite samples (see \cref{sec:expts} for details).

Analyzing this setting allows us to conceptually understand how Adam (without momentum) operates and leads to rich feature learning, while GD exhibits simplicity bias. Importantly, we make no assumptions regarding the initialization direction of the neural network parameters, ensuring that any differences observed between Adam and GD arise solely from the inherent characteristics of the optimization algorithms themselves. Since we analyze the population setting, these results are in the under-parameterized regime and don't require any lower bound on the network width $m$ to ensure overparameterization.  

Next, we show that under some conditions on the distribution parameters, the piece-wise linear predictor learned by Adam (without momentum) obtains a strictly lower test error than the linear predictor learned by GD. For simplicity, we assume that $m\rightarrow \infty$ for the following result, so that we can write the predictor learned by Adam in a piece-wise linear form which is symmetric (with respect to the first dimension), using $p(\sin\theta_{k,0}>0)=p(\sin\theta_{k,0}<0)=0.5$. However, even with finite $m$, these probabilities concentrate well and we can expect the following to still hold.

\begin{theorem}(Informal) \label{th:gauss-test-error} Consider the data in \cref{eq:data} with $d=2$ and $\om=\Theta(1)$, $\vr=\tfrac{\sigma_x^2}{\sigma_y^2}\in[\tfrac{1}{\om^2}, 1]$, $\tfrac{\mu}{\sigma_y}\geq c\sqrt{\vr}\om$,  
 where $c$ is a constant. Consider two predictors,
\begin{center}
Linear: $\hat{y}=\sign{x_1}$, \\Piece-wise Linear: $\hat{y}'=\begin{cases}
    \sign{3x_1+x_2} & \text{ if }x_2\geq 0,\\
    \sign{3x_1-x_2} & \text{ if }x_2<0.
\end{cases}$
\end{center}
Then, it holds that $
\E(\hat{y}' \neq y) - \E(\hat{y} \neq y) < 0$.
\end{theorem}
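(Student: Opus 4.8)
The plan is to reduce the error comparison to a single region and then estimate Gaussian tail masses. First I would observe that the linear rule is $\hat y=\sign{x_1}$ while the piece-wise rule can be rewritten as $\hat y'=\sign{3x_1+|x_2|}$, since both branches reduce to the test $3x_1+|x_2|\ge 0$. Comparing the two, they agree whenever $x_1\ge 0$, and on $\{x_1<0\}$ they disagree exactly on the wedge $R:=\{(x_1,x_2):-|x_2|/3\le x_1<0\}$, where $\hat y=-1$ but $\hat y'=+1$. On $R$ the difference of the $0/1$ losses is $\ind{\hat y'\ne y}-\ind{\hat y\ne y}=\ind{y=-1}-\ind{y=+1}$, so integrating gives the clean identity
\[
\E(\hat y'\ne y)-\E(\hat y\ne y)=\tfrac12\big(\Pro(x\in R\mid y=-1)-\Pro(x\in R\mid y=+1)\big).
\]
Thus it suffices to prove $\Pro(x\in R\mid y=+1)>\Pro(x\in R\mid y=-1)$.

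Second, I would exploit that, conditioned on $y$, the coordinates $x_1,x_2$ are independent Gaussians and that $R$ depends on $x_2$ only through $|x_2|$. Writing $s=|x_2|$ and $h_y(s):=\Pro(x_1\in[-s/3,0)\mid y)$, this gives $\Pro(x\in R\mid y)=\E[h_y(|x_2|)\mid y]$. Using $x_1\mid(y{=}1)\sim\mathcal N(\mu_1,\sigma_x^2)$ and $x_1\mid(y{=}-1)\sim\mathcal N(-\mu_3,\sigma_x^2)$ together with the realizability relations $\mu_1=\tfrac{\mu}{2}(\vr\om-1/\om)$ and $\mu_3=\tfrac{\mu}{2}(\vr\om+1/\om)$, each $h_y$ is a difference of two $\Phi$'s. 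Moreover, since $|\mathcal N(\mu,\sigma_y^2)|\eqd|\mathcal N(-\mu,\sigma_y^2)|$, the mixture over $\eps$ disappears, so $|x_2|\mid(y{=}1)$ is distributed as $|\mathcal N(\mu,\sigma_y^2)|$ (concentrated near $\mu$), while $|x_2|\mid(y{=}-1)$ is a half-normal of scale $\sigma_y$ (concentrated near $0$).

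Third, I would bound the two conditional probabilities and compare them through their ratio. For the lower bound on the positive term, I restrict to the high-probability event $\{|x_2|\ge\mu/2\}$ (whose complement has mass $\le\exp(-\Omega((\mu/\sigma_y)^2))$ by Gaussian concentration) and use monotonicity of $h_{+1}$ to obtain $\Pro(x\in R\mid y{=}+1)\gtrsim\Phi(-\mu_1/\sigma_x)$ up to super-polynomially small corrections. For the upper bound on the negative term, I use $h_{-1}(s)\le\Phi((s/3-\mu_3)/\sigma_x)$ and integrate the half-normal, which yields $\Pro(x\in R\mid y{=}-1)\lesssim\tfrac{\sigma_y}{\sigma_x}\,\phi(\mu_3/\sigma_x)$ up to polynomial factors. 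The decisive algebraic cancellation is $\mu_3^2-\mu_1^2=(\mu_3-\mu_1)(\mu_3+\mu_1)=\tfrac{\mu}{\om}\cdot\vr\om\mu=\vr\mu^2$, so that $(\mu_3^2-\mu_1^2)/(2\sigma_x^2)=\mu^2/(2\sigma_y^2)$. Hence the ratio of the negative to the positive term is controlled by $\exp(-\mu^2/(2\sigma_y^2))$ times polynomial-in-$(\mu/\sigma)$ and $\sigma_y/\sigma_x\le\om$ prefactors. The separation hypothesis $\mu/\sigma_y\ge c\sqrt{\vr}\,\om$ (with $c$ a sufficiently large constant and $\om=\Theta(1)$) forces this exponential factor to dominate, giving ratio $<1$ and therefore $\Pro(x\in R\mid y{=}+1)>\Pro(x\in R\mid y{=}-1)$, which closes the argument.

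The main obstacle I anticipate is this third step: when $\vr$ is near its upper end the positive term $\Phi(-\mu_1/\sigma_x)$ itself becomes small, so the comparison cannot be made term-by-term in magnitude and must instead be carried out at the level of the ratio, where the exact identity $\mu_3^2-\mu_1^2=\vr\mu^2$ is precisely what makes the exponential rates line up with the separation condition. A cleaner but less robust alternative, valid when $\om$ is not too large, is to show pointwise $h_{+1}(s)\ge h_{-1}(s)$ for $s\le 3(\mu_3-\mu_1)=3\mu/\om$ (the Gaussian mass in an interval is maximized when the mean sits at the interval's midpoint $-s/6$, and $\mu_1$ is closer to it than $-\mu_3$ is) and to combine this with the stochastic dominance $|x_2|\mid(y{=}1)\succeq|x_2|\mid(y{=}-1)$ and monotonicity of $h_{-1}$; care is needed because this pointwise domination reverses for $s>3\mu/\om$, a regime that the concentration of $|x_2|$ near $\mu$ under $y{=}1$ and the separation condition render negligible.
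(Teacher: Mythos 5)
Your reduction in the first two steps is correct, and it is a genuinely different (and cleaner) route than the paper's: the paper computes the linear rule's risk exactly as a sum of Gaussian CDFs, upper-bounds the piece-wise rule's risk cluster by cluster, and then shows the resulting bound $E(\rr_0,\vr,\om)$, $\rr_0:=\mu/\sigma_y$, is increasing in $\rr_0$ with limit $0$, hence negative. The gap is in your third step: the bound $\Pro(x\in R\mid y=-1)\lesssim \tfrac{\sigma_y}{\sigma_x}\,\phi(\mu_3/\sigma_x)$ is false. The wedge $R$ widens linearly in $|x_2|$, and under $y=-1$ the half-normal $|x_2|$ has scale $\sigma_y\geq\sigma_x$, so the integral $\E\bigl[\Phi\bigl((|x_2|/3-\mu_3)/\sigma_x\bigr)\bigr]$ is \emph{not} dominated at $s=0$: its Laplace point is interior, $s^*=3\mu_3\sigma_y^2/(9\sigma_x^2+\sigma_y^2)$, and exactly, for $S\sim\Nc(0,\sigma_y^2)$ one has $\E\,\Phi\bigl((S/3-\mu_3)/\sigma_x\bigr)=\Phi\bigl(-\mu_3/\sqrt{\sigma_x^2+\sigma_y^2/9}\bigr)$, so that $\Pro(x\in R\mid y=-1)=\Theta\bigl(\Phi\bigl(-\mu_3/\sqrt{\sigma_x^2+\sigma_y^2/9}\bigr)\bigr)$ in the tail regime (the subtraction of $\Phi(-\mu_3/\sigma_x)$ coming from the constraint $x_1<0$ is exponentially smaller). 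The correct exponent is therefore $\mu_3^2/\bigl(2(\sigma_x^2+\sigma_y^2/9)\bigr)$, strictly smaller than the $\mu_3^2/(2\sigma_x^2)$ on which your cancellation $\mu_3^2-\mu_1^2=\vr\mu^2$ relies.

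This is not a repairable constant. With the corrected exponent the ratio of the negative to the positive term is governed by $\exp\bigl(\tfrac{\mu_1^2}{2\sigma_x^2}-\tfrac{\mu_3^2}{2(\sigma_x^2+\sigma_y^2/9)}\bigr)$; at $\vr=1$ this is $<1$ only when $(\om-1/\om)^2<\tfrac{9}{10}(\om+1/\om)^2$, i.e.\ $\om^2<19+\sqrt{360}$, $\om\lesssim 6.2$. Beyond that (the paper's full version allows $\om\leq 12$) the new errors the wedge creates on the $y=-1$ cluster outweigh, at the level of exponential rates, the errors it removes on $y=+1$, and increasing $\mu/\sigma_y$ only widens the discrepancy, so no choice of the constant $c$ rescues the ratio comparison there. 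Your fallback via pointwise domination $h_{+1}(s)\geq h_{-1}(s)$ on $s\leq 3\mu/\om$ hits the same wall, as you yourself note: the concentration point $\mu$ of $|x_2|$ given $y=1$ lies below $3\mu/\om$ only for $\om<3$. So after repairing step three your argument proves the claim only on a strictly smaller range of $(\om,\vr)$ than stated --- small to moderate $\om$, or $\vr$ bounded away from $1$ (near $\vr=1/\om^2$ it is easy, since $\mu_1\approx 0$ makes the positive term of constant order). For what it is worth, your corrected accounting exposes the same soft spot in the paper's own proof: its term $\Phi(-3\mu_3/\sigma_x)$ for the piece-wise rule's $y=-1$ error ignores exactly this convolution, whereas $\Pro(3x_1+|x_2|\geq 0\mid y=-1)\geq\Phi\bigl(-3\mu_3/\sqrt{9\sigma_x^2+\sigma_y^2}\bigr)$ is much larger; your disagreement-region identity, being exact, is the more reliable lens on where the result actually holds.
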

The proof is included in \cref{app:proofs}. Note that we consider isotropic distributions ($\vr=1$) for \cref{th:gauss-gd-res,th:gauss-signgd-res}, whereas the above result on the test error applies to $\vr\in[\tfrac{1}{\om^2}, 1]$. This shows that training with Adam can provably lead to better test accuracy both in-distribution and across certain distribution shifts.

In the next section, we consider a simplified setting to investigate the effect of setting $\beta_1,\beta_2\approx 1$ for Adam, which is closer to the setting used in practice, where $\beta_1=0.9,\beta_2=0.999$.
\vsn
\subsection{Toy Data Setting}
\vsn 
We consider a simple yet informative setting where $d=2$ and $\sigma_x=\sigma_y=0$, which we refer to as the toy data setting. Specifically, the samples are generated as follows:
\begin{align}\label{eq:toy-data}
    y\sim \Unif{\{\pm 1\}}, \quad \eps \sim \Unif{\{\pm 1\}}, \quad
    x_1= \tfrac{\mu}{2}\left(y\om-\tfrac{1}{\om}\right), \quad x_2=\eps\tfrac{y+1}{2}\mu.
\end{align}
This setting allows us to characterize the full trajectory of each neuron for the three algorithms, namely GD, signGD, and Adam ($\beta_1=\beta_2\approx 1$). We now state our main result.

\begin{theorem}(Informal) \label{th:toy-res}Consider the toy data in \cref{eq:toy-data}, neurons initialized at a small scale, 
and $c_1<\om<c_2$, where $c_1,c_2$ are constants. Let $\w_{k,\infty}:=\lim_{t\rightarrow\infty}\tfrac{\w_{k,t}}{t}$ and $\bar{\w}_{k,\infty}:=\tfrac{\w_{k,\infty}}{\norm{\w_{k,\infty}}}$, for $k\in[m]$ and $p:=\tfrac{\tan^{-1}\tfrac{\om^2-1}{2\om}}{\pi}$. Then, for $m\!\rightarrow\!\infty$, the solutions learned by GD, signGD, and Adam are shown in \cref{tab:th}, where $s$ is a constant $\in[0.72,1]$, the probabilities are over the neurons, and the sign of the first element of $\w_{k,\infty}$ is the same as $\sign{a_k}$.
\end{theorem}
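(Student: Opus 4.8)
The plan is to reduce the population gradient to a finite, piecewise‑constant object and then analyze each of the three update rules by tracking the angle $\theta_{k,t}$ of the neuron $\w_{k,t}$. Setting $\sigma_x=\sigma_y=0$ in \cref{eq:toy-data} collapses the distribution onto three atoms of equal norm: two label‑$+1$ points along $\bar{\mub}_+,\bar{\mub}_-$ and one label‑$-1$ point along $\bar{\mub}_0$. Hence \cref{prop:pop-grad} degenerates (up to a common positive scalar) to
\begin{align*}
\nabla_\w\Lhat(\W)=-a\Big(\tfrac14\ind{\w^\top\bar{\mub}_+\ge0}\bar{\mub}_++\tfrac14\ind{\w^\top\bar{\mub}_-\ge0}\bar{\mub}_--\tfrac12\ind{\w^\top\bar{\mub}_0\ge0}\bar{\mub}_0\Big),
\end{align*}
which is constant on each angular sector cut out by the three lines $\w^\top\bar{\mub}_\bullet=0$. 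Since $\bar{\mub}_\pm$ lie at angles $\pm2\tan^{-1}(1/\om)$, the boundary separating the region where both label‑$+1$ atoms are active from the regions where only one is active sits at $\theta=\pm p\pi$ with $p=\tfrac1\pi\tan^{-1}\tfrac{\om^2-1}{2\om}=\tfrac12-\tfrac2\pi\tan^{-1}(1/\om)$; the central both‑active sector then has angular width exactly $2p\pi$, which is the geometric origin of the probability $p$ in \cref{tab:th}. I would also record the elementary fact that the first coordinate of $-\nabla_\w\Lhat$ always carries the sign of $a$ (for $\w\neq0$ at least one atom is active, and every active atom contributes a nonnegative first component after multiplication by $a$), which immediately yields the claim that the sign of the first entry of $\w_{k,\infty}$ equals $\sign{a_k}$ for all three algorithms.

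For gradient flow and signGD I would adapt the arguments behind \cref{th:gauss-gd-res,th:gauss-signgd-res}. Within any fixed sector the gradient is a constant vector, so gradient flow moves $\w_{k,t}$ radially and its direction converges to the normalized active‑atom combination of that sector; I would identify the attracting direction of each sector, trace the basins across sectors, and read off the basin widths to obtain the GD column. For signGD the update is the coordinatewise sign of $-\nabla_\w\Lhat$, which in each sector is a fixed corner; I would show that for $a_k>0$ the second coordinate's sign pins the neuron to $\tfrac1{\sqrt2}[1,\pm1]^\top$ (according to $\sign{\sin\theta_{k,0}}$), except on the central both‑active sector where the second coordinate vanishes and the neuron is driven to $[1,0]^\top$, while $a_k<0$ is handled by the alternation argument of \cref{th:gauss-signgd-res}; counting the sector widths (again involving $p$) gives the signGD column.

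The genuinely hard step, which I would treat last, is Adam with $\beta_1=\beta_2\approx1$. Here the bias‑corrected moments are the $\beta$‑weighted averages $\Mh_t,\Vh_t$ of the per‑iterate gradients and their squares (tending to uniform time‑averages as $\beta\to1$), so the update is the entrywise ratio $\Mh_t/\sqrt{\Vh_t}$ rather than a pure sign. The difficulty is that this ratio reduces to the sign only when the gradient has been constant, whereas near a sector boundary — or along the alternating trajectory that signGD follows when $a_k<0$ — the first moment averages gradients of differing direction while the second moment averages their squares, and these need not cancel in the same way. I would therefore track the coupled recursions for $\Mh_t,\Vh_t$ as the neuron transitions between adjacent sectors, pass to the $\beta\to1$ averaged (mean‑field) dynamics, and solve the resulting fixed‑point relation for the limiting direction. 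I expect this to be the main obstacle: it is precisely the momentum‑weighted second moment that separates Adam from signGD, and pinning down the constant $s$ — together with the bound $s\in[0.72,1]$ over the admissible range $c_1<\om<c_2$ — requires solving (and numerically bounding) a transcendental balance equation rather than a clean sign computation. Finally, passing to $m\to\infty$ lets the empirical neuron fractions converge to the sector‑based probabilities, so that the per‑neuron limits assemble into the three columns of \cref{tab:th}.
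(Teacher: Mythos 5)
Your reduction to a piecewise-constant gradient over angular sectors, the identification of the central both-active sector of width $2p\pi$ (your identity $p=\tfrac12-\tfrac2\pi\tan^{-1}(1/\om)$ is correct), the sign argument for the first coordinate, and your plans for the GD and signGD columns all match the paper's actual proof, which tabulates the per-sector population gradients at initialization (\cref{tab:init-grad-toy}) and then tracks the first two iterations case by case, reading the probabilities off the sector widths. One small correction there: you expect the $a_k<0$ neurons to require the alternation argument of \cref{th:gauss-signgd-res}, but in the toy setting once such a neuron enters the half-plane where only the negative atom is active, the gradient's second coordinate is exactly zero, so after a short transient the signGD update is exactly $-[1,0]^\top$; no persistent alternation occurs.

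The genuine gap is in the Adam ($\beta_1=\beta_2\approx1$) column, which you correctly flag as the hard step but for which you propose the wrong machinery. There is no mean-field limit, no fixed-point relation, and no transcendental balance equation to solve. The key structural fact you are missing is that, under the small-initialization condition, each neuron's population gradient changes \emph{at most once} along its entire trajectory: after the first update the iterate lands in a sector it never leaves, so the gradient sequence consists of one value $\gb_{k,0}$ followed by a constant $\gb_{k,1}$ forever. With $\beta_1=\beta_2\to1$ the bias-corrected moments are then exact uniform averages of just these two values, and the update coordinate at step $\tau$ has the closed form
\begin{align*}
\frac{g_0+(\tau-1)g_1}{\sqrt{\tau}\,\sqrt{g_0^2+(\tau-1)g_1^2}}=f_r(\tau),\qquad r:=g_0/g_1,
\end{align*}
which is exactly the function analyzed in \cref{lem:sqrt-frac-sum}. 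The paper's proof (\cref{tab:toy-adam} and \cref{sec:aux-res}) bounds $f_r(\tau)$ between the explicit constant $c(r)$ and $1$ for the five relevant ratios $r_1,\dots,r_5$, uses \cref{lem:sqrt-sum} to show the $O(\tau^{-1/2})$-magnitude transient directions contribute only $O(\sqrt t)$ to $\w_{k,t}$ and hence vanish in $\lim_{t\to\infty}\w_{k,t}/t$, and obtains $s=m_4$ with $c(r_4)\approx0.724$ giving the stated range $s\in[0.72,1]$ --- a sequence of elementary monotonicity bounds, not a fixed-point computation. Your proposed analysis of "coupled recursions as the neuron transitions between adjacent sectors" addresses a phenomenon that the small-initialization condition is designed to preclude, so as written your plan for Adam would not produce the table: it leaves the decisive step (why the moments admit a two-term closed form, and where $0.72$ comes from) unproven.
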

\begin{table}[h!]
    \centering
    %
    \resizebox{\linewidth}{!}{
    \begin{tabular}{cccc}
         & GD & Adam ($\beta_1 = \beta_2 = 0$) or signGD & Adam ($\beta_1 = \beta_2 \approx 1$) \\
        \midrule
        \(\bar{\mathbf{w}}_{k,\infty}=\) & 
        $\begin{cases}
[1, 0]^\top & \text{w.p. } \tfrac{1}{4}+\tfrac{p}{2}\\
[-1, 0]^\top & \text{w.p. } \tfrac{1}{2} \\
\tfrac{1}{\om^2+1}\left[\om^2-1, 2\om\right]^\top & \text{w.p. } \tfrac{1}{8}-\tfrac{p}{4}\\
\tfrac{1}{\om^2+1}\left[\om^2-1, -2\om\right]^\top & \text{w.p. } \tfrac{1}{8}-\tfrac{p}{4}
\end{cases}$ & 
        $\begin{cases}
[1, 0]^\top & \text{w.p. } p\\
[-1, 0]^\top & \text{w.p. } \tfrac{1}{2} \\
\tfrac{1}{\sqrt{2}}\left[1, 1\right]^\top & \text{w.p. } \tfrac{1}{4}-\tfrac{p}{2}\\
\tfrac{1}{\sqrt{2}}\left[1, -1\right]^\top & \text{w.p. } \tfrac{1}{4}-\tfrac{p}{2}
\end{cases}$ 
         & 
        $\begin{cases}
\left[1, 0\right]^\top & \text{w.p. } p\\
\left[-1, 0\right]^\top & \text{w.p. } \tfrac{1}{2} \\
\tfrac{1}{\sqrt{2}}\left[1,1\right]^\top & \text{w.p. } \tfrac{1}{8}-\tfrac{p}{4}\\
\tfrac{1}{\sqrt{2}}\left[1,-1\right]^\top & \text{w.p. } \tfrac{1}{8}-\tfrac{p}{4}\\
\tfrac{1}{\sqrt{s^2+1}}\left[s, 1\right]^\top & \text{w.p. } \tfrac{1}{8}-\tfrac{p}{4}\\
\tfrac{1}{\sqrt{s^2+1}}\left[s, -1\right]^\top & \text{w.p. } \tfrac{1}{8}-\tfrac{p}{4}
\end{cases}$
 \\
    \end{tabular}}
    \caption{Solutions learned by different algorithms on the toy dataset (see \cref{th:toy-res}).}
    \label{tab:th}
\end{table}
\begin{wrapfigure}[16]{r}{0.4\linewidth}
\vspace{-4mm}
    \centering
    \includegraphics[width=0.7\linewidth]{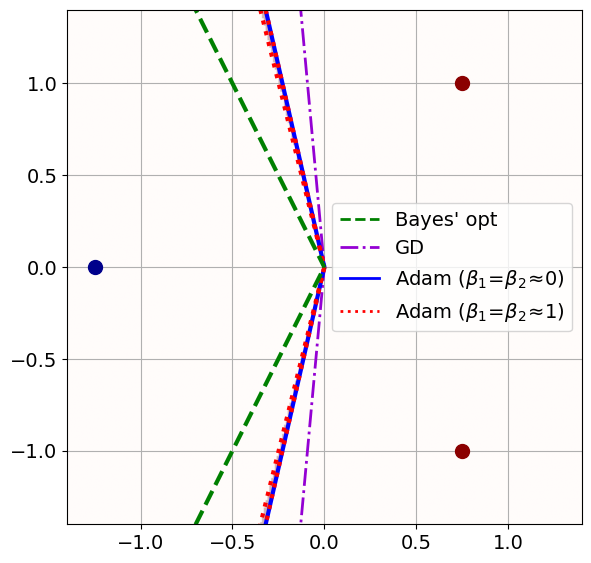}
    \caption{Comparison of the Bayes' optimal predictor and the predictors learned by two-layer NNs trained with GD, Adam ($\beta_1=\beta_2\approx 0$) or signGD, and Adam with $\beta_1=\beta_2\approx 1$ on the toy dataset (Gaussian dataset with $\sigma\rightarrow 0$).}
    \label{fig:toy-result}
\end{wrapfigure}
Note that $p<0.5$, which implies (from \cref{tab:th}) that the fraction of neurons in the direction $[1,0]^\top$ is larger for GD as compared to Adam. Consequently, the decision boundary learned by Adam is more non-linear. The proof mainly relies on analyzing the updates of each algorithm, so we defer it to \cref{app:proofs}. \cref{fig:toy-result} shows the decision boundaries learned by the three algorithms, as mentioned in \cref{tab:th}. The predictor learned by Adam is more non-linear and closer to the Bayes' optimal predictor.

We note that the difference between the predictors learned by GD and Adam is more significant in the Gaussian setting, as compared to the toy dataset. The main reason is that in the toy dataset, there is a larger region where the gradients are in the $[1,0]^\top$ direction, which makes the decision boundary for signGD more linear, as well as a larger region where the neurons are only active for one of $\mub_+$ or $\mub_-$, which makes the decision boundary for GD more non-linear. 

\vsn
\section{Experimental Results}
\vsn
\label{sec:expts}
In this section, we present experimental results across synthetic and real-world datasets showing that GD exhibits simplicity bias while Adam promotes rich feature learning. 
\vsn
\subsection{Gaussian Dataset}
\vsn
 We consider the Gaussian data in \cref{eq:data} in this section, focusing on the \emph{finite sample setting}, which is closer to practice as we train with the binary cross-entropy loss and consider Adam with momentum parameters $\beta_1=\beta_2=0.9999$, although the results generalize to other values as well. We consider a small initialization scale and fix the outer layer weights. Specifically, $\w_k\sim\Nc(0,\tfrac{\alpha}{\sqrt{d}})$, and $a_k=\pm\tfrac{1}{\sqrt{m}}$ for $k\in[m]$, where $\alpha$ is a small constant. 
 
\cref{fig:main} compares the decision boundaries learned by Adam and GD in the finite sample setting, with the Bayes' optimal predictor (for the population version of this setting). We set $n=5000$, $m=1000$, $\mu=0.3, \om=2, \sigma_x=0.2,\sigma_y=0.15$, $\alpha=0.001$, and use learning rates $0.1$ and $10^{-4}$ for GD and Adam, respectively. These results are similar to the population setting and show that the difference in the implicit bias of Adam and GD is quite robust to the choice of the training setting. For comparable train loss, the test accuracy of Adam is $0.32\%$ more than that of GD in this case. We also find that reducing $\mu$ increases the accuracy gap: repeating the same experiment with $\mu=0.25$ leads to a gap of $0.595\%$. These results also generalize to settings where $d>2$: with $m=500$, $d=20$ and $\mu=0.25$, the gap is $0.203\%$. See \cref{app:expt-settings} for additional results.
\vsn
\subsection{MNIST Dataset with Spurious Feature}
\vsn\label{sec:mnist-sp}
In this section, we conduct an experiment to provide additional support for our theoretical results. We construct a binary classification task using $14\times 14$ MNIST images, where we inject a $2\times 2$ colored patch at the top left corner of the image (to model the simple feature). One class comprises digit ‘0’ images with a red patch, while the other has digits ‘1’ and ‘2’ with a green patch. We train a two-layer NN using SGD or Adam. For test samples, we flip the patch color to check the model's reliance on the simple feature. We also train a linear model on this task, and measure agreement between its predictions on the test set with those of the NN trained with Adam/SGD. This serves as the measure of the complexity of the learned decision boundary. The results are shown in \cref{tab:mnist-sp}. We observe that the NN trained with Adam relies more on the digit features and generalizes better. It also has a complex/nonlinear decision boundary, as the agreement with the linear predictor is lower as compared to SGD. In addition, we plot the distribution of the train set margin for the two NNs as shown in \cref{fig:mnist-sp}. We observe that the margin for Adam is generally larger compared to SGD. These results support our theoretical findings that SGD learns simpler features compared to Adam.
\begin{figure}[h!]
  \centering
  \begin{minipage}[t]{0.39\textwidth}
  \vspace{-20mm}
    \centering
    \resizebox{0.7\linewidth}{!}{
    \begin{tabular}{lcc}
      \toprule
        & \multirow{2}{*}{\shortstack{Test\\Accuracy}}
        & \multirow{2}{*}{\shortstack{Agreement w/\\Linear Model}} \\
        & & \\[-0.2em]  
      \midrule
      SGD  & 66.6 & 95.5 \\
      Adam & 86   & 84   \\
      \bottomrule
    \end{tabular}}
    \captionof{table}{Comparison of test accuracy and agreement with a
      linear model for a two‑layer NN trained on MNIST with spurious
      correlation.}
    \label{tab:mnist-sp}
  \end{minipage}
  \hfill
  \begin{minipage}[t]{0.59\textwidth}
    \centering
    \includegraphics[width=0.7\linewidth]{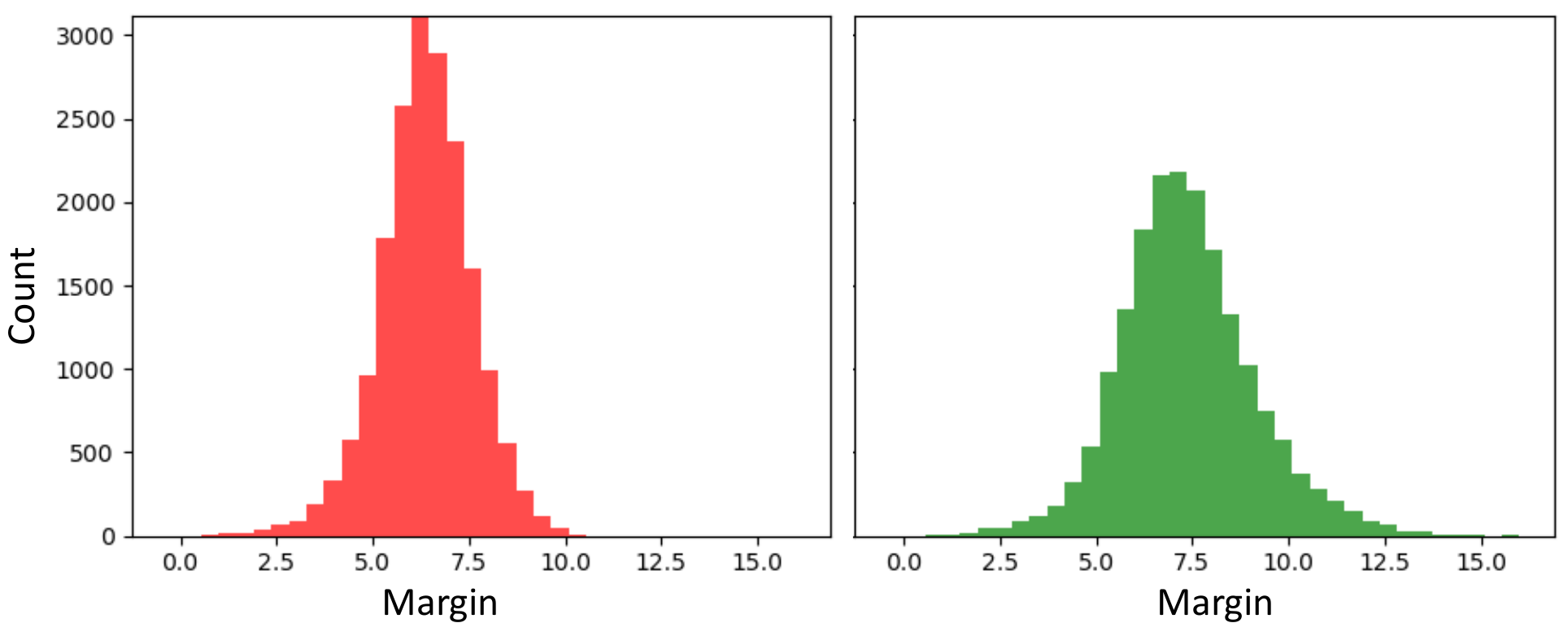}
    \caption{Distribution of margins of training‑set samples from the
      MNIST dataset with spurious correlation for a two‑layer NN
      trained using SGD (left) and Adam (right).}
    \label{fig:mnist-sp}
  \end{minipage}
\end{figure}

\vsn
\subsection{Dominoes Dataset}
\vsn\begin{wrapfigure}[9]{r}{0.28\textwidth}
\centering
\vspace{-6mm}
\includegraphics[width=0.6\linewidth]{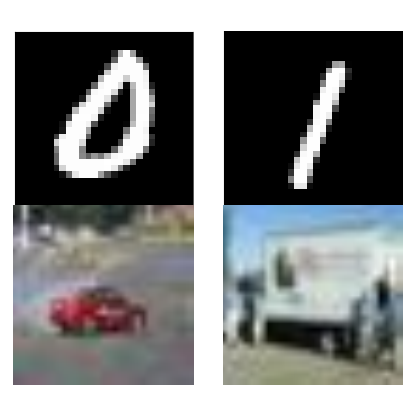}
        \caption{Example images from class $-1$ and $1$ from the MNIST-CIFAR dataset.}
        \label{fig:mc}
\end{wrapfigure}
The Dominoes dataset \citep{shah2020pitfalls, pagliardini2022agreedisagreediversitydisagreement}, specifically MNIST-CIFAR, is a binary image classification task. It contains images where the top half shows an MNIST digit \citep{726791} from classes $\{0,1\}$, while the bottom half shows  CIFAR-10 \citep{krizhevsky2009learning} images from classes $\{\text{automobile, truck}\}$. The MNIST 
portion corresponds to simple or spurious features, which are $95\%$ correlated with the label, while the CIFAR portion is the complex or core feature and is fully correlated. See \cref{fig:mc} for example images from the dataset and \cref{app:expt-settings} for further details.

We define $4$ groups based on the labels predicted by the core or the spurious feature. The minority groups correspond to images where the core and spurious features disagree, and the majority groups correspond to images where they agree. The groups in the test set are balanced. The model can rely on the spurious feature to attain good train performance but can only generalize well on the balanced test set if it learns to use the core feature.

In \cref{tab:mnist-cifar}, we report the average \textit{original} worst-group accuracy on the balanced test set, for a ResNet-18 model (also see \cref{app:expt-settings}, where we include results with ResNet-34 model and $99\%$ spurious correlation). In addition, we report the average \textit{core-only} worst-group accuracy on a test set where the spurious top half of the image is removed and replaced by a black image. This measures how much of the core features have been learned by the model. Lastly, we also report the average \textit{decoded} worst-group accuracy, obtained by retraining the last layer of the model using logistic regression to fit a group balanced validation dataset, and then evaluating on the original test set. This gives a better evaluation for how much of the core features have been learned in the latent representation. We find that training with Adam leads to a significant gain across all three metrics as compared to training with SGD. 

We also remark that these results (as well as those in \cref{fig:main_results}) challenge the widely held consensus that SGD generally performs/generalizes better than Adam on image data \citep{wilson2017}. We use ResNet-based models across the three image datasets and Adam leads to better worst-group accuracy across these cases. These results show that while SGD could be better for in-distribution generalization, Adam can be better for generalization under distribution shifts because it promotes richer feature learning. 
\begin{table}[ht]
  \centering
  \begin{minipage}[t]{0.46\textwidth}
    \centering
    \resizebox{\linewidth}{!}{
    \begin{tabular}{cccc}
      \toprule
      Optimizer & Original Acc. & Core-Only Acc. & Decoded Acc. \\
      \midrule
      SGD & $0.81_{\pm 0.38}$ & $1.66_{\pm 1.79}$ & $71.04_{\pm 0.63}$ \\
      Adam & $\textbf{14.17}_{\pm 3.15}$ & $\textbf{20.63}_{\pm 5.75}$ & $\textbf{84.66}_{\pm 0.18}$ \\
      \bottomrule
    \end{tabular}}
    \captionof{table}{Training with Adam leads to better worst-group accuracy on the original and core-only test sets, and after decoding, on the Dominoes (MNIST-CIFAR) dataset ($95\%$ spurious correlation), as compared to SGD, for a ResNet-18 model.}
    \label{tab:mnist-cifar}
  \end{minipage}
  \hfill  
  \begin{minipage}[t]{0.52\textwidth}
    \centering
    \resizebox{\linewidth}{!}{
    \begin{tabular}{cccc}
      \toprule
      Optimizer & Test Acc. & Decoded Core Corr. & Decoded Spurious Corr. \\
      \midrule
      SGD & $89.58_{\pm 1.92}$ & $0.51_{\pm 0.08}$ & $0.78_{\pm 0.08}$ \\
      Adam & $\textbf{97.87}_{\pm 0.69}$ & $\textbf{0.87}_{\pm 0.03}$ & $\textbf{0.36}_{\pm 0.06}$ \\
      \bottomrule
    \end{tabular}}
    \captionof{table}{Training with Adam leads to better test accuracy, decoded core, and decoded spurious correlations on the Boolean features dataset, as compared to SGD, for a three-layer NN.}
    \label{tab:spstaircase-results}
  \end{minipage}
\end{table}
\vsn
\vsn
\subsection{Subgroup Robustness Datasets}
\vsn

We consider four benchmark subgroup robustness datasets, namely Waterbirds \citep{gdro}, CelebA \citep{celeba}, MultiNLI \citep{multinli} and CivilComments \citep{civcoms, WILDS}. Each of these contains a core feature that is fully correlated with the label, and a spurious feature that is simpler but has a lower correlation. Waterbirds contains images of different birds on various backgrounds. The task is to classify whether the bird is a \textit{landbird} or a \textit{waterbird}. The background of the image is either \textit{land} or \textit{water}, and is spuriously correlated with the target label. 
CelebA consists of images of celebrity faces. The task is to classify whether the hair color is \textit{blonde} or \textit{not blonde}, and the gender of the celebrity \textit{male} or \textit{female} is the spurious feature. MultiNLI contains sentence pairs and the task is to predict how the second sentence relates to the first, out of three classes: \textit{entailment}, \textit{neutral}, and \textit{contradiction}. The spurious features are \textit{negation words} which are often but not always associated with contradiction. The CivilComments dataset consists of online comments and the task is to predict whether the comment is \emph{toxic} or \emph{non-toxic}. Toxicity is spuriously correlated with the mention of various \textit{demographic attributes} in the comments, based on gender, race, or religion.

Prior work has shown that simplicity bias can be detrimental to worst-group test performance in the presence of spurious features \citep{vasudeva2024mitigating}. The standard practice is to use SGD for image datasets and Adam(W) for language datasets. However, since Adam promotes richer feature learning, it should be more robust to spurious correlations across all datasets. Hence, we compare the performance of SGD and Adam on these datasets, when fine-tuning a pretrained BERT \tsc{bert-base-uncased} model \citep{devlin-bert} on the language datasets, and an ImageNet-pretrained ResNet-50 \citep{resnet50} model on the image datasets. To ensure fair comparison, we sweep optimizer-sepcific hyperparameters, namely learning rate, momentum, and weight decay (see \cref{app:expt-settings} for details). \cref{fig:main_results} and \cref{tab:main_results} in \cref{app:expt-settings} show the worst-group and average (group-balanced) accuracies on these datasets when training with SGD or Adam, based on the best worst-group validation accuracy. We see that training with Adam leads to significantly better worst-group accuracy as well as (slightly) better average accuracy compared to training with SGD. 
\vsn
\subsection{Boolean Features Dataset}
\vsn
In this section, we consider the synthetic Boolean features dataset proposed by \citet{qiu2024complexity} to study feature learning under spurious correlation. The dataset is designed to model the presence of two types of features: a set of complex core features with dimension $d_c$ that are fully correlated with the label, and a set of simple spurious features with dimension $d_s$ that have correlation strength $\lambda \in [0,1]$ with the label. The rest of the $d_u=d-d_c-d_s$ features are uncorrelated with the label. For our experiments, the core and spurious features are modeled as staircase functions with degrees $d_c$ and $d_s<d_c$, respectively. Degree $d$ threshold staircase functions for a Boolean input $\x\in \{-1, +1\}^d$ are defined as:
\begin{equation*}
    f_{\text{staircase}}(\x) :=
    \begin{cases}
        1 \quad &\text{if } x_1 + x_1 x_2 + \dots + x_1 x_2\dots x_d \geq 0,\\
        -1 &\text{otherwise}.
    \end{cases}
\end{equation*}
We train a three-layer NN using SGD and Adam. We consider $d=50, d_c = 8, d_s = 1, \lambda=0.9$. See \cref{app:expt-settings} for a formal description of the dataset and further details of the experimental setting. To measure feature learning, we used the decoded core and spurious correlations \citep{qiu2024complexity, kirichenko2023layerretrainingsufficientrobustness}, which measure the extent to which the model has effectively learned the core and spurious features. The decoded core correlation is measured by retraining the last layer of the model using logistic regression to fit the core function and evaluating its correlations with $f_c$ on the uniform distribution, $\E_{\x\sim \text{Unif}(\{-1,1\}^d)} [f_c (\x)\sign{f(\x)}]$, where $f$ is the model. The decoded spurious correlation is measured similarly by retraining on the spurious function and measuring the correlation with $f_s$ on the test set.

We report the results in \cref{tab:spstaircase-results} evaluated at the lowest comparable training loss achieved by both optimizers. See \cref{app:expt-settings} for the training curves and additional results. Adam records significantly higher average test accuracy and decoded core correlation, as well as lower decoded spurious correlation. This suggests that Adam's superior performance on the test set can be attributed to richer feature learning as it encourages the utilization of the core features and forgetting or down-weighting the spurious features. In contrast, SGD relies heavily on the simple spurious feature.

\vsn
\vspace{-0.5mm}
\section{Related Work}
\vspace{-0.5mm}
\vsn
\label{sec:rel-work-main}
In this section, we discuss related work on the implicit bias of GD, simplicity bias of NNs trained with GD, implicit bias of Adam and adaptive algorithms, and comparison of Adam and (S)GD in various settings.    
\vsn
\paragraph{Implicit Bias of GD.} Since the pioneering studies that identified the implicit bias of linear classifiers on separable datasets \citep{soudry2018implicit}, extensive research has been conducted on the implicit bias of gradient-based methods for linear models, NNs, and even self-attention models. \citet{wang-momentum} shows that GD with momentum exhibits the same implicit bias for linear models trained on separable data as vanilla GD. \citet{nacson2019convergence, telgarskyngd2021, tel2021dualacc} demonstrate fast convergence (in direction) of GD-based approaches with adaptive step-sizes to the $\ell_2$ max-margin predictor. It has also been shown that multilayer perceptrons (MLPs) trained with exponentially tailed loss functions on classification tasks, GD or gradient flow converge in direction to the KKT points of the max-margin problem in both finite \citep{tel2020directional, lyu2020Gradient} and infinite-width \citep{chizat2020implicit} networks. Additionally, \citet{phuongortho-separable2021, frei2022implicit, qqgimplicit2023} analyze the implicit bias of ReLU and Leaky-ReLU networks trained with GD on orthogonal data, while \citet{10.5555/3540261.3541619} investigate convergence to stable minima. Other studies focus on the implicit bias to minimize rank in regression tasks using squared loss \citep{vardi2021implicit, aroraimplicitdeep2019, li2021resolving-deepmatrix-ib}. The recent survey \citet{vardi2022implicit} includes a comprehensive review of related work. More recently, \citet{tarzanagh2023maxmargin,tarzanagh2023transformers,vasudeva2024implicitbiasfastconvergence} studied single-head prompt and self-attention models with fixed linear decoder and characterized the implicit bias of attention weights trained with GD to converge to the solution of a hard-margin SVM problem. 

\paragraph{Simplicity Bias of NNs Trained with GD.} \citet{kalimeris2019sgd} empirically demonstrate that NNs trained with SGD first learn to make predictions that are highly correlated with those of the best possible linear predictor for the task, and only later start to use more complex features to achieve further performance improvement. 
\citet{shah2020pitfalls} created synthetic datasets and show that in the presence of `simple' and `complex' features (linearly separable vs non-linearly separable), (two-layer) NNs trained with SGD rely heavily on `simple' features even when they have equal or even slightly worse predictive power than the `complex' features. They also show that using SGD leads to learning small-margin and feature-impoverished classifiers, instead of large-margin and feature-dense classifiers, even on convergence, which contrasts with  \citet{kalimeris2019sgd}. 

\paragraph{Implicit Bias of Adam and Other Adaptive Algorithms.} \citet{pmlr-v139-wang21q} show that homogeneous NNs trained with RMSprop or signGD converge to a KKT point of the $\ell_2$ max-margin problem, similar to GD, while AdaGrad has a different implicit bias.   \citet{zhang2024implicitbiasadamseparable} show that linear models trained on separable data with Adam converge to the $\ell_\infty$ max-margin solution. Recently \citet{fan2025implicitbiasnormalizedsteepest} characterized the implicit bias of steepest descent algorithms for multiclass linearly separable data. \citet{xie2024implicit} analyze loss minimization with AdamW and show that under some conditions, it converges to a KKT point of the $\ell_\infty$-norm constrained loss minimization. 
\vsn
\paragraph{Adam vs (S)GD.} \citet{zou2020towards} show that SGD converges to flatter minima while Adam converges to sharper minima. \citet{ma2023sharpness} show that flatter minima can correlate with better in-distribution generalization but may not be predictive of or even be negatively correlated with generalization under distribution shifts. \citet{zou2023understanding} study an image-inspired dataset and show that CNNs trained with GD can generalize better than Adam. 
\citet{ma2023understanding} show that adding noise to lower or higher frequency components of the data can lead to lower or higher robustness of Adam compared to GD.  \citet{kunstner2024heavytailed} show that the reason why Adam outperforms SGD on language data is because the performance of SGD deteriorates under heavy-tailed class imbalance, \textit{i.e.}, when minority classes constitute a significant part of the data, whereas Adam is less sensitive and performs better. In contrast to their focus on multiple classes and training performance, our work focuses on generalization in a binary classification setting. Several works \citep{zhang2020why,zhang2024why,Pan2023TowardUW} also study why Adam outperforms SGD on attention models or Transformers. 
\vsn
\vspace{-0.5mm}
\section{Conclusion}
\vspace{-0.5mm}
\vsn
In this work, we investigate the implicit bias of Adam and contrast it with (S)GD. NNs trained with SGD exhibit simplicity bias, whereas we find that training with Adam leads to richer feature learning, making the model more robust to spurious features and certain distribution shifts. We note that richer feature learning may not always be desirable; for instance, simplicity bias can be beneficial for better in-distribution generalization. However, it's important to characterize and contrast the implicit bias of Adam vs (S)GD in this context. To get a principled understanding, we identify a synthetic data setting with Gaussian clusters and theoretically show that two-layer ReLU NNs trained with GD or Adam on this task learn different solutions. GD exhibits simplicity bias and learns a linear predictor with a suboptimal margin, while Adam leads to richer feature learning and learns a nonlinear predictor that is closer to the Bayes' optimal predictor. Through theoretical and empirical results, our work adds to the conceptual understanding of how Adam works and poses important directions for future work, such as studying the implicit bias of Adam for other architectures, and the effect of weight decay on simple vs rich feature learning to study the implicit bias of AdamW.

\section*{Acknowledgments}
BV thanks Puneesh Deora for extensive discussions and feedback on the manuscript, and Surbhi Goel for helpful discussions. The authors acknowledge use of USC CARC's Discovery cluster, and thank the anonymous reviewers for their helpful comments. VS was supported by NSF CAREER Award CCF-$2239265$ and an Amazon Research Award. MS was supported by the Packard Fellowship in Science and Engineering, a Sloan Research Fellowship in Mathematics, an NSF CAREER Award $\#1846369$, NSF-CIF awards $\#1813877$ and $\#2008443$, NSF SLES award $\#2417075$, and NIH DP$2$LM$014564$-$01$. This work was done in part while BV, VS and MS were visiting the Simons Institute for the Theory of Computing at UC Berkeley.

\bibliographystyle{unsrtnat}
\bibliography{refs}

\newpage
\appendix
\section*{Appendix}
\startcontents[appendix]
\addcontentsline{toc}{chapter}{Appendix}
\renewcommand{\thesection}{\Alph{section}} 

\printcontents[appendix]{}{1}{\setcounter{tocdepth}{3}}

\setcounter{section}{0}
\section{Omitted Proofs}
\label{app:proofs}
The proof for \cref{prop:bayes-opt} is as follows.
\begin{proof}
The optimal predictor can be found by solving for the following:
\begin{align*}
\tfrac{1}{2}\exp\left(-\tfrac{(x_1 - \mu_1)^2}{2\sigma_x^2} - \tfrac{(x_2 - \mu_2)^2}{2\sigma_y^2}\right) + \tfrac{1}{2}\exp\left(-\tfrac{(x_1 - \mu_1)^2}{2\sigma_x^2} - \tfrac{(x_2 + \mu_2)^2}{2\sigma_y^2}\right) = \exp\left(-\tfrac{(x_1 + \mu_3)^2}{2\sigma_x^2} - \tfrac{x_2^2}{2\sigma_y^2}\right).
\end{align*}
Simplification yields
\begin{align*}
0.5\left(1 + \exp\left(-\tfrac{2\mu_2 x_2}{\sigma_y^2}\right)\right)
=\exp\left(-\tfrac{(\mu_1 + \mu_3)x_1}{\sigma_x^2}- \tfrac{\mu_2 x_2}{\sigma_y^2} + \tfrac{\mu_1^2 - \mu_3^2}{2\sigma_x^2}  + \tfrac{\mu_2^2}{2\sigma_y^2}\right).
\end{align*}
Taking $\log$ on both sides and rearranging, we get:
\begin{align*}
\tfrac{(\mu_1 + \mu_3)x_1}{\sigma_x^2}+ \tfrac{\mu_2 x_2}{\sigma_y^2} = \tfrac{\mu_1^2 - \mu_3^2}{2\sigma_x^2}  + \tfrac{\mu_2^2}{2\sigma_y^2}-\log\left(0.5\left(1 + \exp\left(-\tfrac{2\mu_2 x_2}{\sigma_y^2}\right)\right)\right).
\end{align*}
For isotropic Gaussians, it simplifies to 
\begin{align*}
(\mu_1 + \mu_3)x_1+ \mu_2 x_2 = \tfrac{\mu_1^2+\mu_2^2 - \mu_3^2}{2} -\sigma^2\log\left(0.5\left(1 + \exp\left(-\tfrac{2\mu_2 x_2}{\sigma^2}\right)\right)\right).
\end{align*}
Under realizability, we get \begin{align*}
\om x_1+ x_2 = -\tfrac{\sigma^2}{\mu}\log\left(0.5\left(1 + \exp\left(-\tfrac{2\mu x_2}{\sigma^2}\right)\right)\right).
\end{align*}
\end{proof}
\subsection{Gaussian Data}
We can prove \cref{prop:pop-grad} as follows.
\begin{proof}
   The population gradient can be simplified as follows.
\begin{align*}
 \mathbb{E}[\ind{\w^\top\xb\geq 0}y \xb]&=\E(\xb | \w^\top\xb\geq 0, y=1, \eps =1)\Pr[y=1]\Pr[\eps=1|y=1]\Pr[\w^\top\xb\geq 0| y=1, \eps = 1]\\
 &+\E(\xb | \w^\top\xb\geq 0, y=1, \eps =-1)\Pr[y=1]\Pr[\eps=-1|y=1]\Pr[\w^\top\xb\geq 0| y=1, \eps = -1]\\
 &+\E(-\xb | \w^\top\xb\geq 0, y=-1)\Pr[y=-1]\Pr[\w^\top\xb\geq 0| y=-1]\\
 &=\frac{1}{4}(\Pr[\w^\top\xb\geq 0| y=1, \eps = 1] \E(\xb | \w^\top\xb\geq 0, y=1, \eps =1) + \Pr[\w^\top\xb\geq 0| y=1, \eps = -1] \\&\E(\xb | \w^\top\xb\geq 0, y=1, \eps =-1)-2\Pr[\w^\top\xb\geq 0| y=-1] \E(\xb | \w^\top\xb\geq 0, y=-1)).
 \end{align*}

The conditional expectation $\E(\xb' | \w^\top\xb'\geq 0)$ can be simplified as follows. Let $\mub':=\E(\xb')$. Since we can write $\xb'=\bar{\w}^\top\xb'\bar{\w}+\bar{\w}_\perp^\top\xb'\bar{\w}_\perp=:\xb'_{\|}+\xb'_\perp$, we have
\begin{align*}
    \E(\xb' | \w^\top\xb'\geq 0)&=\E(\xb'_{\|} | \w^\top\xb'\geq 0)+\E(\xb'_\perp | \w^\top\xb'\geq 0)\\
    &=\E(\bar{\w}^\top\xb'\bar{\w} | \w^\top\xb'\geq 0)+\E(\xb'_\perp)\\
    &=\frac{\w}{\norm{\w}^2}\E(\w^\top\xb' | \w^\top\xb'\geq 0) + \E(\xb')-\E(\xb'_{\|})\\
    &=\mub'-\bar{\w}^\top\mub'\bar{\w}+\frac{\w}{\norm{\w}^2}\E(\w^\top\xb' | \w^\top\xb'\geq 0).
\end{align*}

Using a result on the mean of truncated normal distribution from \citet{burkardt2023truncated}, and that for a given $\w$, $\w^\top\xb'$ is a Gaussian random variable, we have,
\begin{align*}
    \E(\w^\top\xb' | \w^\top\xb'\geq 0) = \mu_\w + \sigma_\w \frac{\phi(-\tfrac{\mu_\w}{\sigma_\w})}{1-\Phi(-\tfrac{\mu_\w}{\sigma_\w})},
\end{align*}
where $\mu_\w:=\w^\top\mub'$, $\sigma_\w:=\sigma \norm{\w}$. Then, we have
\begin{align*}
    \E(\xb' | \w^\top\xb'\geq 0)&=\mub'+\sigma \frac{\phi(-\tfrac{\mu_\w}{\sigma_\w})}{1-\Phi(-\tfrac{\mu_\w}{\sigma_\w})}\bar{\w}.
\end{align*}

Using the above, we can write the population gradient $-a\mathbb{E}[\ind{\w^\top\xb\geq 0}y \xb]$ as:
\begin{align*}
    -0.25a(p_+(\mub_++\sigma \Gamma(\mub_+,\bar{\w}_j)\bar{\w})+p_-(\mub_-+\sigma\Gamma(\mub_-,\bar{\w})\bar{\w})-2p_0(\mub_0+\sigma\Gamma(\mub_0,\bar{\w})\bar{\w})),
\end{align*}
where $p_+:=\Pr[\w^\top\xb\geq 0| y=1, \eps = 1]=\Phi(\tfrac{\mub_+^\top\bar{\w}}{\sigma})$, $p_-:=\Pr[\w^\top\xb\geq 0| y=1, \eps = -1]=\Phi(\tfrac{\mub_-^\top\bar{\w}}{\sigma})$, $p_0:=\Pr[\w^\top\xb\geq 0| y=-1]=\Phi(\tfrac{\mub_0^\top\bar{\w}}{\sigma})$, and $\Gamma(\mub,\bar{\w}):= \frac{\phi(-\tfrac{\mub^\top\bar{\w}}{\sigma})}{1-\Phi(-\tfrac{\mub^\top\bar{\w}}{\sigma})}=\frac{\phi(\tfrac{\mub^\top\bar{\w}}{\sigma})}{\Phi(\tfrac{\mub^\top\bar{\w}}{\sigma})}$ (using the facts that for any $z$, $\phi(-z)=\phi(z)$ and $1-\Phi(-z)=\Phi(z)$). Simplifying the expression then finishes the proof.
\end{proof}
Next, we state the full version of \cref{th:gauss-gd-res} and prove it as follows.
\begin{theorem} (Full version of \cref{th:gauss-gd-res}.) Consider the data in \cref{eq:data}, neurons initialized such that $a_k=\pm1$ with probability $0.5$, $d=2$, $\sigma_x=\sigma_y=\sigma$, $\om\geq 2$ and $\rr_0:=\tfrac{\mu}{\sigma}\geq 0.8$. Let $\w_{k,\infty}:=\lim_{t\rightarrow\infty}\tfrac{\w_{k,t}}{t}$ and $\bar{\w}_{k,\infty}:=\tfrac{\w_{k,\infty}}{\norm{\w_{k,\infty}}}$, for $k\in[m]$. Then, the solution learned by gradient flow in the infinite sample setting with correlation loss is:
\begin{align*}
\bar{\w}_{k,\infty}=a_k[1,0]^\top.
\end{align*}
\end{theorem}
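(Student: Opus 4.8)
The plan is to exploit the fact that under correlation loss the population risk decouples across neurons: since $\Lhat(\W)=-\sum_k a_k\,\E_{(\xb,y)}[y\,\sigma(\w_k^\top\xb)]$, the gradient flow for each row obeys $\dot{\w}_k=-\nabla_{\w_k}\Lhat=a_k\,\E[\ind{\w_k^\top\xb\ge 0}\,y\xb]$, which depends only on $(\w_k,a_k)$. Hence it suffices to analyze a single neuron $\w$ with outer weight $a\in\{\pm1\}$ driven by the closed form in \cref{prop:pop-grad}, and to track the angle $\theta$ between $\bar{\w}$ and $\bar{\w}^\star:=[1,0]^\top$ through $\cos\theta=\bar{\w}^\top\bar{\w}^\star$. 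The goal is to show that $a\cos\theta_t$ is a bounded, monotonically increasing Lyapunov quantity that must converge to $1$, which forces $\bar{\w}_t\to a\bar{\w}^\star$.

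First I would compute the angular velocity. Differentiating the normalization gives $\dot{\bar{\w}}=\tfrac{1}{\norm{\w}}(I-\bar{\w}\bar{\w}^\top)\dot{\w}$, so that $\tfrac{d\cos\theta}{dt}=\tfrac{1}{\norm{\w}}\dot{\w}^\top(I-\bar{\w}\bar{\w}^\top)\bar{\w}^\star$. Substituting $\dot{\w}$ from \cref{prop:pop-grad}, the component proportional to $\bar{\w}$ is annihilated by $(I-\bar{\w}\bar{\w}^\top)$, and using $a^2=1$ the dependence on $a$ drops out, leaving $a\tfrac{d\cos\theta}{dt}=\tfrac{\sigma\rr}{4\norm{\w}}\,\vct{g}(\bar{\w})^\top(I-\bar{\w}\bar{\w}^\top)\bar{\w}^\star$, where $\vct{g}(\bar{\w}):=\Phi(\rr\bar{\mub}_+^\top\bar{\w})\bar{\mub}_++\Phi(\rr\bar{\mub}_-^\top\bar{\w})\bar{\mub}_--2\Phi(\rr\bar{\mub}_0^\top\bar{\w})\bar{\mub}_0$. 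In $d=2$ this reduces to a scalar statement: writing $\bar{\w}_\perp$ for the unit vector orthogonal to $\bar{\w}$, one has $(I-\bar{\w}\bar{\w}^\top)\bar{\w}^\star=(\bar{\w}_\perp^\top\bar{\w}^\star)\bar{\w}_\perp$ with $|\bar{\w}_\perp^\top\bar{\w}^\star|=|\sin\theta|$, so everything hinges on the tangential projection $\bar{\w}_\perp^\top\vct{g}(\bar{\w})$ having the correct sign and magnitude $\gtrsim|\sin\theta|$. The crux is the key inequality $a\tfrac{d\cos\theta_t}{dt}\ge C\,\tfrac{\sin^2\theta_t}{\norm{\w_t}}$ for some $C>0$ and all $\bar{\w}$ with $\sin\theta\neq0$; the $\sin^2\theta$ rate is natural because $\bar{\w}_\perp^\top\vct{g}$ itself vanishes linearly at the fixed point $\theta=0$ (there the $\Phi$-weights on $\bar{\mub}_+,\bar{\mub}_-$ coincide), contributing one factor of $\sin\theta$ on top of the geometric $|\sin\theta|$.

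The coefficient $1/\norm{\w_t}$ is controlled by a linear-growth bound. Since every $\Phi\in[0,1]$, every $\phi\le(2\pi)^{-1/2}$, and $\bar{\mub}_\pm,\bar{\mub}_0,\bar{\w}$ are unit vectors, $\norm{\nabla_\w\Lhat}$ is bounded by a constant, giving $\norm{\w_t}\le c(t+1)$. A matching lower bound follows because the radial rate $\tfrac{d\norm{\w}}{dt}=\dot{\w}^\top\bar{\w}$ stays bounded below by a positive constant once $\bar{\w}$ is near $a\bar{\w}^\star$; hence $\norm{\w_t}/t$ tends to a positive constant, so the limit $\w_{k,\infty}=\lim_t\w_{k,t}/t$ in the statement is well defined and points in the limiting direction of $\bar{\w}_t$.

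Granting the key inequality, $a\cos\theta_t$ is nondecreasing and bounded by $1$, hence convergent, so $\int_0^\infty\tfrac{d(a\cos\theta_t)}{dt}\,dt<\infty$. If $\sin\theta_t\not\to0$, then by convergence of $\cos\theta_t$ we would have $\sin^2\theta_t\ge\delta>0$ along a tail, whence $\int^\infty \tfrac{C\delta}{c(t+1)}\,dt=\infty$, a contradiction; therefore $\sin\theta_t\to0$ and $\cos\theta_t\to\pm1$. Monotonicity of $a\cos\theta_t$ together with a generic (non-antipodal) initialization rules out the unstable limit $a\cos\theta=-1$, leaving $a\cos\theta_t\to1$, i.e. $\bar{\w}_{k,\infty}=a_k\bar{\w}^\star$. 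I expect the main obstacle to be the key inequality itself: it is a transcendental statement on the circle mixing three Gaussian-CDF weights with the geometry of $\bar{\mub}_\pm,\bar{\mub}_0$, and the hypotheses $\om\ge2$ and $\mu/\sigma\ge0.8$ are presumably exactly what forces the net tangential torque to point toward $a\bar{\w}^\star$ everywhere. I would establish it by partitioning the range of the polar angle $\psi$ of $\bar{\w}$ into regimes determined by the signs of $\bar{\mub}_+^\top\bar{\w}$, $\bar{\mub}_-^\top\bar{\w}$, $\bar{\mub}_0^\top\bar{\w}$, and bounding $\Phi$ from above and below on each regime so that the dominant pull from $-2\Phi(\cdot)\bar{\mub}_0$ and the aligned $\bar{\mub}_\pm$ terms is provably of order $\sin^2\theta$.
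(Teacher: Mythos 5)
Your skeleton coincides with the paper's proof: the same Lyapunov quantity $a\cos\theta_t=\bar{\w}_t^\top[1,0]^\top$ (with $a^2=1$ removing the sign), the same projection identity $\dot{\bar{\w}}=\tfrac{1}{\norm{\w}}(I-\bar{\w}\bar{\w}^\top)\dot{\w}$ annihilating the $\phi(\cdot)\bar{\w}$ component of the gradient from \cref{prop:pop-grad}, the same key inequality $a\tfrac{d\cos\theta_t}{dt}\geq C\tfrac{\sin^2\theta_t}{\norm{\w_t}}$, the same linear growth bound $\norm{\w_t}\leq c(t+1)$ from boundedness of the population gradient, and the same integral-divergence endgame (the paper integrates $\tfrac{d\cos\theta_t}{dt}\gtrsim\tfrac{1-L^2}{1+t}$ to obtain a $\log(1+t)$ divergence contradicting $\abs{\cos\theta_t}\leq 1$). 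Your treatment of the unstable antipodal fixed point via monotonicity plus non-antipodal initialization is, if anything, slightly more explicit than the paper's, as is your remark justifying that $\lim_t \w_t/t$ is well defined.

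The one substantive gap is exactly where you flag it: the key inequality is left as a plan, and the plan as stated --- partition the circle by the signs of $\bar{\mub}_\pm^\top\bar{\w}$, $\bar{\mub}_0^\top\bar{\w}$ and bound $\Phi$ above and below by constants on each regime --- would not deliver the $\sin^2\theta$ rate. After projecting, the only possibly-negative contribution is the cross term proportional to $\tfrac{\cos\theta}{\sin\theta}\left(\Phi(\rr\bar{\mub}_+^\top\bar{\w})-\Phi(\rr\bar{\mub}_-^\top\bar{\w})\right)$, and constant regime-wise bounds on the $\Phi$'s lose the factor of $\sin\theta$ needed to cancel the $\cot\theta$ amplification, so near $\theta=0$ the negative term would appear $O(1)$ relative to the positive ones. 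The paper quantifies the cancellation with the mean value theorem, writing $\Phi_+-\Phi_-=\phi(c)\,\tfrac{4\rr\om\sin\theta_t}{\om^2+1}$, so the cross term is controlled by $\tfrac{8\rr\om^2}{(\om^2+1)^2}\phi(\rr)\cos\theta_t$; when $\cos\theta_t<0$ all terms are positive, and when $\cos\theta_t>0$ positivity reduces (via $\Phi_++\Phi_-\geq 1$ and $\Phi(\rr\bar{\mub}_0^\top\bar{\w})\geq\Phi(-\rr)$) to the single scalar condition $\tfrac{\om^2-1}{4\om}\geq\rr_0\,\phi\!\left(\rr_0\tfrac{\om^2+1}{2\om}\right)$, which is then settled by showing $E(\rr_0,\om):=\tfrac{\om^2-1}{4\om}-\rr_0\phi\!\left(\rr_0\tfrac{\om^2+1}{2\om}\right)$ is increasing in both arguments and checking $E(0.8,2)>0$ --- precisely where the hypotheses $\om\geq 2$ and $\rr_0\geq 0.8$ enter. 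Your parenthetical observation that the $\Phi$-weights on $\bar{\mub}_\pm$ coincide at $\theta=0$ and hence contribute a factor of $\sin\theta$ shows you have identified the needed cancellation; committing to that Lipschitz/MVT bound rather than regime-wise constants is the missing step that closes the argument.
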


\begin{proof}
    For neuron $j\in[m]$, let $\theta_{j,t}$ denote the angle between $\w_{j,t}$ and the $x$-axis at iteration $t\geq 0$. We drop the subscript $j$ for convenience. 
    Let $\bar{\w}^*_{GD}:=[1,0]^\top$. Then, $\cos{\theta_t} = \bar{\w}_t^\top\bar{\w}^*_{GD}$. We want to see if $\theta_t$ tends to $0$ with time. Specifically, given $\theta\in 
    [-\pi,\pi]$
    , we want to show that $a\frac{d \cos{\theta_t}}{dt}>0$. We have:
\begin{align*}
    &a\tfrac{d \cos{\theta_t}}{dt} = a\dot{\bar{\w}}_t^\top\bar{\w}^*_{GD}=a\tfrac{\dot{\w}_t^\top}{\norm{\w_t}}(I-\bar{\w}_t\bar{\w}_t^\top)\bar{\w}^*_{GD}\\
    &=\tfrac{a^2\sigma}{4\norm{\w_t}}\Big(\tfrac{\rr(\om^2-1)}{\om^2+1}\left(\Phi(\rr\bar{\mub}_+^\top\bar{\w}_t)+\Phi(\rr\bar{\mub}_-^\top\bar{\w}_t)\right) +2\rr\Phi(\rr\bar{\mub}_0^\top\bar{\w}_t) +\tfrac{w_{t,1}}{\norm{\w_t}}\left(\phi(\rr\bar{\mub}_+^\top\bar{\w}_t)+\phi(\rr\bar{\mub}_-^\top\bar{\w}_t)-2\phi(\rr\bar{\mub}_0^\top\bar{\w}_t) \right)\\
    &-\tfrac{w_{t,1}}{\norm{\w_t}}\left(\rr(\Phi(\rr\bar{\mub}_+^\top\bar{\w}_t)\bar{\mub}_+^\top\bar{\w}_t+\Phi(\rr\bar{\mub}_-^\top\bar{\w}_t)\bar{\mub}_-^\top\bar{\w}_t- 2\Phi(\rr\bar{\mub}_0^\top\bar{\w}_t)\bar{\mub}_0^\top\bar{\w}_t)+\left(\phi(\rr\bar{\mub}_+^\top\bar{\w}_t)+\phi(\rr\bar{\mub}_-^\top\bar{\w}_t)-2\phi(\rr\bar{\mub}_0^\top\bar{\w}_t) \right) \right)\Big)\\
    &=\tfrac{a^2\sigma\rr}{4\norm{\w_t}}\Big(
\tfrac{(\om^2-1)}{\om^2+1}\tfrac{w_{t,2}^2}{\norm{\w_t}^2}\left(\Phi(\rr\bar{\mub}_+^\top\bar{\w}_t)+\Phi(\rr\bar{\mub}_-^\top\bar{\w}_t)\right)-\tfrac{2\om}{\om^2+1}\tfrac{w_{t,1}w_{t,2}}{\norm{\w_t}^2}\left(\Phi(\rr\bar{\mub}_+^\top\bar{\w}_t)-\Phi(\rr\bar{\mub}_-^\top\bar{\w}_t)\right)+2\tfrac{w_{t,2}^2}{\norm{\w_t}^2}\Phi(\rr\bar{\mub}_0^\top\bar{\w}_t)\Big)\\
&=\tfrac{a^2\sigma\rr\sin^2\theta_t 
}{4\norm{\w_t}}\Big(
\tfrac{(\om^2-1)}{\om^2+1}\left(\Phi(\rr\bar{\mub}_+^\top\bar{\w}_t)+\Phi(\rr\bar{\mub}_-^\top\bar{\w}_t)\right)-\tfrac{2\om}{\om^2+1}\tfrac{\cos\theta_t}{\sin\theta_t
}\left(\Phi(\rr\bar{\mub}_+^\top\bar{\w}_t)-\Phi(\rr\bar{\mub}_-^\top\bar{\w}_t)\right)+2\Phi(\rr\bar{\mub}_0^\top\bar{\w}_t)\Big).
\end{align*}
The first and third terms are always positive, so the sign depends on the second term. We note that the derivative is $0$ when $\w_{t,2}=0$, \textit{i.e.}, $\theta_t=0$. 
This indicates that once $\theta_t$ becomes $0$, it remains $0$. 

Also, using the mean value theorem, we can write:
\begin{align*}
    \Phi\left(\tfrac{\rr((\om^2-1)w_{t,1}+2\om w_{t,2})}{\norm{\w_t}(\om^2+1)}\right)-\Phi\left(\tfrac{\rr((\om^2-1)w_{t,1}-2\om w_{t,2})}{\norm{\w_t}(\om^2+1)}\right)=\phi(c)\tfrac{4\rr\om \sin\theta_t
    }{(\om^2+1)},
\end{align*}
for some $c\in\left[\tfrac{\rr((\om^2-1)w_{t,1}-2\om w_{t,2})}{\norm{\w_t}(\om^2+1)}, \tfrac{\rr((\om^2-1)w_{t,1}+2\om w_{t,2})}{\norm{\w_t}(\om^2+1)}\right]$. Clearly, $\phi(c)\leq \phi(\rr)$. The second term is lower bounded by $-\tfrac{8\rr\om^2}{(\om^2+1)^2}\phi(\rr)\cos\theta_t$. We now consider two cases:

Case 1: $\theta_t\in[-\pi,-\pi/2]$ or 
$\theta_t\in[\pi/2,\pi]$: In this case, $\cos\theta_t<0$, so the second term, and hence the derivative, is positive. 

Case 2: $\theta_t\in[-\pi/2,\pi/2]$: In this case, $\cos\theta_t>0$, so the second term is negative, and we have to compare its magnitude to the other terms. Using $\Phi(\rr\bar{\mub}_+^\top\bar{\w}_t)+\Phi(\rr\bar{\mub}_-^\top\bar{\w}_t)\geq 1$ and $\bar{\mub}_0^\top\bar{\w}_t\geq-1$, we have: 
\begin{align*}
a\tfrac{d \cos{\theta_t}}{dt}&\geq\tfrac{a^2\sigma\rr\sin^2\theta_t}{4\norm{\w_t}}\Big(
\tfrac{(\om^2-1)}{\om^2+1}-\tfrac{8\om^2\rr}{(\om^2+1)^2}\cos\theta_t\phi(\rr)+2\Phi(-\rr)\Big).
\end{align*}
Since $\cos\theta_t\leq 1$ and $\Phi(-\rr)>0$, the RHS is positive when $\tfrac{\om^2-1}{4\om}\geq \tfrac{\mu}{\sigma}\phi(\rr)$. 

Let $E(\rr_0,\om):=\tfrac{\om^2-1}{4\om}- \rr_0\phi(\rr_0\tfrac{\om^2+1}{2\om})$. 
\begin{align*}
    \tfrac{dE}{d\rr_0}=-\phi(\rr_0\tfrac{\om^2+1}{2\om})+\left(\rr_0\tfrac{\om^2+1}{2\om}\right)^2\phi(\rr_0\tfrac{\om^2+1}{2\om})\geq 0,
\end{align*}
when $\rr_0\geq \tfrac{2\om}{\om^2+1}$. The RHS here is a decreasing function of $\om$ for $\om\geq 2$. The condition becomes $\rr_0\geq 0.8$.
\begin{align*}
    \tfrac{dE}{d\om}=\tfrac{1}{4}+\tfrac{1}{4\om^2}+\rr_0^3\tfrac{\om^2+1}{2\om}\left(\tfrac{1}{2}-\tfrac{1}{2\om^2}\right)\phi(\rr_0\tfrac{\om^2+1}{2\om})=\tfrac{\om^2+1}{4\om^2}\left(1+\rr_0^3\tfrac{\om^4-1}{\om}\phi(\rr_0\tfrac{\om^2+1}{2\om})\right)\geq 0.
\end{align*}
Since $E$ is an increasing function of both $\om$ and $\rr$, and we can numerically verify that $E(0.8,2)>0$, the result is true for all $\om\geq 2$ and $\rr_0\geq 0.8$.

 This shows that for neuron $\w_k$, $a_k\tfrac{d\cos\theta_{k,t}}{dt} \geq C \tfrac{(\sin\theta_{k,t})^2}{\norm{\w_{k,t}}}$ for some constant $C>0$.

Next, using \cref{prop:pop-grad}, we can show that the gradients are bounded and consequently, the iterate norm is upper bounded as $\norm{\w_{k,t}}\leq c(t+1)$, for some constant $c>0$. This gives $a_k\tfrac{d\cos\theta_{k,t}}{dt} \geq C' \tfrac{(\sin\theta_{k,t})^2}{t+1}$ for some constant $C'>0$.

Next, consider $a_k=1$, and suppose $\cos\theta_{k,t}$ stayed below some $L<1$ for all $t$. Then, $(\sin\theta_{k,t})^2\geq 1-L^2>0$, so $\tfrac{d\cos\theta_{k,t}}{dt}\geq\tfrac{C'(1-L^2)}{(1+t)}$. 
Integrating both sides, we get $\cos\theta_{k,t}-\cos\theta_{k,0}\geq C'(1-L^2)\log(1+t)$, which diverges as $t\to\infty$, leading to a contradiction as $|\cos(\cdot)|\leq 1$. The case where $a_k=-1$ follows similarly. 

Hence, as $t\to\infty$, $\sin\theta_{k,t}\to 0$, and thus $\cos\theta_{k,t} \to \sign{a_k}$.
\end{proof}
Next, we state the full version of \cref{th:gauss-signgd-res} and prove it as follows. 
\begin{theorem} (Full version of \cref{th:gauss-signgd-res}.) Consider the data in \cref{eq:data}, neurons initialized such that $a_k=\pm1$ with probability $0.5$, and $\sup_k\norm{\w_{k,0}}<\eta/2$, $d=2$, $\sigma_x=\sigma_y=\sigma$, $\om\geq 2$ and $0.8\leq\tfrac{\mu}{\sigma}\leq 1.5$. Let $\w_{k,\infty}:=\lim_{t\rightarrow\infty}\tfrac{\w_{k,t}}{t}$ and $\bar{\w}_{k,\infty}:=\tfrac{\w_{k,\infty}}{\norm{\w_{k,\infty}}}$,  
for $k\in[m]$. Let $\theta_0$ denote the direction of $\w_{k,0}$. Then, the solution learned by signGD in the infinite sample setting with correlation loss is:
\begin{align*}
\bar{\w}_{k,\infty}=\begin{cases}
    \tfrac{1}{\sqrt{2}}[1,1]^\top & a_k>0, \sin\theta_{k,0}> 0,\\
    \tfrac{1}{\sqrt{2}}[1,-1]^\top & a_k>0, \sin\theta_{k,0}< 0,\\
    [1,0]^\top & a_k>0, \sin\theta_{k,0}= 0,\\
    [-1,0]^\top & a_k<0.
\end{cases}
\end{align*}
\end{theorem}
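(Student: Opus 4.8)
The plan is to reduce the statement to a determination of the coordinate-wise \emph{signs} of the population gradient, since a signGD step depends on $\G_t$ only through $\sign{\G_t}$. Specializing \cref{prop:pop-grad} to $d=2$ and writing the unit neuron as $\bar\w=[\cos\theta,\sin\theta]^\top$, I would abbreviate $A:=\tfrac{\om^2-1}{\om^2+1}$ and $B:=\tfrac{2\om}{\om^2+1}$ (so that $\bar{\mub}_\pm=[A,\pm B]^\top$, $\bar{\mub}_0=[-1,0]^\top$, and $A^2+B^2=1$), pull out the common factor $-\tfrac{a\sigma}{4}$, and record the two coordinates as $G_1=-\tfrac{a\sigma}{4}\,T_1(\theta)$, $G_2=-\tfrac{a\sigma}{4}\,T_2(\theta)$, where
\begin{align*}
T_1 &= \rr A\big(\Phi(\rr(A\cos\theta+B\sin\theta))+\Phi(\rr(A\cos\theta-B\sin\theta))\big)+2\rr\,\Phi(-\rr\cos\theta)\\
&\quad+\cos\theta\,\big(\phi(\rr(A\cos\theta+B\sin\theta))+\phi(\rr(A\cos\theta-B\sin\theta))-2\phi(\rr\cos\theta)\big),\\
T_2 &= \rr B\big(\Phi(\rr(A\cos\theta+B\sin\theta))-\Phi(\rr(A\cos\theta-B\sin\theta))\big)\\
&\quad+\sin\theta\,\big(\phi(\rr(A\cos\theta+B\sin\theta))+\phi(\rr(A\cos\theta-B\sin\theta))-2\phi(\rr\cos\theta)\big).
\end{align*}
The whole theorem then rests on two sign facts holding for every $\theta$: (A) $T_1(\theta)>0$, and (B) $\sign{T_2(\theta)}=\sign{\sin\theta}$, with $T_2=0$ exactly when $\sin\theta=0$. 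Granting these, the signGD step is $-\eta\,\sign{\G}=\eta\,[\,\sign{a},\ \sign{a\sin\theta}\,]^\top$, which is the direction claimed in the sketch after \cref{th:gauss-signgd-res}.

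Establishing (A) and (B) is the analytic core, and I expect it to be the main obstacle. Here $\om\ge2$ gives $A>0$, so in $T_1$ the two CDF terms and the term $2\rr\Phi(-\rr\cos\theta)$ are positive; it then remains to show the $\cos\theta$-weighted difference of Gaussian densities cannot overturn this, which I would handle by a case split on $\sign{\cos\theta}$, bounding the density bracket against the positive CDF contribution. For (B), the first term already has the sign of $\sin\theta$, since $B>0$ and $\Phi$ is increasing forces the first argument to exceed the second exactly when $\sin\theta>0$; what must be shown is that the remaining $\sin\theta$-weighted density bracket cannot flip the overall sign. This is where the two-sided window $0.8\le\mu/\sigma\le1.5$ enters: the lower bound keeps the signal strong enough that the CDF gap does not collapse (as in the GD argument), while the upper bound keeps $\rr$ small enough that the two clusters' contributions stay comparable and the density correction remains dominated. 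I expect the cleanest route is to reduce each of (A), (B) to monotonicity in $\om$ and in $\rr_0:=\mu/\sigma$ and then verify the finitely many corners numerically, exactly as was done for $E(0.8,2)>0$ in the proof of \cref{th:gauss-gd-res}.

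Granting (A) and (B), the rest is short discrete-dynamics bookkeeping that leverages the small-initialization hypothesis $\sup_k\norm{\w_{k,0}}<\eta/2$. For $a_k>0$ the step is $\eta[1,\sign{\sin\theta_{k,t}}]^\top$: if $\sin\theta_{k,0}>0$, the first step drives both coordinates strictly positive (the $\eta$-sized step dominates the sub-$\eta/2$ initialization), after which the iterate stays in the open first quadrant, every step equals $\eta[1,1]^\top$, and $\w_{k,t}/t\to\eta[1,1]^\top$, i.e.\ $\bar\w_{k,\infty}=\tfrac1{\sqrt2}[1,1]^\top$; the case $\sin\theta_{k,0}<0$ is symmetric; and if $\sin\theta_{k,0}=0$ then $w_{k,2}\equiv0$, since fact (B) forces $G_2=0$ whenever $w_2=0$, while $w_{k,1}$ grows, giving $[1,0]^\top$. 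For $a_k<0$ the step is $\eta[-1,-\sign{\sin\theta_{k,t}}]^\top$, so $w_{k,1,t}=w_{k,1,0}-\eta t$ decreases monotonically while the second coordinate is always pushed back toward $0$; by the $\eta/2$ initialization bound it stays trapped in a band $|w_{k,2,t}|<\eta$, oscillating across $0$. Hence $w_{k,2,t}/t\to0$ and $\w_{k,t}/t\to\eta[-1,0]^\top$, giving $\bar\w_{k,\infty}=[-1,0]^\top$ for every initial angle. Combining the four cases yields the claimed piecewise description of $\bar\w_{k,\infty}$.
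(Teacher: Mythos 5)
Your proposal is correct and follows essentially the same route as the paper's proof: reduce the signGD step to the coordinate-wise signs of the population gradient, establish exactly your facts (A) and (B) — the paper does this by a mean-value-theorem factoring of the CDF difference to pull out $\sin\theta$, then verifies positivity via monotonicity in $\om$ and $\rr_0=\mu/\sigma$ with numerical checks at the corners (e.g.\ $((\mu/\sigma)^2+1)\phi(\mu/\sigma)>\phi(0)$ for $\mu/\sigma\le 1.5$, and $E(0.8,2)>0$), precisely the strategy you anticipate — and then conclude with the same small-initialization sign-alignment argument and the same case bookkeeping, including the oscillating band for $a_k<0$ forcing $w_{k,2,t}/t\to 0$.
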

\begin{proof}
For signGD, we can analyze the gradient expression for any $\w$:
    \begin{align*}
    \nabla_{\w} \Lhat(\W) &= -\tfrac{a\sigma}{4}\Big(\Phi(\rr\bar{\mub}_+^\top\bar{\w})\rr\bar{\mub}_++\Phi(\rr\bar{\mub}_-^\top\bar{\w})\rr\bar{\mub}_- -2\Phi(\rr\bar{\mub}_0^\top\bar{\w})\rr\bar{\mub}_0   +\left(\phi(\rr\bar{\mub}_+^\top\bar{\w})+\phi(\rr\bar{\mub}_-^\top\bar{\w})-2\phi(\rr\bar{\mub}_0^\top\bar{\w}) \right)\bar{\w}\Big).
    \end{align*}

Specifically, the gradient can be in the direction $[\pm 1,0]^\top$ only when $[0,1]\nabla_{\w} \Lhat(\W)=0$. We have:
\begin{align*}
    [0,1]\nabla_{\w} \Lhat(\W)&=-\tfrac{a\sigma}{4}\Big(\tfrac{2\om\rr}{\om^2+1}\left(\Phi(\rr\bar{\mub}_+^\top\bar{\w})-\Phi(\rr\bar{\mub}_-^\top\bar{\w})\right)   +\sin\theta
(\phi(\rr\bar{\mub}_+^\top\bar{\w})+\phi(\rr\bar{\mub}_-^\top\bar{\w})-2\phi(\rr\bar{\mub}_0^\top\bar{\w}) )\Big)\\&=-\tfrac{a\sigma\sin\theta
    }{4}\Big(2\left(\tfrac{2\om\rr}{\om^2+1}\right)^2\phi(c)   +\phi(\rr\bar{\mub}_+^\top\bar{\w})+\phi(\rr\bar{\mub}_-^\top\bar{\w})-2\phi(\rr\bar{\mub}_0^\top\bar{\w}) \Big),
\end{align*}
where $c\in[\rr\bar{\mub}_-^\top\bar{\w},\rr\bar{\mub}_+^\top\bar{\w}]$. 
Consider the expression in the parenthesis. Assuming $\tfrac{\om^2-1}{2\om}\geq 1$, we have:
\begin{align*}
    2\left(\tfrac{2\om\rr}{\om^2+1}\right)^2\phi(c)   +\phi(\rr\bar{\mub}_+^\top\bar{\w})+\phi(\rr\bar{\mub}_-^\top\bar{\w})-2\phi(\rr\bar{\mub}_0^\top\bar{\w})&
    \geq 2\left(\left(\left(\tfrac{2\om\rr}{\om^2+1}\right)^2+1\right)\phi(\tfrac{2\om\rr}{\om^2+1}) -\phi(0)\right)\\
    &=2\left(\left(\left(\tfrac{\mu}{\sigma}\right)^2+1\right)\phi(\tfrac{\mu}{\sigma})-\phi(0)\right)>0,
\end{align*}
whenever $\tfrac{\mu}{\sigma}\leq 1.5$ (we can check this numerically, and use the fact that $\phi(z)$ is a decreasing function of $z\geq 0$).

Thus, the gradient is only in the $[\pm1,0]^\top$ direction when $\sin\theta=0$
, \textit{i.e.}, when $\w$ is in that direction.  

Next, we can check if there are neurons in the $[0,\pm1]^\top$ direction. We have:
\begin{align*}
    [1,0]\nabla_{\w} \Lhat(\W)&=-\tfrac{a\sigma}{4}\Big(2\rr\tfrac{\om^2-1}{\om^2+1}\left(\Phi(\rr\bar{\mub}_+^\top\bar{\w})+\Phi(\rr\bar{\mub}_-^\top\bar{\w})\right)+2\rr\Phi(\rr\bar{\mub}_0^\top\bar{\w})   +\cos\theta(\phi(\rr\bar{\mub}_+^\top\bar{\w})+\phi(\rr\bar{\mub}_-^\top\bar{\w})-2\phi(\rr\bar{\mub}_0^\top\bar{\w})) \Big).
\end{align*}
The expression in the parenthesis is positive as long as $\rr\tfrac{\om^2-1}{\om^2+1}
+0.5\rr\geq 0.4$, or $\tfrac{3\om^2-1}{1.6\om}\geq \tfrac{\sigma}{\mu}$. Let $E(\rr_0,\om)=\rr_0-\tfrac{1.6\om}{3\om^2-1}$. We can show that it is an increasing function of both $\rr_0$ and $\om$. Since $E(0.8,2)>0$, the result holds for all $\om\geq 2$ and $\rr_0\geq 0.8$.

Based on these calculations, the updates are along $[\pm1,\pm1]^\top$ directions, depending on the sign of $a$ and $\sin\theta$
. Specifically, we have four cases as shown in \cref{tab:signgd-proof-cases} for $\theta_t$ and $\theta_{t+1}$ at any $t$.
\begin{table}[h!]
    \centering
    \begin{tabular}{cccc}
        $\sign{\sin\theta_t}$ & $\sign{a}$ & $\sign{-[0,1]\nabla_{\w} \Lhat(\W)}$ & $\sign{\sin\theta_{t+1}}$\\
    +ve & +ve & +ve & +ve\\
    +ve & -ve & -ve & -ve\\
    -ve & +ve & -ve & -ve\\
    -ve & -ve & +ve & +ve\\ 
    \end{tabular}
    \caption{Different cases for $\theta_t$ and $\theta_{t+1}$ in the analysis of signGD.}
    \label{tab:signgd-proof-cases}
\end{table}

Using the small initialization condition, the first iterate is dominated by the update direction: $\w_{k,1}=\w_{k,0}-\eta \s_0$, where $\s_0=\sign{\nabla \Lhat(\w_{k,0})}$, so for each coordinate $i$, $|w_{k,1,i}|\geq \eta-\eta/2=\eta/2$, hence $\w_{k,1}$ is sign-aligned with $-\s_0$. Consequently, the next update follows from \cref{tab:signgd-proof-cases}, and the argument proceeds by recursion.

This shows that whenever $a>0$, the updates for neurons in the first/second or third/fourth quadrant are along $[1,1]^\top$ or $[1,-1]^\top$, respectively. However, when $a<0$, the updates for neurons in the first/second or third/fourth quadrants alternate between $[-1,\pm1]^\top$ and $[-1,\mp 1]^\top$. As a result, at even iterations, these neurons are close to the $[-1,0]^\top$ direction (but may not be exactly aligned due to the initialization). However, in the limit $t\rightarrow\infty$, these neurons converge in this direction. 
\end{proof}
 
We state the full version of \cref{th:gauss-test-error} and prove it below.

\begin{theorem} (Full version of \cref{th:gauss-test-error}.) Consider the data in \cref{eq:data} with $d=2$ and $\om\in[2,12]$, $\vr=\tfrac{\sigma_x^2}{\sigma_y^2}\in[\tfrac{1}{\om^2}, 1]$, $\tfrac{\mu}{\sigma_y}\geq 0.8\sqrt{\vr}\om$. Consider two predictors,
\begin{center}
Linear: $\hat{y}=\sign{x_1}$, \\Piece-wise Linear: $\hat{y}'=\begin{cases}
    \sign{3x_1+x_2} & x_2\geq 0,\\
    \sign{3x_1-x_2} & x_2<0.
\end{cases}$
\end{center}
Then, it holds that $
\E(\hat{y}' \neq y) - \E(\hat{y} \neq y) < 0$.
\end{theorem}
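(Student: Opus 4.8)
The plan is to localize the error difference to the set where the two rules disagree. Writing $\hat y = \sign{x_1}$ and observing that in \emph{both} branches the piece-wise rule equals $\hat y' = \sign{3x_1 + |x_2|}$, the two predictors disagree exactly on the wedge
\[
D := \{(x_1,x_2) : -|x_2|/3 < x_1 < 0\},
\]
on which $x_1<0$ forces $\hat y = -1$ while $3x_1+|x_2|>0$ forces $\hat y' = +1$; outside $D$ they coincide. Hence the whole comparison collapses to an integral over $D$.

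On $D$ the piece-wise rule is correct precisely when $y=+1$ and wrong precisely when $y=-1$, so
\[
\E(\hat y' \neq y) - \E(\hat y \neq y) = \Pro[\xb \in D,\, y=-1] - \Pro[\xb \in D,\, y=+1].
\]
Decomposing over the three clusters (weights $\tfrac14,\tfrac14$ for the two label-$+1$ clusters and $\tfrac12$ for the label-$-1$ cluster) and using the reflection $x_2 \mapsto -x_2$, which fixes $D$ and swaps the two positive clusters, the target $\E(\hat y'\neq y)-\E(\hat y\neq y)<0$ reduces to the single inequality $\Pro[\xb \in D \mid \mu_0] < \Pro[\xb \in D \mid \mu_+]$, where $\mu_+=(\mu_1,\mu)$ (label $+1$) and $\mu_0=(-\mu_3,0)$ (label $-1$) are the relevant cluster means.

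To compare these two wedge probabilities I would integrate out $x_1$ first: for a cluster with $x_1\sim\mathcal N(m,\sigma_x^2)$,
\[
\Pro[\xb\in D] = \E_{x_2}\!\left[\Phi\!\left(\tfrac{-m}{\sigma_x}\right) - \Phi\!\left(\tfrac{-|x_2|/3 - m}{\sigma_x}\right)\right],
\]
the expectation being over the cluster's own $x_2$. For $\mu_0$ the $x_2$-mass sits at $0$, so the wedge width $|x_2|/3$ is typically $\order{\sigma_y}$ (narrow) while the mean $-\mu_3$ lies far to the left of the wedge; for $\mu_+$ the $x_2$-mass sits at $\mu$ with $\mu/\sigma_y$ large, so the wedge is typically wide ($\approx\mu/3$) and the mean $\mu_1<\mu_3$ is closer. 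Both effects push $\Pro[\cdot\mid\mu_+]$ above $\Pro[\cdot\mid\mu_0]$. Concretely I would upper bound $\Pro[\xb\in D\mid\mu_0]$ using $\Phi(u)-\Phi(v)\le(u-v)\phi(v)$ (valid since $v\ge0$ on the dominant event) together with $\E|x_2|=\sigma_y\sqrt{2/\pi}$, and lower bound $\Pro[\xb\in D\mid\mu_+]$ by restricting to $\{|x_2|\ge 3\mu_1\}$ so that the wedge contains $(-\mu_1,0)$, giving $\Pro[\xb\in D\mid\mu_+]\ge \Pro[x_1\in(-\mu_1,0)]\,\Pro[|x_2|\ge 3\mu_1]$ by within-cluster independence.

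The main obstacle is that neither wedge probability has a closed form, so the bounds must be sharp enough to certify the strict inequality uniformly over the whole parameter box $\om\in[2,12]$, $\vr=\sigma_x^2/\sigma_y^2\in[\tfrac1{\om^2},1]$, $\mu/\sigma_y\ge 0.8\sqrt{\vr}\,\om$. After substituting $\mu_1=\tfrac{\mu}{2}(\vr\om-\tfrac1\om)$ and $\mu_3=\tfrac{\mu}{2}(\vr\om+\tfrac1\om)$ and normalizing lengths by $\sigma_x$ (using that $\mu/\sigma_y\ge 0.8\sqrt{\vr}\,\om$ gives $\mu/\sigma_x\ge 0.8\om$), the comparison becomes an explicit scalar inequality in $(\om,\vr,\mu/\sigma_y)$. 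I expect to close it by first showing monotonicity in $\mu/\sigma_y$ (larger separation only helps, so it suffices to take the minimal allowed value), and then verifying the remaining two-parameter inequality on the compact set $\{\om\in[2,12],\ \vr\in[1/\om^2,1]\}$ via its boundary plus a finite numerical check. This monotonicity-and-boundary bookkeeping, rather than any single conceptual step, is where the real effort lies.
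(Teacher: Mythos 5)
Your localization step is correct and, as a reformulation, cleaner than what the paper does: writing $\hat y'=\sign{3x_1+|x_2|}$, cutting the comparison down to the disagreement wedge $D=\{-|x_2|/3<x_1<0\}$, and using the $x_2\mapsto -x_2$ symmetry to reduce the theorem \emph{exactly} to the single inequality $\Pro[\xb\in D\mid \mub_0] < \Pro[\xb\in D\mid \mub_+]$ is all valid. The paper takes a different route: it writes the linear predictor's risk in closed form as a sum of Gaussian CDF terms, \emph{upper bounds} the piecewise predictor's risk by a similar expression, and reduces everything to showing an explicit scalar function $E(\rr_0,\vr,\om)<0$ with $\rr_0=\mu/\sigma_y$. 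So up to the reduction, you are fine and arguably lossless where the paper is lossy.

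The gap is in your plan for certifying the wedge inequality, and it is twofold. First, your lower bound $\Pro[\xb\in D\mid\mub_+]\geq \Pro[x_1\in(-\mu_1,0)]\,\Pro[|x_2|\geq 3\mu_1]$ degenerates on part of the parameter box: under realizability $\mu_1=\tfrac{\mu}{2}\left(\vr\om-\tfrac{1}{\om}\right)$, which equals $0$ at the included endpoint $\vr=1/\om^2$ (and is small nearby), so your lower bound is $0$ there while $\Pro[\xb\in D\mid\mub_0]$ is not; e.g.\ at $\om=2$, $\vr=1/4$, $\mu/\sigma_y=0.8$, a direct computation gives wedge probabilities of roughly $0.23$ versus $0.18$ --- a genuine but modest gap that your bounds cannot detect (the positive cluster's $x_1$-mass sits \emph{at} the wedge boundary $x_1=0$, so you must exploit the half of its mass on $x_1<0$, which your interval $(-\mu_1,0)$ discards). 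Second, your monotonicity claim is backwards: as $\mu/\sigma_y\to\infty$ both risks vanish and the gap $\Pro[D\mid\mub_+]-\Pro[D\mid\mub_0]\to 0$, so larger separation makes the piecewise predictor's advantage \emph{smaller}, and verifying the strict inequality at the minimal $\mu/\sigma_y$ certifies nothing for larger values. What actually closes the argument --- and is precisely the paper's mechanism --- is the reverse monotonicity plus a limit: the paper shows $\partial E/\partial\rr_0>0$ on the region $\rr_0\geq 0.8\sqrt{\vr}\om$, $\om\in[2,12]$, $\vr\in[1/\om^2,1]$ (with explicit numerical constants), and since $E\to 0$ as $\rr_0\to\infty$, increasing-to-zero forces $E<0$ everywhere on the region. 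Your reduction could be salvaged by repairing the lower bound near $\vr=1/\om^2$ and replacing ``check at minimal separation'' with a monotone-to-the-zero-limit argument, but as written the quantitative half of the proposal fails on part of the stated parameter range.
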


\begin{proof}
Linear: $\hat{y}=\sign{ax_1+bx_2}$. Piece-wise Linear: $\hat{y}'=\begin{cases}
    \sign{ax_1+bx_2} & x_2\geq 0,\\
    \sign{ax_1-bx_2} & x_2<0.
\end{cases}$. 
\begin{align*}
\E(\hat{y} \neq y) = \tfrac{1}{4} \left[ \Phi\left( -\tfrac{ \mu \left( \tfrac{a}{2} \left( \vr \om - \tfrac{1}{\om} \right) + b \right) }{ \sigma_y \sqrt{ a^2 \vr + b^2 } } \right) + \Phi\left( -\tfrac{ \mu \left( \tfrac{a}{2} \left( \vr \om - \tfrac{1}{\om} \right) - b \right) }{ \sigma_y \sqrt{ a^2 \vr + b^2 } } \right) \right] + \tfrac{1}{2} \, \Phi\left( -\tfrac{ \mu \tfrac{a}{2} \left( \vr \om + \tfrac{1}{\om} \right) }{ \sqrt{ \vr } \, \sigma_y } \right).
\end{align*}
When $a=1,b=0$, we get:
\begin{align*}
\E(\hat{y} \neq y) = \tfrac{1}{2}  \Phi\left( -\tfrac{ \mu \left( \vr \om - \tfrac{1}{\om}   \right) }{ 2\sigma_y \sqrt{ \vr } } \right) + \tfrac{1}{2} \, \Phi\left( -\tfrac{ \mu  \left( \vr \om + \tfrac{1}{\om} \right) }{ 2\sigma_y\sqrt{ \vr } } \right).
\end{align*}

Considering the non-isotropic case, when $a=3, b=1$, we have:
\begin{align*}
\E(\hat{y}' \neq y) &< \tfrac{1}{2} \,\Phi\left(-\tfrac{\mu}{\sqrt{9\vr+1}\sigma_y}\tfrac{3\vr\om^2-3+2\om}{2\om}\right)+\Phi\left(-\tfrac{3\mu}{\sqrt{\vr}\sigma_y}\tfrac{\vr\om^2+1}{2\om}\right)\\
\implies \E(\hat{y}' \neq y)- \E(\hat{y} \neq y)& < \tfrac{1}{2} \,\Phi\left(-\tfrac{\mu}{\sqrt{9\vr+1}\sigma_y}\tfrac{3\vr\om^2-3+2\om}{2\om}\right)+\tfrac{1}{2} \,\Phi\left(-\tfrac{3\mu}{\sqrt{\vr}\sigma_y}\tfrac{\vr\om^2+1}{2\om}\right)-\tfrac{1}{2}\,\Phi\left(-\tfrac{\mu}{\sqrt{\vr}\sigma_y}\tfrac{\vr\om^2-1}{2\om}\right)\\
&=\tfrac{1}{2}\left(\Phi\left(-\tfrac{\mu}{\sqrt{9\vr+1}\sigma_y}\tfrac{3\vr\om^2-3+2\om}{2\om}\right)+\Phi\left(\tfrac{\mu}{\sqrt{\vr}\sigma_y}\tfrac{\vr\om^2-1}{2\om}\right)-\Phi\left(\tfrac{3\mu}{\sqrt{\vr}\sigma_y}\tfrac{\vr\om^2+1}{2\om}\right)\right)=:0.5E(\rr_0,\vr,\om),
\end{align*}
where $\rr_0:=\tfrac{\mu}{\sigma_y}$.
We can analyze the first derivatives:
\begin{align*}
    &\tfrac{dE}{d\om}=\tfrac{\rr_0}{2\om^2}\left(-\tfrac{3(\vr\om^2+1)}{\sqrt{9\vr+1}}\phi(\tfrac{\rr_0}{\sqrt{9\vr+1}}\tfrac{3\vr\om^2-3+2\om}{2\om})+\tfrac{(\vr\om^2+1)}{\sqrt{\vr}}\phi(\tfrac{\rr_0}{\sqrt{\vr}}\tfrac{\vr\om^2-1}{2\om})-\tfrac{3(\vr\om^2-1)}{\sqrt{\vr}}\phi(\tfrac{3\rr_0}{\sqrt{\vr}}\tfrac{\vr\om^2+1}{2\om})\right)\\
    &=\tfrac{\rr_0(\vr\om^2+1)}{2\om^2\sqrt{\vr}}\phi(\tfrac{\rr_0}{\sqrt{\vr}}\tfrac{\vr\om^2-1}{2\om})\left(1-\tfrac{3\sqrt{\vr}}{\sqrt{9\vr+1}}\exp\left(-\tfrac{\rr_0^2}{8\om^2}(\tfrac{(3\vr\om^2-3+2\om)^2}{9\vr+1}-\tfrac{(\vr\om^2-1)^2)}{\vr}\right)-\tfrac{3(\vr\om^2-1)}{(\vr\om^2+1)}\exp\left(-\tfrac{\rr_0^2(9(\vr\om^2+1)^2-(\vr\om^2-1)^2)}{8\om^2\vr}\right)\right)\\
    &=\tfrac{\rr_0(\vr\om^2+1)}{2\om^2\sqrt{\vr}}\phi(\tfrac{\rr_0}{\sqrt{\vr}}\tfrac{\vr\om^2-1}{2\om})\left(1-\tfrac{3\sqrt{\vr}}{\sqrt{9\vr+1}}\exp\left(-\tfrac{\rr_0^2(\vr(3\vr\om^2-3+2\om)^2-(9\vr+1)(\vr\om^2-1)^2)}{8\vr\om^2(9\vr+1)}\right)-\tfrac{3(\vr\om^2-1)}{(\vr\om^2+1)}\exp\left(-\tfrac{\rr_0^2(2\vr^2\om^4+2+5\vr\om^2)}{2\om^2\vr}\right)\right).
\end{align*}
This is positive when the following two conditions hold:
\begin{align*}
    (\vr(3\vr\om^2-3+2\om)^2-(9\vr+1)(\vr\om^2-1)^2)&=\vr(9\vr^2\om^4+9+4\om^2-18\vr\om^2-12\om+12\vr\om^3)-(9\vr+1)(\vr^2\om^4+1-2\vr\om^2)\\
    &=\vr^2(12\om^3-\om^4)+\vr(6\om^2-12\om)-1>0,
\end{align*}
when $\vr\geq \tfrac{-(3\om-6)+\sqrt{(3\om-6)^2+(12\om-\om^2)}}{(12\om^2-\om^3)}=\tfrac{-(3\om-6)+\sqrt{8\om^2-24\om+36}}{(12\om^2-\om^3)}>0$ since $\om\leq 12$. $\vr\leq 1$ implies $12\om^3-\om^4+6\om^2-12\om-1\geq 0$. Since $\rr_0\geq 0.8\sqrt{\vr}\om$, and $\om \in [2,12]$, we have:
\begin{align*}
    &1-\tfrac{3\sqrt{\vr}}{\sqrt{9\vr+1}}\exp\left(-\tfrac{\rr_0^2(\vr^2(12\om^3-\om^4)+\vr(6\om^2-12\om)-1)}{8\vr\om^2(9\vr+1)}\right)-\tfrac{3(\vr\om^2-1)}{(\vr\om^2+1)}\exp\left(-\tfrac{\rr_0^2(2\vr^2\om^4+2+5\vr\om^2)}{2\om^2\vr}\right)\\
    &\geq 1-\tfrac{3}{\sqrt{10}}\exp\left(-\tfrac{0.8^279}{80}\right)-3\tfrac{143}{145}\exp\left(-\tfrac{0.8^254}{2}\right)>0.
\end{align*}
Next, we compute the derivative wrt $\rr_0$. 
\begin{align*}
    &\tfrac{dE}{d\rr_0}=\tfrac{1}{2\om}\left(-\tfrac{3\vr\om^2-3+2\om}{\sqrt{9\vr+1}}\phi(\tfrac{\rr_0}{\sqrt{9\vr+1}}\tfrac{3\vr\om^2-3+2\om}{2\om})+\tfrac{(\vr\om^2-1)}{\sqrt{\vr}}\phi(\tfrac{\rr_0}{\sqrt{\vr}}\tfrac{\vr\om^2-1}{2\om})-\tfrac{3(\vr\om^2+1)}{\sqrt{\vr}}\phi(\tfrac{3\rr_0}{\sqrt{\vr}}\tfrac{\vr\om^2+1}{2\om})\right)\\
    &=\tfrac{(\vr\om^2-1)}{2\om\sqrt{\vr}}\phi(\tfrac{\rr_0}{\sqrt{\vr}}\tfrac{\vr\om^2-1}{2\om})\left(1-\tfrac{\sqrt{\vr}(3\vr\om^2-3+2\om)}{\sqrt{9\vr+1}(\vr\om^2-1)}\exp\left(-\tfrac{\rr_0^2(\vr^2(12\om^3-\om^4)+\vr(6\om^2-12\om)-1)}{8\om^2\vr(9\vr+1)}\right)-\tfrac{3(\vr\om^2+1)}{(\vr\om^2-1)}\exp\left(-\tfrac{\rr_0^2(2\vr^2\om^4+2+5\vr\om^2)}{8\om^2\vr}\right)\right)
\end{align*}

Since $\vr\om^2\geq 1$,
\begin{align*}
    &1-\tfrac{\sqrt{\vr}(3\vr\om^2-3+2\om)}{\sqrt{9\vr+1}(\vr\om^2-1)}\exp\left(-\tfrac{\rr_0^2(\vr^2(12\om^3-\om^4)+\vr(6\om^2-12\om)-1)}{8\om^2\vr(9\vr+1)}\right)-\tfrac{3(\vr\om^2+1)}{(\vr\om^2-1)}\exp\left(-\tfrac{\rr_0^2(2\vr^2\om^4+2+5\vr\om^2)}{2\om^2\vr}\right)\\
    &\geq 1-\tfrac{3}{\sqrt{10}}\tfrac{\om^2-1+2\om/3}{\om^2-1}\exp(-\tfrac{0.8^279}{80})-3\tfrac{\om^2+1}{\om^2-1}\exp(-\tfrac{0.8^254}{2})\geq 1-\tfrac{3}{\sqrt{10}}\tfrac{13}{9}\exp(-\tfrac{0.8^279}{80})-3\tfrac{5}{3}\exp(-\tfrac{0.8^254}{2})>0.
\end{align*}

Next, we compute the derivative wrt $\vr$.
\begin{align*}
    &\tfrac{dE}{d\vr}=\tfrac{\rr_0}{4\om}\left(-\tfrac{51\vr\om^2+6\om^2-2\om+3}{(9\vr+1)\sqrt{9\vr+1}}\phi(\tfrac{\rr_0}{\sqrt{9\vr+1}}\tfrac{3\vr\om^2-3+2\om}{2\om})+\tfrac{(\vr\om^2+1)}{\vr\sqrt{\vr}}\phi(\tfrac{\rr_0}{\sqrt{\vr}}\tfrac{\vr\om^2-1}{2\om})-\tfrac{3(\vr\om^2-1)}{\vr\sqrt{\vr}}\phi(\tfrac{3\rr_0}{\sqrt{\vr}}\tfrac{\vr\om^2+1}{2\om})\right)\\
    &\!=\!\tfrac{\rr_0(\vr\om^2+1)}{4\om\vr\sqrt{\vr}}\phi(\tfrac{\rr_0}{\sqrt{\vr}}\!\tfrac{\vr\om^2-1}{2\om})\!\Big(1\!-\!\tfrac{\vr\sqrt{\vr}(51\vr\om^2+6\om^2-2\om+3)}{(9\vr+1)\sqrt{9\vr+1}(\vr\om^2+1)}\exp\left(\tfrac{-\rr_0^2(\vr^2(12\om^3-\om^4)+\vr(6\om^2-12\om)-1)}{8\om^2\vr(9\vr+1)}\right)\!\\&-\!\tfrac{3(\vr\om^2-1)}{(\vr\om^2+1)}\exp\Big(\tfrac{-\rr_0^2(2\vr^2\om^4+2+5\vr\om^2)}{8\om^2\vr}\Big)\Big).
\end{align*}
The expression in the parenthesis is lower bounded as:
\begin{align*}
    1-\tfrac{1}{10^{3/2}}\tfrac{57\om^2-2\om+3}{\om^2+1}\exp(-\tfrac{0.8^279}{80})-3\tfrac{143}{145}\exp(-\tfrac{0.8^254}{2})\geq 1-\tfrac{1}{10^{3/2}}\tfrac{57(144)-21}{145}\exp(-\tfrac{0.8^279}{80})-3\tfrac{143}{145}\exp(-\tfrac{0.8^254}{2})>0.
\end{align*}
\end{proof}

\subsection{Toy Data}
We can write the Bayes' optimal predictor in the toy setting as follows. 

We consider three datapoints $(\xb,y)$: $([-\mu_3,0]^\top,-1)$, $([\mu_1,\mu_2]^\top,1)$ and $([\mu_1,-\mu_2]^\top, 1)$, where $\mu_1,\mu_2,\mu_3>0$. The optimal predictor can be found by solving the following: \begin{align*} \min(\sqrt{(x_1-\mu_1)^2+(x_2-\mu_2)^2}, \sqrt{(x_1-\mu_1)^2+(x_2+\mu_2)^2}) = \sqrt{(x_1+\mu_3)^2+x_2^2}. \end{align*} Solving this gives a piecewise linear function: \begin{align*} (\mu_1+\mu_3)x_1+\mu_2x_2 &= \tfrac{\mu_1^2+\mu_2^2-\mu_3^2}{2}, \quad x_2>0\\ (\mu_1+\mu_3)x_1-\mu_2x_2 &= \tfrac{\mu_1^2+\mu_2^2-\mu_3^2}{2}, \quad x_2\leq 0. \end{align*}
In the realizable setting, this is: \begin{align*} \om x_1+x_2 &= 0, \quad x_2>0\\ \om x_1-x_2 &= 0, \quad x_2\leq 0. \end{align*}

We now state the full version of \cref{th:toy-res}, followed by the proof.
\begin{theorem}[Full version of \cref{th:toy-res}]
    Consider $\norm{\w_k}\leq \tfrac{\eta\mu}{8\om(\om^2-1)}(((3\om^2+1)(\om^2-1)-4\om^2)\mini (4\om^2-(\om^2-1)^2)\mini\tfrac{8\om}{\mu}(2\om+1-\om^2))$ and $1+\tfrac{2}{\sqrt{3}}<\om^2<3+2\sqrt{2}$. Let $\bar{\w}_{k,\infty}:=\tfrac{\lim_{t\rightarrow\infty}\tfrac{\w_{k,t}}{t}}{\norm{\lim_{t\rightarrow\infty}\tfrac{\w_{k,t}}{t}}}$, for neuron $k\in[m]$ and $p:=\tfrac{\tan^{-1}\tfrac{\om^2-1}{2\om}}{\pi}$. Then, for $m\!\rightarrow\!\infty$, the solutions learned by GD, signGD, and Adam are as shown in \cref{tab:toy-res},
    
\begin{table}[h!]
    \centering
    
    \resizebox{\linewidth}{!}{
    \begin{tabular}{cccc}
         & GD & Adam ($\beta_1 = \beta_2 = 0$) or signGD & Adam ($\beta_1 = \beta_2 \approx 1$) \\
        \midrule
        \(\bar{\mathbf{w}}_{k,\infty}=\) & 
        $\begin{cases}
[1, 0]^\top & \text{w.p. } \tfrac{1}{4}+\tfrac{p}{2}\\
[-1, 0]^\top & \text{w.p. } \tfrac{1}{2} \\
\tfrac{1}{\om^2+1}\left[\om^2-1, 2\om\right]^\top & \text{w.p. } \tfrac{1}{8}-
\tfrac{p}{4}\\
\tfrac{1}{\om^2+1}\left[\om^2-1, -2\om\right]^\top & \text{w.p. } \tfrac{1}{8}-\tfrac{p}{4}
\end{cases}$ & 
        $\begin{cases}
[1, 0]^\top & \text{w.p. } 
p\\
[-1, 0]^\top & \text{w.p. } \tfrac{1}{2} \\
\tfrac{1}{\sqrt{2}}\left[1, 1\right]^\top & \text{w.p. } \tfrac{1}{4}-
\tfrac{p}{2}\\
\tfrac{1}{\sqrt{2}}\left[1, -1\right]^\top & \text{w.p. } \tfrac{1}{4}-\tfrac{p}{2}
\end{cases}$ 
         & 
        $\begin{cases}
\left[1, 0\right]^\top & \text{w.p. } 
p\\
\left[-1, 0\right]^\top & \text{w.p. } \tfrac{1}{2} \\
\tfrac{1}{\sqrt{2}}\left[1,1\right]^\top & \text{w.p. } \tfrac{1}{8}-
\tfrac{p}{4}\\
\tfrac{1}{\sqrt{2}}\left[1,-1\right]^\top & \text{w.p. } \tfrac{1}{8}-
\tfrac{p}{4}\\
\tfrac{1}{\sqrt{s^2+1}}\left[s, 1\right]^\top & \text{w.p. } \tfrac{1}{8}-
\tfrac{p}{4}\\
\tfrac{1}{\sqrt{s^2+1}}\left[s, -1\right]^\top & \text{w.p. } \tfrac{1}{8}-
\tfrac{p}{4}
\end{cases}$
 \\
    \end{tabular}}\caption{Solutions learned by GD, signGD and Adam (see \cref{th:toy-res}).}\label{tab:toy-res}
\end{table}
where $s$ is a constant between $0.72$ and $1$. In each case, the sign of the first element of $\w_{k,\infty}$ is the same as $\sign{a_k}$.
\end{theorem}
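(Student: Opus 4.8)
The plan is to reduce to the three-atom dataset with points $\bar\mub_+,\bar\mub_-$ (label $+1$) and $\bar\mub_0=(-\mu_3,0)$ (label $-1$), and to use that under correlation loss the population gradient of \cref{prop:pop-grad} collapses (as $\sigma\to0$) to a \emph{piecewise-constant} field in the neuron direction. The three lines $\{\w^\top\bar\mub_+=0\}$, $\{\w^\top\bar\mub_-=0\}$, $\{w_1=0\}$ partition the circle into six angular sectors, and on each the update is the fixed vector $a\bar B$ with $\bar B=\tfrac14\bar\mub_+\,\mathbb{1}[\w^\top\bar\mub_+\!\ge\!0]+\tfrac14\bar\mub_-\,\mathbb{1}[\w^\top\bar\mub_-\!\ge\!0]+\tfrac12(\mu_3,0)\,\mathbb{1}[w_1\!\le\!0]$. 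First I would tabulate, sector by sector, the active set and hence $\bar B$, using the $x_2\mapsto -x_2$ symmetry to halve the work. Two observations are immediate and yield the ``sign rule'': the first coordinate of $\bar B$ is strictly positive on every sector, so $\sign{(a\bar B)_1}=\sign{a}$ throughout and $\sign{w_{k,\infty,1}}=\sign{a_k}$; and on the central sector around $+x_1$ (where both $\bar\mub_\pm$ are active and $\bar\mub_0$ is not) the update is exactly $\propto(1,0)$.

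The engine of the proof is a small-initialization lemma, in the spirit of the \cref{th:gauss-signgd-res} argument: since $\|\w_{k,0}\|$ is negligible next to the step size, the first discrete step dominates, throwing each neuron in one move to (essentially) the direction of the update vector of its initial sector. I then trace images: a neuron settles at a sector's direction only if that update direction lies inside the same sector; otherwise it is ejected into the sector containing the update direction, and one recurses. For GD this is exactly where the two hypotheses on $\om$ enter. Writing $\alpha=\arctan\tfrac{2\om}{\om^2-1}$, so that $p=\tfrac12-\tfrac{\alpha}{\pi}$, the upper bound $\om^2<3+2\sqrt2$ is equivalent to $\alpha>\tfrac\pi4$, which places $\bar\mub_+$ strictly \emph{inside} the ``only-$\bar\mub_+$'' sector and makes it a stable attractor of angular measure $\alpha$; the lower bound $\om^2>1+\tfrac{2}{\sqrt3}$ is precisely the inequality $\arctan\tfrac{2\om}{3\om^2+1}<\arctan\tfrac{\om^2-1}{2\om}$, i.e.\ the update direction of the ``$\bar\mub_+$-and-$\bar\mub_0$'' sector lands in the central both-active sector, so those neurons (and, symmetrically, the $\bar\mub_0$-only and fully-inactive sectors) are absorbed into the $(1,0)$ attractor. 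Collecting angular measures gives an $[1,0]$ basin of size $2\pi-2\alpha$ and $\bar\mub_\pm$ basins of size $\alpha$ each; multiplying by $\Pr[a_k>0]=\tfrac12$ and letting $m\to\infty$ so empirical neuron fractions concentrate on these measures reproduces the GD column via $\alpha=\pi(\tfrac12-p)$.

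For signGD the bookkeeping is cleaner since updates take values in $\{(\pm1,\pm1),(\pm1,0)\}$: the second coordinate vanishes exactly on the both-active and fully-inactive sectors and equals $\pm1$ on the single-$\bar\mub$ sectors. The key conceptual point — and the reason GD is ``more linear'' than signGD — is that in the ``$\bar\mub_+$-and-$\bar\mub_0$'' sector the large horizontal $\bar\mub_0$-term makes the GD update nearly horizontal (hence routed to $[1,0]$), whereas sign-flattening equalizes magnitudes so the same sector updates diagonally to $\tfrac1{\sqrt2}[1,1]$. The only subtlety is the inactive sector near $-x_1$, where a $(+1,0)$ update must be shown, using small initialization, to jump \emph{over} the thin single-$\bar\mub$ slivers directly into the both-active region (rather than being diverted to a diagonal), so that it also converges to $[1,0]$. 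This yields an $[1,0]$ basin of size $2\pi-4\alpha$ and $\tfrac1{\sqrt2}[1,\pm1]$ basins of size $2\alpha$ each, matching the signGD column.

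The genuinely hard part, which I would attempt last, is the Adam ($\beta_1=\beta_2\approx1$) column. Here the per-coordinate second-moment normalization makes the effective step $\Mh_t\odot(\Vh_t)^{\circ-1/2}$ rather than a pure sign, so in a sector where the two gradient coordinates sit at a fixed but unequal ratio the normalized update converges to a tilted direction $\tfrac{1}{\sqrt{s^2+1}}[s,\pm1]$ whose slope $s$ is fixed by a ratio/fixed-point relation among the accumulated $\M_t,\V_t$; the plan is to show these accumulators equilibrate sector-by-sector in the $\beta\to1$, $\epsilon\to0$ limit, solve for $s$, and bound it numerically in $[0.72,1]$. Relative to signGD, only the ``$\bar\mub_+$-and-$\bar\mub_0$'' (and mirror) sector is re-routed — from the exact diagonal to $[s,\pm1]$ — which splits each diagonal mass $\tfrac14-\tfrac p2$ into the two equal pieces $\tfrac18-\tfrac p4$ (the $\bar\mub_+$-only and $\bar\mub_+\bar\mub_0$ sectors both having measure $\alpha$). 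Controlling these coupled momentum/second-moment recursions across sector transitions, and verifying stability of the new tilted attractors, is the main obstacle; once the small-initialization jump lemma is in place, the GD and signGD parts reduce to a finite, if delicate, case analysis.
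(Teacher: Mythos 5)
Your GD and signGD analysis is essentially the paper's own proof in different clothing: the paper likewise partitions initial directions into the six activation sectors of the piecewise-constant population field (\cref{tab:init-grad-toy}), uses the small-initialization condition so that the first step throws each neuron onto its sector's update direction, and then traces sector-to-sector transitions (its second-iteration tables are your ``trace images'' step). Your reading of the two hypotheses on $\om$ is exactly what the paper encodes in its norm bound: $4\om^2-(\om^2-1)^2>0$ (i.e.\ $\om^2<3+2\sqrt{2}$, your $\alpha>\pi/4$) keeps $\bar{\mub}_\pm$ inside their own slivers, and $(3\om^2+1)(\om^2-1)-4\om^2>0$ (i.e.\ $\om^2>1+\tfrac{2}{\sqrt{3}}$) routes the mixed-sector GD update into the central sector, and your angular-measure bookkeeping reproduces the GD and signGD columns. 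One small slip: there is no ``fully-inactive'' sector --- the three half-planes $\{w_1\le 0\}$, $\{\w^\top\bar{\mub}_+\ge 0\}$, $\{\w^\top\bar{\mub}_-\ge 0\}$ cover the circle, so the cone around the $-x_1$ axis is exactly the $\bar{\mub}_0$-only sector; since you assign it the update $\propto(1,0)$ anyway, your basin sizes come out right.

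The genuine gap is your mechanism for the Adam ($\beta_1=\beta_2\approx 1$) tilt. You claim that in a sector where the two gradient coordinates sit at a fixed but unequal ratio the normalized update converges to a tilted direction, with $s$ solved from a fixed point of equilibrated accumulators. This step would fail: if the gradient is a \emph{constant} vector $\gb$ on a sector, then with $\beta\to 1$ the bias-corrected moments are running averages, $\Mh_t\to\gb$ and $\Vh_t\to\gb\odot\gb$, so $\Mh_t\odot\Vh_t^{\circ -1/2}\to\sign{\gb}$ coordinatewise \emph{regardless} of the ratio of the coordinates; your equilibration argument therefore forces $s=1$ and collapses the Adam column onto the signGD column. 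In the paper the tilt is a \emph{history} effect across the single sector transition: a neuron with $a_k>0$ started in the sector where both $\bar{\mub}_+$ and $\bar{\mub}_0$ are active sees first gradient $\propto[\tfrac{3\om^2+1}{2\om},1]^\top$ and, after the small-initialization jump, $\propto[\tfrac{\om^2-1}{2\om},1]^\top$; full-history averaging then makes the first coordinate of the step at time $\tau$ equal to $f_r(\tau)=\tfrac{\tau-1+r}{\sqrt{\tau(\tau-1+r^2)}}$ with the cross-transition ratio $r=\tfrac{3\om^2+1}{\om^2-1}$, while the second coordinate stays at $1$ (\cref{tab:toy-adam}); \cref{lem:sqrt-frac-sum} gives $f_r(\tau)\in[c(r),1]$ with $c(r)\ge 0.72$ on the relevant range of $r$, whence $s\in[0.72,1]$. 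You are also missing the companion ingredient for the $a_k<0$ neurons: after their transition the second-coordinate gradient is zero, so only the stale first-step gradient feeds that coordinate, the per-step update there decays like $1/\sqrt{\tau}$, and by \cref{lem:sqrt-sum} its accumulated contribution is $O(\sqrt{t})$ --- which is precisely why those neurons still normalize to $[-1,0]^\top$ in $\lim_{t\to\infty}\w_{k,t}/t$. Without these two ingredients --- the changed gradient across the transition driving the tilt, and the $1/\sqrt{\tau}$ decay killing stale coordinates --- the Adam column cannot be derived as you propose. (As a side remark, since $f_r(\tau)\to 1$, even the paper only certifies the two-sided bound on $s$ by bounding each step rather than computing an exact tilted limit; but any correct derivation must go through these per-step cross-transition ratios, not a within-sector fixed point.)
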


\begin{proof} 
Let $z:=(\xb,y)$, and $\zb_1:=-\tfrac{\mu}{2}[\om+\tfrac{1}{\om}, 0]$, $\zb_2:=\tfrac{\mu}{2}[\om-\tfrac{1}{\om}, 2]$, $\zb_3:=\tfrac{\mu}{2}[\om-\tfrac{1}{\om}, -2]$. Define three sets \( S_1, S_2, S_3 \) as $S_1 := \{ z \in S : x_1 < 0 \}$, $S_2 := \{ z \in S : x_2 > 0 \}$, $S_3 := \{ z \in S : x_2 < 0 \}$.  

\textbf{First iteration.} We first analyze the gradients at the first iteration. Consider different cases where \( \w_{k,0}^\top \xb \geq 0 \) depending on different samples $\xb$. \cref{tab:init-grad-toy} lists the population gradients depending on which samples contribute to the gradient. See \cref{fig:toy-proof} for an illustration. Note that $\theta=\tan^{-1}\tfrac{\mu_2}{\mu_1}=\tan^
    {-1}\tfrac{2\om}{\om^2-1}$, and $\tfrac{\pi}{2}-\theta=\tan^
    {-1}\tfrac{\om^2-1}{2\om}$. 

\begin{minipage}[b]{0.55\linewidth}
\centering
\resizebox{\linewidth}{!}{%
    \begin{tabular}{ccc}
        \hline
        Set $S$ s.t. $\w_{k,0}^\top \xb > 0$ & Pop. Gradient $\mathbb{E}_{z \sim \mathcal{D}}[y \xb | \xb \in S]$ & Prob. of such $\w_k$\\ 
        \hline
        $S_2 \cup S_3$ & \(  \tfrac{1}{2}[\mu_1, 0]^\top \) & \( \tfrac{\tan^{-1}\tfrac{\om^2-1}{2\om}}{\pi} \) \\ 
        $ S_2 $ & \(\tfrac{1}{4} [\mu_1, \mu_2]^\top \) & \( \tfrac{\tfrac{\pi}{2}-\tan^{-1}\tfrac{\om^2-1}{2\om}}{2\pi} \) \\ 
        \(  S_1 \cup S_2 \) & \( \tfrac{1}{4}\left[\mu_1+2\mu_3, \mu_2\right]^\top \) & \( \tfrac{\tfrac{\pi}{2}-\tan^{-1}\tfrac{\om^2-1}{2\om}}{2\pi} \) \\ 
        \(  S_1 \) & \( \tfrac{1}{2}[\mu_3, 0]^\top \) & \( \tfrac{\tan^{-1}\tfrac{\om^2-1}{2\om}}{\pi} \) \\ 
        \( S_3 \cup S_1 \) & \(  \tfrac{1}{4}\left[\mu_1+2\mu_3, -\mu_2\right]^\top \) & \( \tfrac{\tfrac{\pi}{2}-\tan^{-1}\tfrac{\om^2-1}{2\om}}{2\pi} \) \\ 
        \( S_3 \) & \(  \tfrac{1}{4}[\mu_1, -\mu_2]^\top \) & \( \tfrac{\tfrac{\pi}{2}-\tan^{-1}\tfrac{\om^2-1}{2\om}}{2\pi} \) \\
        \hline
    \end{tabular}}
\captionof{table}{Population gradients and corresponding probabilities depending on the region of initialization of the neurons.}
    \label{tab:init-grad-toy}
\end{minipage}   
\hfill    
\begin{minipage}[b]{0.38\linewidth}
\centering
    \includegraphics[width=0.65\linewidth]{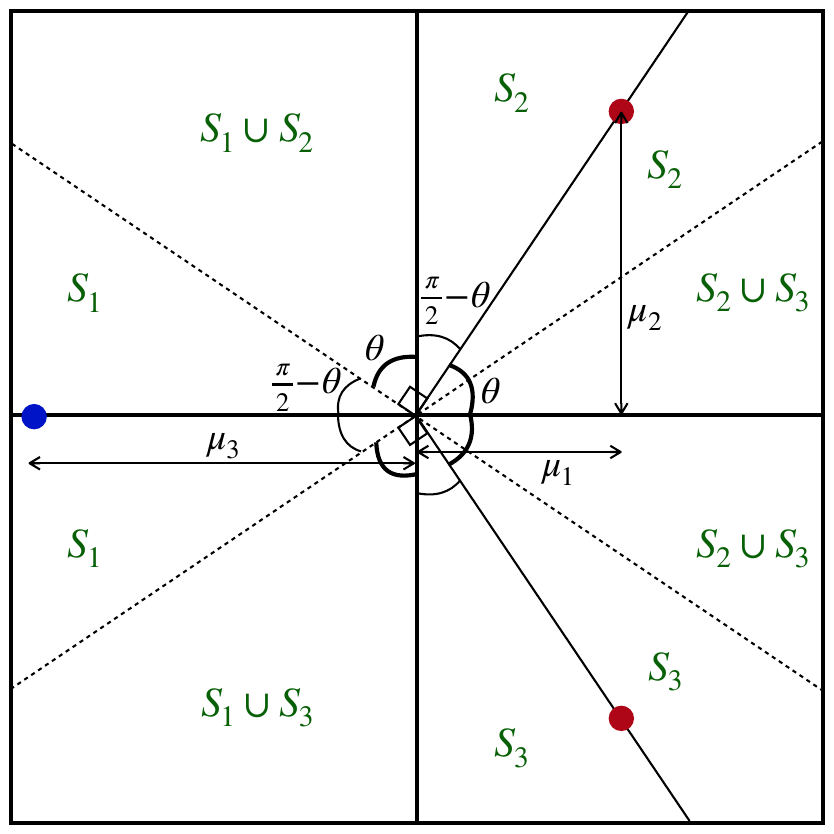}
    \captionof{figure}{An illustration of the toy dataset and the set $S$ such that $\w_{k,0}^\top \xb > 0$ depending on the region of initialization. 
    }
    \label{fig:toy-proof}
\end{minipage}

Using the population gradients in \cref{tab:init-grad-toy}, the updates for the different algorithms, are written as:
\begin{align*}
\text{GD:} \quad  
\w_{k,1}=\w_{k,0}+\frac{a_k\eta\mu}{4}\begin{cases}
[\tfrac{\om^2-1}{\om}, 0]^\top & \text{w.p. } \tfrac{\tan^{-1}\tfrac{\om^2-1}{2\om}}{\pi} \\
[\tfrac{\om^2+1}{\om}, 0]^\top & \text{w.p. } \tfrac{\tan^{-1}\tfrac{\om^2-1}{2\om}}{\pi} \\
\left[\tfrac{\om^2-1}{2\om}, 1\right] & \text{w.p. } \tfrac{1}{4}-\tfrac{\tan^{-1}\tfrac{\om^2-1}{2\om}}{2\pi} \\
\left[\tfrac{\om^2-1}{2\om}, -1\right] & \text{w.p. } \tfrac{1}{4}-\tfrac{\tan^{-1}\tfrac{\om^2-1}{2\om}}{2\pi} \\
[\tfrac{3\om^2+1}{2\om}, 1]^\top & \text{w.p. } \tfrac{1}{4}-\tfrac{\tan^{-1}\tfrac{\om^2-1}{2\om}}{2\pi} \\
[\tfrac{3\om^2+1}{2\om}, -1]^\top & \text{w.p. } \tfrac{1}{4}-\tfrac{\tan^{-1}\tfrac{\om^2-1}{2\om}}{2\pi}
\end{cases},
\end{align*}
\begin{align*}
\text{signGD/Adam:} \quad  
\w_{k,1}=\w_{k,0}+a_k\eta \begin{cases}
[1, 0]^\top & \text{w.p. } 2\tfrac{\tan^{-1}\tfrac{\om^2-1}{2\om}}{\pi} \\
[1, 1]^\top & \text{w.p. } \tfrac{1}{2}-\tfrac{\tan^{-1}\tfrac{\om^2-1}{2\om}}{\pi} \\
[1, -1]^\top & \text{w.p. } \tfrac{1}{2}-\tfrac{\tan^{-1}\tfrac{\om^2-1}{2\om}}{\pi}
\end{cases}.
\end{align*}

\textbf{Second iteration.} Next, we use these updates to analyze the second iteration. \cref{tab:toy-gd,tab:toy-signgd,tab:toy-adam} include the updates at the second iteration for GD, signGD and Adam, respectively, where we use the conditions on $\om$ and the (small) initialization scale. Specifically, the small initialization scale helps ensure that the gradient and the corresponding updated neuron are in the same region (in terms of which samples contribute to the gradient for the next iteration). Using the condition on $\om$, the updates in rows 5 and 7 of \cref{tab:toy-gd} remain in the direction of the points $\zb_2$ and $\zb_3$, respectively, whereas those in rows 9 and 11 get along the direction of $[1,0]^\top$. 

\begin{table}[h!]
    \centering
    \begin{tabular}{cccc}
        \hline
        $4(\w_{k,1}-\w_{k,0})/(\eta\mu)$ & Prob. & Set $S$ s.t. $\w_{k,1}^\top \xb > 0$ & $4\sign{a_k}\mathbb{E}_{z \sim \mathcal{D}}[\indi{\w_{k,1}^\top\xb\geq 0}y \xb ]/\mu$  \\ 
        \hline
    $[\tfrac{\om^2-1}{\om},0]^\top$ & $\tfrac{\tan^{-1}\tfrac{\om^2-1}{2\om}}{2\pi}$ & $S_2\cup S_3$ &  $[\tfrac{\om^2-1}{\om}, 0]^\top$\\
    $-[\tfrac{\om^2-1}{\om},0]^\top$ & $\tfrac{\tan^{-1}\tfrac{\om^2-1}{2\om}}{2\pi}$ & $S_1$ &  $-[\tfrac{\om^2+1}{\om}, 0]^\top$\\
     $[\tfrac{\om^2+1}{\om},0]^\top$ & $\tfrac{\tan^{-1}\tfrac{\om^2-1}{2\om}}{2\pi}$ & $S_2\cup S_3$ &  $[\tfrac{\om^2-1}{\om}, 0]^\top$\\
     $-[\tfrac{\om^2+1}{\om},0]^\top$ & $\tfrac{\tan^{-1}\tfrac{\om^2-1}{2\om}}{2\pi}$ & $S_1$ &  $-[\tfrac{\om^2+1}{\om}, 0]^\top$\\
    $\left[\tfrac{\om^2-1}{2\om}, 1\right]$ & $\tfrac{\tfrac{\pi}{2}-\tan^{-1}\tfrac{\om^2-1}{2\om}}{4\pi}$ & $S_2$ &  $[\tfrac{\om^2-1}{2\om}, 1]^\top$\\
    $-\left[\tfrac{\om^2-1}{2\om}, 1\right]$ & $\tfrac{\tfrac{\pi}{2}-\tan^{-1}\tfrac{\om^2-1}{2\om}}{4\pi}$ & $S_1$ &  $-[\tfrac{\om^2+1}{\om}, 0]^\top$\\
    $\left[\tfrac{\om^2-1}{2\om}, -1\right]$ & $\tfrac{\tfrac{\pi}{2}-\tan^{-1}\tfrac{\om^2-1}{2\om}}{4\pi}$ & $S_3$ &  $[\tfrac{\om^2-1}{2\om}, -1]^\top$\\
    $-\left[\tfrac{\om^2-1}{2\om}, -1\right]$ & $\tfrac{\tfrac{\pi}{2}-\tan^{-1}\tfrac{\om^2-1}{2\om}}{4\pi}$ & $S_1$ &  $-[\tfrac{\om^2+1}{\om}, 0]^\top$\\
    $[\tfrac{3\om^2+1}{2\om}, 1]^\top$ & $\tfrac{\tfrac{\pi}{2}-\tan^{-1}\tfrac{\om^2-1}{2\om}}{4\pi}$ & $S_2\cup S_3$ &  $[\tfrac{\om^2-1}{\om}, 0]^\top$\\
    $-[\tfrac{3\om^2+1}{2\om}, 1]^\top$ & $\tfrac{\tfrac{\pi}{2}-\tan^{-1}\tfrac{\om^2-1}{2\om}}{4\pi}$ & $S_1$ &  $-[\tfrac{\om^2+1}{\om}, 0]^\top$\\
    $[\tfrac{3\om^2+1}{2\om}, -1]^\top$ & $\tfrac{\tfrac{\pi}{2}-\tan^{-1}\tfrac{\om^2-1}{2\om}}{4\pi}$ & $S_2\cup S_3$ &  $[\tfrac{\om^2-1}{\om}, 0]^\top$\\
    $-[\tfrac{3\om^2+1}{2\om}, -1]^\top$ & $\tfrac{\tfrac{\pi}{2}-\tan^{-1}\tfrac{\om^2-1}{2\om}}{4\pi}$ & $S_1$ & $-[\tfrac{\om^2+1}{\om}, 0]^\top$\\
        \hline
    \end{tabular}
    \caption{Population gradients at the second iteration for GD.}
    \label{tab:toy-gd}
\end{table}

Based on the updates in \cref{tab:toy-gd}, we can write the GD iterate at any time $t>1$ as:
\begin{align*}  
\w_{k,t}=\w_{k,1}+\tfrac{\eta\mu (t-1)}{4}\begin{cases}
\left[\tfrac{\om^2-1}{\om}, 0\right]^\top & \text{w.p. } \tfrac{1}{4}+\tfrac{\tan^{-1}\tfrac{\om^2-1}{2\om}}{2\pi} \\
-\left[\tfrac{\om^2-1}{\om}, 0\right]^\top & \text{w.p. } \tfrac{1}{2} \\
\left[\tfrac{\om^2-1}{2\om}, 1\right]^\top & \text{w.p. } \tfrac{1}{8}-\tfrac{\tan^{-1}\tfrac{\om^2-1}{2\om}}{4\pi} \\
\left[\tfrac{\om^2-1}{2\om}, -1\right]^\top & \text{w.p. } \tfrac{1}{8}-\tfrac{\tan^{-1}\tfrac{\om^2-1}{2\om}}{4\pi} 
\end{cases}.
\end{align*}

\begin{table}[h!]
    \centering
    \begin{tabular}{cccc}
        \hline
        $\w_{k,1}-\w_{k,0}$ & Prob. &  $4\mathbb{E}_{z \sim \mathcal{D}}[\indi{\w_{k,1}^\top\xb\geq 0}y \xb ]/\mu$ & $\w_{k,2}-\w_{k,1}$ 
        \\ 
        \hline
    $\eta[1,0]^\top$ & $\tfrac{\tan^{-1}\tfrac{\om^2-1}{2\om}}{\pi}$  & $[\tfrac{\om^2-1}{\om}, 0]^\top$ & $\eta[1,0]^\top$ 
    \\
    $-\eta[1,0]^\top$ & $\tfrac{\tan^{-1}\tfrac{\om^2-1}{2\om}}{\pi}$ & $[\tfrac{\om^2+1}{\om}, 0]^\top$ & $-\eta[1,0]^\top$ 
    \\
    $\eta[1,1]^\top$ & $\tfrac{\tfrac{\pi}{2}-\tan^{-1}\tfrac{\om^2-1}{2\om}}{2\pi}$ & $ [\tfrac{\om^2-1}{2\om}, 1]^\top$ & $\eta[1,1]^\top$ 
    \\
    $-\eta[1,1]^\top$ & $\tfrac{\tfrac{\pi}{2}-\tan^{-1}\tfrac{\om^2-1}{2\om}}{2\pi}$ & $[\tfrac{\om^2+1}{\om}, 0]^\top$ & $-\eta[1,0]^\top$ 
    \\
    $\eta[1,-1]^\top$ & $\tfrac{\tfrac{\pi}{2}-\tan^{-1}\tfrac{\om^2-1}{2\om}}{2\pi}$ & $[\tfrac{\om^2-1}{2\om}, -1]^\top$ & $\eta[1,-1]^\top$ 
    \\
    $-\eta[1,-1]^\top$ & $\tfrac{\tfrac{\pi}{2}-\tan^{-1}\tfrac{\om^2-1}{2\om}}{2\pi}$ & $[\tfrac{\om^2+1}{\om}, 0]^\top$ & $-\eta[1,0]^\top$ 
    \\
        \hline
    \end{tabular}
    \caption{Population gradients at the second iteration for SignGD (Adam, $\beta_1=\beta_2=0$).}
    \label{tab:toy-signgd}
\end{table}
Based on the updates in \cref{tab:toy-signgd}, we can write the signGD iterate at any time $t$ as:
\begin{align*}  
\w_{k,t}=\w_{k,0}+\eta t\begin{cases}
[1, 0]^\top & \text{w.p. } \tfrac{\tan^{-1}\tfrac{\om^2-1}{2\om}}{\pi} \\
-[1, 0]^\top & \text{w.p. } \tfrac{\tan^{-1}\tfrac{\om^2-1}{2\om}}{\pi} \\
\left[1, 1\right]^\top & \text{w.p. } \tfrac{1}{4}-\tfrac{\tan^{-1}\tfrac{\om^2-1}{2\om}}{2\pi} \\
\left[1, -1\right]^\top & \text{w.p. } \tfrac{1}{4}-\tfrac{\tan^{-1}\tfrac{\om^2-1}{2\om}}{2\pi} \\
-[1, 1/t]^\top & \text{w.p. } \tfrac{1}{4}-\tfrac{\tan^{-1}\tfrac{\om^2-1}{2\om}}{2\pi} \\
-[1, -1/t]^\top & \text{w.p. } \tfrac{1}{4}-\tfrac{\tan^{-1}\tfrac{\om^2-1}{2\om}}{2\pi} \\
\end{cases}.
\end{align*}

\begin{table}[h!]
    \centering
    
    \begin{tabular}{ccccc}
        \hline
        $\w_{k,1}-\w_{k,0}$ & $4\gb_{k,0}/(\eta\mu)$ & Prob. & $4\mathbb{E}_{z \sim \mathcal{D}}[\indi{\w_{k,1}^\top\xb\geq 0}y \xb ]/\mu$ & $\w_{k,2}-\w_{k,1}$  \\ 
        \hline
    $\eta[1,0]^\top$ & $[\tfrac{\om^2-1}{\om},0]^\top$  & $\tfrac{\tan^{-1}\tfrac{\om^2-1}{2\om}}{2\pi}$ &  $[\tfrac{\om^2-1}{\om}, 0]^\top$ & $\eta[1,0]^\top$ \\
    $-\eta[1,0]^\top$ & $-[\tfrac{\om^2-1}{\om},0]^\top$ & $\tfrac{\tan^{-1}\tfrac{\om^2-1}{2\om}}{2\pi}$ & $[\tfrac{\om^2+1}{\om}, 0]^\top$ & $-\eta[\tfrac{1}{\sqrt{2}}\tfrac{2\om^2}{\sqrt{(\om^2-1)^2+(\om^2+1)^2}},0]^\top$ \\
    $\eta[1,0]^\top$ & $[\tfrac{\om^2+1}{\om},0]^\top$ & $\tfrac{\tan^{-1}\tfrac{\om^2-1}{2\om}}{2\pi}$ &  $[\tfrac{\om^2-1}{\om}, 0]^\top$ & $\eta[\tfrac{1}{\sqrt{2}}\tfrac{2\om^2}{\sqrt{(\om^2+1)^2+(\om^2-1)^2}},0]^\top$ \\
    $-\eta[1,0]^\top$ & $-[\tfrac{\om^2+1}{\om},0]^\top$ & $\tfrac{\tan^{-1}\tfrac{\om^2-1}{2\om}}{2\pi}$ & $[\tfrac{\om^2+1}{\om}, 0]^\top$ & $-\eta[1,0]^\top$ \\
    $\eta[1,1]^\top$ & $\left[\tfrac{\om^2-1}{2\om}, 1\right]$ & $\tfrac{\tfrac{\pi}{2}-\tan^{-1}\tfrac{\om^2-1}{2\om}}{4\pi}$  & $ [\tfrac{\om^2-1}{2\om}, 1]^\top$ & $\eta[1,1]^\top$\\
    $-\eta[1,1]^\top$ & $-\left[\tfrac{\om^2-1}{2\om}, 1\right]$ & $\tfrac{\tfrac{\pi}{2}-\tan^{-1}\tfrac{\om^2-1}{2\om}}{4\pi}$ & $[\tfrac{\om^2+1}{\om}, 0]^\top$ & $-\tfrac{\eta}{\sqrt{2}}[\tfrac{(\om^2-1)/2+(\om^2+1)}{\sqrt{\left((\om^2-1)/2\right)^2+(\om^2+1)^2}},1]^\top$\\
    $\eta[1,-1]^\top$ & $\left[\tfrac{\om^2-1}{2\om}, -1\right]$ & $\tfrac{\tfrac{\pi}{2}-\tan^{-1}\tfrac{\om^2-1}{2\om}}{4\pi}$ & $ [\tfrac{\om^2-1}{2\om}, -1]^\top$ & $\eta[1,-1]^\top$\\
    $-\eta[1,-1]^\top$ & $-\left[\tfrac{\om^2-1}{2\om}, -1\right]$ & $\tfrac{\tfrac{\pi}{2}-\tan^{-1}\tfrac{\om^2-1}{2\om}}{4\pi}$ & $[\tfrac{\om^2+1}{\om}, 0]^\top$ & $-\tfrac{\eta}{\sqrt{2}}[\tfrac{(\om^2-1)/2+(\om^2+1)}{\sqrt{\left((\om^2-1)/2\right)^2+(\om^2+1)^2}},-1]^\top$\\
    $\eta[1,1]^\top$ & $[\tfrac{3\om^2+1}{2\om}, 1]^\top$ & $\tfrac{\tfrac{\pi}{2}-\tan^{-1}\tfrac{\om^2-1}{2\om}}{4\pi}$ &  $ [\tfrac{\om^2-1}{2\om}, 1]^\top$ & $\eta[\tfrac{1}{\sqrt{2}}\tfrac{4\om^2}{\sqrt{(3\om^2+1)^2+(\om^2-1)^2}},1]^\top$\\
    $-\eta[1,1]^\top$ & $-[\tfrac{3\om^2+1}{2\om}, 1]^\top$ & $\tfrac{\tfrac{\pi}{2}-\tan^{-1}\tfrac{\om^2-1}{2\om}}{4\pi}$ & $[\tfrac{\om^2+1}{\om}, 0]^\top$ & $-\tfrac{\eta}{\sqrt{2}}[1\tfrac{2.5\om^2+1.5}{\sqrt{\left((3\om^2+1)/2\right)^2+(\om^2+1)^2}},1]^\top$\\
    $\eta[1,-1]^\top$ & $[\tfrac{3\om^2+1}{2\om}, -1]^\top$ & $\tfrac{\tfrac{\pi}{2}-\tan^{-1}\tfrac{\om^2-1}{2\om}}{4\pi}$ &  $ [\tfrac{\om^2-1}{2\om}, -1]^\top$ & $\eta[\tfrac{1}{\sqrt{2}}\tfrac{4\om^2}{\sqrt{(3\om^2+1)^2+(\om^2-1)^2}},-1]^\top$\\
    $-\eta[1,-1]^\top$ & $-[\tfrac{3\om^2+1}{2\om}, -1]^\top$ & $\tfrac{\tfrac{\pi}{2}-\tan^{-1}\tfrac{\om^2-1}{2\om}}{4\pi}$ & $[\tfrac{\om^2+1}{\om}, 0]^\top$ & $-\tfrac{\eta}{\sqrt{2}}[\tfrac{2.5\om^2+1.5}{\sqrt{\left((3\om^2+1)/2\right)^2+(\om^2+1)^2}},-1]^\top$\\
        \hline
    \end{tabular}
    \caption{Population gradients at the second iteration for Adam, $\beta_1=\beta_2\approx 1$.
    }
    \label{tab:toy-adam}
\end{table}

Based on the updates in \cref{tab:toy-adam}, we can write the Adam iterate at any time $t$ as follows:
\begin{align*}   
\w_{k,t}=\w_{k,0}+\eta \sum_{\tau=1}^{t}\begin{cases}
\left[1, 0\right]^\top & \text{w.p. } \tfrac{\tan^{-1}\tfrac{\om^2-1}{2\om}}{2\pi} \\
\tfrac{1}{\sqrt{\tau}}\left[-\tfrac{\om^2-1+(\tau-1)(\om^2+1)}{\sqrt{(\om^2-1)^2+(\tau-1)(\om^2+1)^2}}, 0\right]^\top & \text{w.p. } \tfrac{\tan^{-1}\tfrac{\om^2-1}{2\om}}{2\pi} \\
\tfrac{1}{\sqrt{\tau}}\left[\tfrac{\om^2+1+(\tau-1)(\om^2-1)}{\sqrt{(\om^2+1)^2+(\tau-1)(\om^2-1)^2}}, 0\right]^\top & \text{w.p. } \tfrac{\tan^{-1}\tfrac{\om^2-1}{2\om}}{2\pi} \\
\left[-1, 0\right]^\top & \text{w.p. } \tfrac{\tan^{-1}\tfrac{\om^2-1}{2\om}}{2\pi} \\
\left[1,1\right]^\top & \text{w.p. } \tfrac{1}{8}-\tfrac{\tan^{-1}\tfrac{\om^2-1}{2\om}}{4\pi}  \\
\tfrac{-1}{\sqrt{\tau}}\left[\tfrac{(\om^2-1)/2+(\tau-1)(\om^2+1)}{\sqrt{((\om^2-1)/2)^2+(\tau-1)(\om^2+1)^2}}, 1\right]^\top & \text{w.p. } \tfrac{1}{8}-\tfrac{\tan^{-1}\tfrac{\om^2-1}{2\om}}{4\pi}  \\
\left[1,-1\right]^\top & \text{w.p. } \tfrac{1}{8}-\tfrac{\tan^{-1}\tfrac{\om^2-1}{2\om}}{4\pi}  \\
\tfrac{-1}{\sqrt{\tau}}\left[\tfrac{(\om^2-1)/2+(\tau-1)(\om^2+1)}{\sqrt{((\om^2-1)/2)^2+(\tau-1)(\om^2+1)^2}}, -1\right]^\top & \text{w.p. } \tfrac{1}{8}-\tfrac{\tan^{-1}\tfrac{\om^2-1}{2\om}}{4\pi}  \\
\left[\tfrac{1}{\sqrt{\tau}}\tfrac{3\om^2+1+(\tau-1)(\om^2-1)}{\sqrt{(3\om^2+1)^2+(\tau-1)(\om^2-1)^2}}, 1\right]^\top & \text{w.p. } \tfrac{1}{8}-\tfrac{\tan^{-1}\tfrac{\om^2-1}{2\om}}{4\pi}  \\
\tfrac{-1}{\sqrt{\tau}}\left[\tfrac{(3\om^2+1)/2+(\tau-1)(\om^2+1)}{\sqrt{((3\om^2+1)/2)^2+(\tau-1)(\om^2+1)^2}}, 1\right]^\top  & \text{w.p. } \tfrac{1}{8}-\tfrac{\tan^{-1}\tfrac{\om^2-1}{2\om}}{4\pi}  \\
\left[\tfrac{1}{\sqrt{\tau}}\tfrac{3\om^2+1+(\tau-1)(\om^2-1)}{\sqrt{(3\om^2+1)^2+(\tau-1)(\om^2-1)^2}}, -1\right]^\top & \text{w.p. } \tfrac{1}{8}-\tfrac{\tan^{-1}\tfrac{\om^2-1}{2\om}}{4\pi}  \\
\tfrac{-1}{\sqrt{\tau}}\left[\tfrac{(3\om^2+1)/2+(\tau-1)(\om^2+1)}{\sqrt{((3\om^2+1)/2)^2+(\tau-1)(\om^2+1)^2}}, -1\right]^\top & \text{w.p. } \tfrac{1}{8}-\tfrac{\tan^{-1}\tfrac{\om^2-1}{2\om}}{4\pi} 
\end{cases}
\end{align*}

\textbf{$t\rightarrow\infty$ iterations.} Based on the analysis above, we can compute $\lim_{t\rightarrow\infty}\tfrac{\w_{k,t}}{t}$ for each algorithm. 

For GD, we have:
\begin{align*}
    \lim_{t\rightarrow\infty}\tfrac{\w_{k,t}}{t}=\tfrac{\eta\mu}{4}\begin{cases}
\left[\tfrac{\om^2-1}{\om}, 0\right]^\top & \text{w.p. } \tfrac{1}{4}+\tfrac{\tan^{-1}\tfrac{\om^2-1}{2\om}}{2\pi} \\
-\left[\tfrac{\om^2-1}{\om}, 0\right]^\top & \text{w.p. } \tfrac{1}{2} \\
\left[\tfrac{\om^2-1}{2\om}, 1\right]^\top & \text{w.p. } \tfrac{1}{8}-\tfrac{\tan^{-1}\tfrac{\om^2-1}{2\om}}{4\pi} \\
\left[\tfrac{\om^2-1}{2\om}, -1\right]^\top & \text{w.p. } \tfrac{1}{8}-\tfrac{\tan^{-1}\tfrac{\om^2-1}{2\om}}{4\pi} 
\end{cases}.
\end{align*}

For Adam with $\beta=0$ or signGD, we have:
\begin{align*}  
\lim_{t\rightarrow\infty}\tfrac{\w_{k,t}}{t}=\eta\begin{cases}
[1, 0]^\top & \text{w.p. } \tfrac{\tan^{-1}\tfrac{\om^2-1}{2\om}}{\pi} \\
[-1, 0]^\top & \text{w.p. } \tfrac{1}{2} \\
\left[1, 1\right]^\top & \text{w.p. } \tfrac{1}{4}-\tfrac{\tan^{-1}\tfrac{\om^2-1}{2\om}}{2\pi} \\
\left[1, -1\right]^\top & \text{w.p. } \tfrac{1}{4}-\tfrac{\tan^{-1}\tfrac{\om^2-1}{2\om}}{2\pi}
\end{cases}.
\end{align*}

For Adam with $\beta\approx 1$, using the results in \cref{sec:aux-res}, we have:
\begin{align*}  
\lim_{t\rightarrow\infty}\tfrac{\w_{k,t}}{t}=\eta \begin{cases}
\left[1, 0\right]^\top & \text{w.p. } \tfrac{\tan^{-1}\tfrac{\om^2-1}{2\om}}{2\pi} \\
\left[-m_1, 0\right]^\top & \text{w.p. } \tfrac{\tan^{-1}\tfrac{\om^2-1}{2\om}}{2\pi} \\
\left[m_2, 0\right]^\top & \text{w.p. } \tfrac{\tan^{-1}\tfrac{\om^2-1}{2\om}}{2\pi} \\
\left[-1, 0\right]^\top & \text{w.p. } \tfrac{\tan^{-1}\tfrac{\om^2-1}{2\om}}{2\pi} \\
\left[1,1\right]^\top & \text{w.p. } \tfrac{1}{8}-\tfrac{\tan^{-1}\tfrac{\om^2-1}{2\om}}{4\pi}  \\
\left[1,-1\right]^\top & \text{w.p. } \tfrac{1}{8}-\tfrac{\tan^{-1}\tfrac{\om^2-1}{2\om}}{4\pi}  \\
\left[-m_3, 0\right]^\top & \text{w.p. } \tfrac{1}{4}-\tfrac{\tan^{-1}\tfrac{\om^2-1}{2\om}}{2\pi}  \\
\left[m_4, 1\right]^\top & \text{w.p. } \tfrac{1}{8}-\tfrac{\tan^{-1}\tfrac{\om^2-1}{2\om}}{4\pi}  \\
\left[m_4, -1\right]^\top & \text{w.p. } \tfrac{1}{8}-\tfrac{\tan^{-1}\tfrac{\om^2-1}{2\om}}{4\pi}  \\
\left[-m_5, 0\right]^\top  & \text{w.p. } \tfrac{1}{4}-\tfrac{\tan^{-1}\tfrac{\om^2-1}{2\om}}{2\pi}  
\end{cases},
\end{align*}
where $m_1,\dots,m_5$ are constants that satisfy $0.935\leq m_1\leq 1$, $0.923\leq m_2\leq 1$, $0.84\leq m_3\leq 1$, $0.72\leq m_4\leq 1$, $0.98\leq m_5\leq 1$. Taking $s=m_4$ and normalizing each direction then finishes the proof.

\end{proof}
\subsection{Auxiliary Results}
\label{sec:aux-res}

\begin{lemma}\label{lem:sqrt-frac-sum} Given a constant $r>0$ and function $f_r(x)=\tfrac{x-1+r}{\sqrt{x(x-1+r^2)}}$, where $x\geq 1$, it holds that $f'_r(x)\geq 0$ when $x\geq 1+r$. Further, when $x\in\mathds{N}$, the minima occurs at either $x=1+\floor{r}$ or $x=2+\floor{r}$, and it holds that: 
\begin{align*}
    \min\left(\tfrac{\floor{r}+r}{\sqrt{(1+\floor{r})(\floor{r}+r^2)}}, \tfrac{1+\floor{r}+r}{\sqrt{(2+\floor{r})(1+\floor{r}+r^2)}}\right)\leq f_r(x) \leq 1.
\end{align*}
\end{lemma}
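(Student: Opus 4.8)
The plan is to reduce everything to the square $F(x) := f_r(x)^2 = \frac{(x-1+r)^2}{x(x-1+r^2)}$. On the domain $x\geq 1$ with $r>0$, the numerator factor $x-1+r\geq r$ and the denominator factors $x$ and $x-1+r^2\geq r^2$ are all strictly positive, so $f_r(x)>0$; hence $f_r$ and $F$ have derivatives of the same sign, and $f_r(x)\leq 1 \iff F(x)\leq 1$. Passing to the rational function $F$ eliminates the square root and turns the whole lemma into polynomial algebra.

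First I would differentiate $F$ by the quotient rule and factor the numerator. Writing it as $u'v-uv'$ with $u=(x-1+r)^2$ and $v=x(x-1+r^2)$, pulling out the common factor $(x-1+r)$ and simplifying the remaining bracket should collapse it to
\begin{equation*}
F'(x) = \frac{(x-1+r)\,(r-1)^2\,(x-1-r)}{\big(x(x-1+r^2)\big)^2}.
\end{equation*}
Since $x-1+r>0$, $(r-1)^2\geq 0$, and the denominator is positive, the sign of $F'$ equals the sign of $x-1-r$. This immediately gives $f_r'(x)\geq 0$ for $x\geq 1+r$ (the first claim), and in fact shows $f_r$ strictly decreases on $[1,1+r]$ and increases on $[1+r,\infty)$ (the degenerate case $r=1$ gives $F\equiv 1$), so $f_r$ is unimodal with continuous minimizer $x=1+r$.

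The integer claim then follows from unimodality: the minimizer over $x\in\mathds{N}$ must be one of the two integers bracketing $1+r$, namely $\floor{1+r}=1+\floor{r}$ and $\lceil 1+r\rceil = 2+\floor{r}$ (when $r\in\mathds{N}$ the minimizer is exactly $1+r=1+\floor{r}$, still covered by the first option). Evaluating $f_r$ at these two integers reproduces precisely the two expressions inside the $\min$, establishing the lower bound. For the upper bound I would check $F(x)\leq 1$ directly by expanding the difference,
\begin{equation*}
x(x-1+r^2) - (x-1+r)^2 = (r-1)^2(x-1) \geq 0 \qquad\text{for } x\geq 1,
\end{equation*}
so $F(x)\leq 1$ and hence $f_r(x)\leq 1$. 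The only real work is the two algebraic factorizations, for $F'$ and for this numerator--denominator difference; both telescope to a clean multiple of $(r-1)^2$, so the main thing to watch is careful bookkeeping of the $(r-1)$ factors and the degenerate case $r=1$, rather than any conceptual obstacle.
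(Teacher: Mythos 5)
Your proof is correct and follows exactly the route the paper indicates: the paper omits the argument, saying only that ``the result can be obtained by examining the derivative of $f_r(x)$,'' and your write-up is a complete version of that derivative analysis, with the squaring trick ($F=f_r^2$) a clean way to avoid the square root. Both of your key factorizations check out --- the numerator of $F'$ is indeed $(x-1+r)(r-1)^2(x-1-r)$ and $x(x-1+r^2)-(x-1+r)^2=(r-1)^2(x-1)$ --- and you correctly handle the degenerate case $r=1$ and the integer-minimizer bracketing, including $r\in\mathds{N}$.
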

The result can be obtained by examining the derivative of $f_r(x)$ with respect to $x$, so we omit the proof.

Further, given $r_1:=\tfrac{\om^2-1}{\om^2+1}$, $r_2:=\tfrac{\om^2+1}{\om^2-1}=1/r_1$, $r_3:=0.5r_1$, $r_4:=\tfrac{3\om^2+1}{\om^2-1}$, $r_5:=0.5\tfrac{3\om^2+1}{\om^2+1}$, and $\om\geq\tfrac{1+\sqrt{5}}{2}$, it holds that:
\begin{align*}
0.4472 \leq r_1 < 1, \quad 1 \leq r_2 < 2.236, \quad
0.2236 \leq r_3 < 0.5, \quad
3 \leq r_4 \leq 5.4721, \quad
1.2236 \leq r_5 \leq 1.5.
\end{align*}
Alternately, for a specific value of $\om$, we can compute these exactly. For instance, when $\om=2$, $r_1 = 0.6$, $r_2 \approx 1.6667$, $r_3 = 0.3$, $r_4 \approx 4.3333$, $r_5 = 1.3$.

Also, we can simplify the lower bound on $f_r(x)$ as follows:
\begin{align*} c(r):=\min\left( \tfrac{\lfloor r \rfloor + r}{\sqrt{(1 + \lfloor r \rfloor)(\lfloor r \rfloor + r^2)}}, \tfrac{1 + \lfloor r \rfloor + r}{\sqrt{(2 + \lfloor r \rfloor)(1 + \lfloor r \rfloor + r^2)}} \right) = \begin{cases} 
\tfrac{1 + r}{\sqrt{2(1 + r^2)}}, & 0 < r < 1 \\ 
\tfrac{2 + r}{\sqrt{3(2 + r^2)}}, & 1 < r < 2 \\
\tfrac{3 + r}{\sqrt{4(3 + r^2)}}, & 2 < r < 2\sqrt{3} \\
\tfrac{4 + r}{\sqrt{5(4 + r^2)}}, & 2\sqrt{3} < r < 4 \\
\tfrac{5 + r}{\sqrt{6(5 + r^2)}}, & 4 < r < 5 \\
\tfrac{6 + r}{\sqrt{7(6 + r^2)}}, & 5 < r < 6 \end{cases}.
\end{align*}
We can use this to obtain the exact lower bounds for the aforementioned intervals:

\begin{align*}
\min_r c(r) \approx
\begin{cases}
    0.9354, & 0.4472 \leq r < 1, \\
    0.9238, & 1 \leq r < 2.236, \\
    0.8443, & 0.2236 \leq r < 0.5, \\
    0.7240, & 3 \leq r \leq 5.4721, \\
    0.9802, & 1.2236 \leq r \leq 1.5.
\end{cases}
\end{align*}

When $\om=2$, $c(r_1) \approx 0.9713$, $c(r_2) \approx 0.9681$, $c(r_3) \approx 0.8803$, $c(r_4) \approx 0.7808$, $c(r_5) \approx 0.9936$.

\begin{lemma}\label{lem:sqrt-sum} The sum $f(x):=\sum_{\tau=1}^x\tfrac{1}{\sqrt{\tau}}$ satisfies $2\sqrt{x}-2\leq f(x)\leq 2\sqrt{x}-1$.
\end{lemma}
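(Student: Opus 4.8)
The plan is to bound the sum $f(x) = \sum_{\tau=1}^{x} \tau^{-1/2}$ by comparing it against the integral of $t \mapsto t^{-1/2}$, which is positive and strictly decreasing on $(0,\infty)$. This is the standard integral-test sandwich, and both inequalities follow once the direction of comparison is chosen correctly; there is no deep obstacle here, only the bookkeeping of which endpoints to integrate over.

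For the \textbf{upper bound}, I would use that decreasingness gives, for every integer $\tau \geq 2$, the termwise estimate $\tau^{-1/2} \leq \int_{\tau-1}^{\tau} t^{-1/2}\,dt$. Summing this over $\tau = 2, \dots, x$ and treating the $\tau = 1$ term (which equals $1$) separately yields
\[ f(x) \;\leq\; 1 + \int_{1}^{x} t^{-1/2}\,dt \;=\; 1 + \bigl(2\sqrt{x} - 2\bigr) \;=\; 2\sqrt{x} - 1, \]
which is exactly the claimed upper bound.

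For the \textbf{lower bound}, I would use the opposite comparison: decreasingness gives, for every integer $\tau \geq 1$, that $\tau^{-1/2} \geq \int_{\tau}^{\tau+1} t^{-1/2}\,dt$. Summing over $\tau = 1, \dots, x$ telescopes the integrals into a single one, giving
\[ f(x) \;\geq\; \int_{1}^{x+1} t^{-1/2}\,dt \;=\; 2\sqrt{x+1} - 2 \;\geq\; 2\sqrt{x} - 2, \]
where the last step merely drops the $+1$ under the square root.

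The one point I would flag—and the reason I prefer the integral comparison over a direct induction on $x$—is the lower bound. A naive tight inductive step would demand $\tau^{-1/2} \geq 2(\sqrt{\tau+1} - \sqrt{\tau}) = 2/(\sqrt{\tau+1} + \sqrt{\tau})$, i.e.\ $\sqrt{\tau} \geq \sqrt{\tau+1}$, which is false, so the induction stalls at the boundary of the estimate. The integral argument avoids this entirely by first producing the slightly stronger intermediate bound $2\sqrt{x+1} - 2$ and only then relaxing it to $2\sqrt{x} - 2$. With that observation in place the lemma is a routine consequence of monotonicity, so I would expect no genuine difficulty in carrying it out.
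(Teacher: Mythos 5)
Your proof is correct and matches the paper's argument essentially verbatim: both bound the sum by the same integral comparisons, obtaining $f(x)\leq 1+\int_1^x t^{-1/2}\,dt=2\sqrt{x}-1$ and $f(x)\geq\int_1^{x+1}t^{-1/2}\,dt=2\sqrt{x+1}-2$, and then relax via $\sqrt{x+1}\geq\sqrt{x}$. Your termwise justification and the remark about why naive induction stalls are sound additions but do not change the route.
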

\begin{proof}
    To establish the bounds for \( f(x) \), we can compare the sum to the corresponding integral. We have:
\[
    f(x) = \sum_{\tau=1}^x \frac{1}{\sqrt{\tau}} \geq \int_{1}^{x+1} \frac{1}{\sqrt{n}} \, dn = 2\sqrt{x+1} - 2,
    \]
    \[
    f(x) = \sum_{\tau=1}^x \frac{1}{\sqrt{\tau}} \leq 1 + \int_{1}^{x} \frac{1}{\sqrt{n}} \, dn = 1 + 2\sqrt{x} - 2 = 2\sqrt{x} - 1.
    \]
    
    Combining both inequalities and using Using the fact that \( \sqrt{x+1} \geq \sqrt{x} \) finishes the proof.
\end{proof}

\section{Additional Experiments and Details of Settings}
\label{app:expt-settings}

\begin{wrapfigure}[13]{r}{0.48\textwidth}
\centering
\vspace{-6mm}
\includegraphics[width=0.78\linewidth]{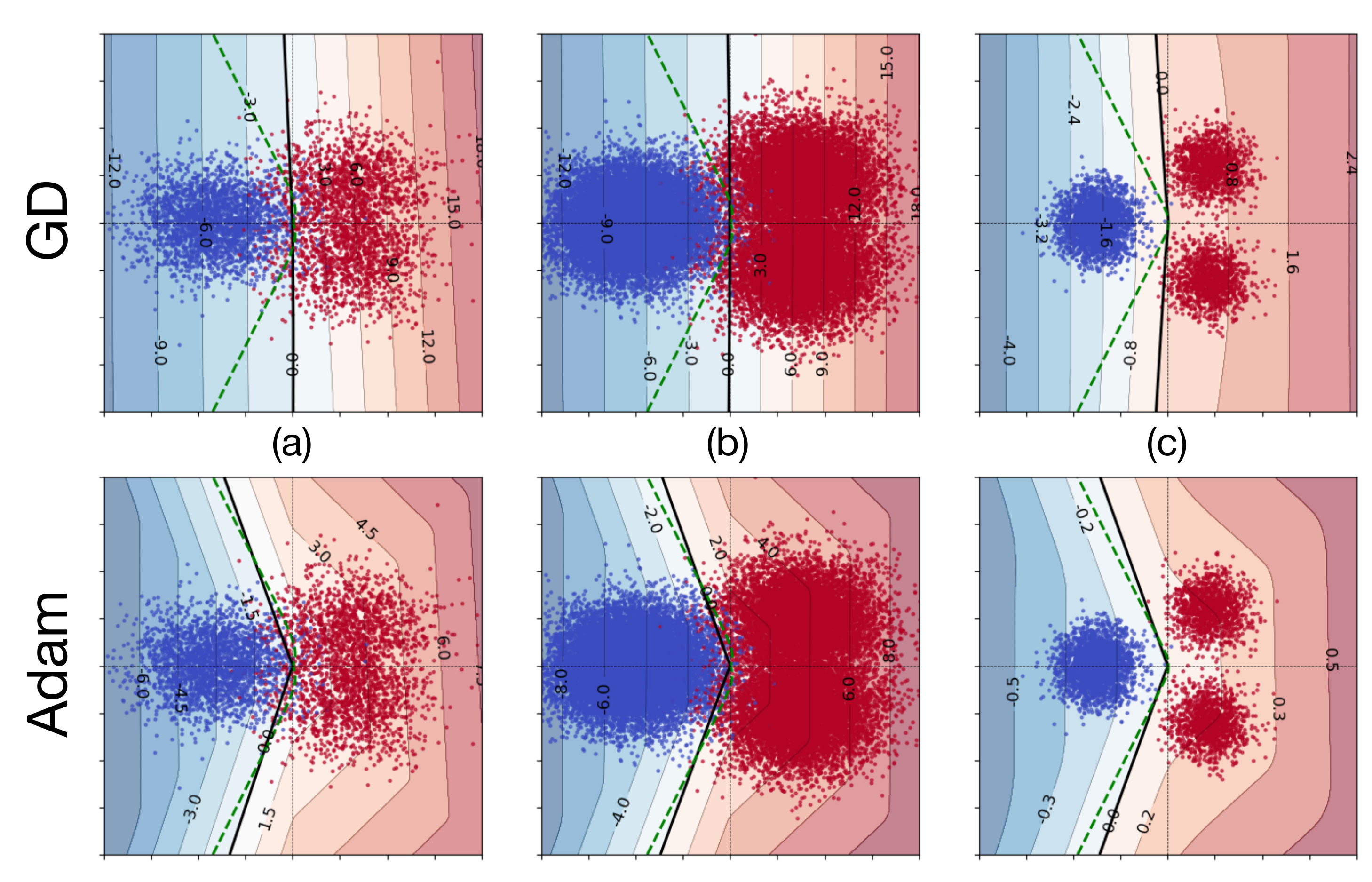}
        \caption{Comparison of the decision boundaries learned by GD and Adam with the Bayes' optimal predictor (dashed green) across three settings of our synthetic setup (details in the text).}
        \label{fig:gauss-app}
\end{wrapfigure}
All experiments in this section were performed on an internal cluster with NVIDIA V100 and P100 GPUs with 32GB memory each.

We list the licenses under which various datasets used in this work were released as follows. All the datasets are publicly available. MNIST is released under the CC BY-SA 3.0 license. CIFAR-10 is released under the MIT license. The code to generate the Waterbirds dataset is released under the MIT license. The creators of the CelebA dataset encourage its use for non-commercial
research purposes only, but do not mention a license name. MultiNLI is released under the ODC-By license. CivilComments is released under the CC BY-NC 4.0 license.

\subsection{Gaussian Dataset}
\paragraph{Additional Results.} In this section, we discuss additional results for the Gaussian dataset. Across all settings, we consider full-batch GD with a learning rate of $0.1$ and Adam with a learning rate of $10^{-4}$ and $\beta_1=\beta_2=0.9999$. We set $m=500, \alpha=0.01,\om=2$ for the results in this section. In the finite sample setting, we consider $d=100$ and $(\sigma_x,\sigma_y,\sigma_z)=(0.2, 0.15, 0.01)$. When $n=5000$, $\mu=0.2$, Adam achieves $0.618\%$ better test accuracy than GD. When $n=30000$, $\mu=0.25$, the gap is $0.521\%$. In the population setting, 
we consider $d=10$, $\mu=0.3$, $(\sigma_x,\sigma_y,\sigma_z)=(0.1,0.1,0.01)$ and Adam achieves $0.292\%$ better test accuracy than GD. The decision boundaries learned by GD and Adam in these three cases are shown in \cref{fig:gauss-app}(a), (b), and (c), respectively. In \cref{fig:population}, the test accuracy of Adam is $0.55\%$ more than that of GD.

\vsn

\begin{wrapfigure}[10]{r}{0.4\textwidth}
\centering
\vspace{-8mm}
\includegraphics[width=0.5\linewidth]{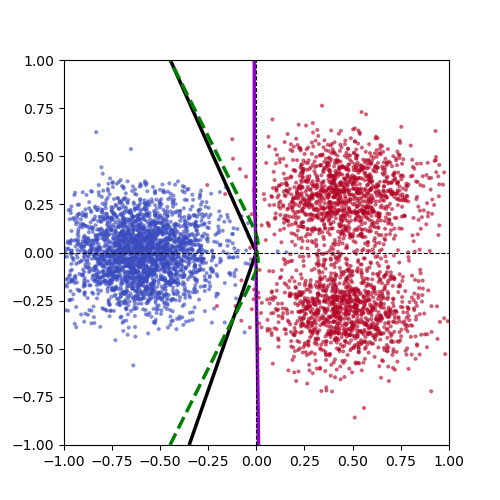}
        \caption{Comparison of the decision boundaries learned by GD and Adam with the Bayes' optimal predictor (dashed green) with batch size $50$.}
        \label{fig:stochastic}
\end{wrapfigure}
\paragraph{Effect of Stochasticity.}  We repeated the experiment in \cref{fig:main} with batch size $50$ to see the effect of stochasticity/mini-batch training. The results are shown in \cref{fig:stochastic}. We observe that the decision boundary learned by Adam is more nonlinear than SGD and closer to Bayes' optimal predictor. However, the decision boundary learned by mini-batch Adam is less nonlinear as compared to full-batch Adam. 

\subsection{MNIST Dataset with Spurious Correlation.}
\paragraph{Training Details.} 
We train a two-layer NN with width $64$ using Adam or SGD with learning rates $10^{-3}$ and $0.1$, respectively, using batch size $64$. The network is initialized as follows. Hidden layer weights are initialized with small-scale random initialization, specifically from Gaussian distribution with $\sigma=0.001/\sqrt{d}$, where $d$ is the input dimension. Final layer weights are initialized as $\pm 1/\sqrt{m}$ with equal probability and kept fixed, where $m$ is the hidden dimension. We train using BCE loss till the loss reaches $5e-3$.   

\subsection{Dominoes Dataset}
\label{app:dominoes}
\paragraph{Dataset.} 
The Dominoes dataset \citep{shah2020pitfalls, pagliardini2022agreedisagreediversitydisagreement} is composed of images where the top half of the image shows an MNIST digit \citep{726791} from class $\{0,1\}$, while the bottom half shows an image from other image datasets such as MNIST, Fashion-MNIST or CIFAR-10. In our case, we use CIFAR-10 \citep{krizhevsky2009learning} images from classes $\{\text{automobile, truck}\}$, which is the MNIST-CIFAR dataset. 
Fig. 1 in \citet{qiu2024complexity} demonstrates how the MNIST-CIFAR dataset is generated as well as more example images. 

\vsn
\paragraph{Training Details.} We train both a randomly-initialized ResNet-18 and ResNet-34 model for up to $500$ epochs or until convergence using a batch size of 32, weight decay $10^{-5}$, initial learning rates $10^{-3}$ for SGD and $10^{-4}$ for Adam, and a cosine annealing learning rate scheduler. We average results across $5$ random seeds. The groups in the test set are balanced.

\vsn
\paragraph{Additional Results.} Following \citet{kirichenko2023layerretrainingsufficientrobustness}, we generate datasets with spurious correlation strengths of $99\%$ and $95\%$ between the spurious features and the target label, while the core features are fully predictive of the label. We report the final average worst-group accuracies for each correlation strength and optimizer in \cref{tab:mnist-cifar-results1} and \cref{mnist-cifar-results-resnet34} for ResNet-18 and ResNet-34, respectively. In both cases, training with Adam leads to significantly better metrics.

\begin{table}[h!]
\centering
\resizebox{0.85\columnwidth}{!}{%
\begin{tabular}{ccccccc}
\hline \multirow{2}{*}{ Method }
 & \multicolumn{3}{c}{$99\%$ correlation} & \multicolumn{3}{c}{$95\%$ correlation} \\
 & Original Acc. & Core-Only Acc.  & Decoded Acc. & Original Acc. & Core-Only Acc.  & Decoded Acc.  \\
\hline SGD  & $0.00_{\pm 0.00}$ & $0.00_{\pm 0.00}$ & $60.42_{\pm 7.06}$ & $0.81_{\pm 0.38}$ & $1.66_{\pm 1.79}$ & $71.04_{\pm 0.63}$  \\
 Adam  & $0.20_{\pm 0.28}$ & $0.41_{\pm 0.55}$ & ${71.37}_{\pm 1.67}$ & ${14.17}_{\pm 3.15}$ & ${20.63}_{\pm 5.75}$ & ${84.66}_{\pm 0.18}$ \\
\hline
\end{tabular}
} 
\caption{Worst-group accuracies for original accuracy, core-only accuracy, and decoded accuracy for a ResNet-18 model trained using SGD and Adam.}
\label{tab:mnist-cifar-results1}
\end{table}

\begin{table*}[h!]
\centering
\resizebox{0.85\columnwidth}{!}{%
\begin{tabular}{ccccccc}
\hline \multirow{2}{*}{ Method }
 & \multicolumn{3}{c}{$99\%$ correlation} & \multicolumn{3}{c}{$95\%$ correlation} \\
 & Original Acc. & Core-Only Acc.  & Decoded Acc. & Original Acc. & Core-Only Acc.  & Decoded Acc.  \\
\hline SGD  & $0.16_{\pm 0.09}$ & $0.00_{\pm 0.00}$ & $47.60_{\pm 8.55}$ & $0.24_{\pm 0.22}$ & $0.00_{\pm 0.00}$ & $59.25_{\pm 3.94}$  \\
 Adam  & $0.08_{\pm 0.11}$ & $3.32_{\pm 2.97}$ & ${69.11}_{\pm 3.26}$ & ${9.60}_{\pm 4.19}$ & ${18.31}_{\pm 5.32}$ & ${82.68}_{\pm 1.00}$ \\
\hline
\end{tabular}
} 
\caption{Worst-group accuracies for original accuracy, core-only accuracy, and decoded accuracy for a ResNet-34 model trained using SGD and Adam.}
\label{mnist-cifar-results-resnet34}
\end{table*}

\vsn
\subsection{Subgroup Robustness Datasets}

\paragraph{Training Details.} Following \citet{gdro}, for the image datasets, we use the Pytorch \tsc{torchvision} implementation of ResNet50 \citep{resnet50} which is pre-trained on the ImageNet dataset, and for the language-based datasets, we use the Hugging Face \tsc{pytorch-transformers} implementation of the pre-trained BERT \tsc{bert-base-uncased} model \citep{devlin-bert}. We report test results for the epoch/hyperparameter setting with the highest worst-group accuracy on the validation set. 

\paragraph{Main Results.} We use batch size $512$ for Waterbirds and MultiNLI and $1024$ for CelebA and CivilComments. The results are averaged over five independent runs for the image datasets and four independent runs for the language datasets. We fine-tune until convergence, which takes $7$ epochs on the language datasets, $5$ epochs on CelebA and $100$ epochs on Waterbirds. \cref{fig:tuning} shows the hyperparameters (learning rate, weight decay and momentum parameters) considered for the two optimizers across these datasets and the worst-group test accuracies at the last fine-tuning epoch. \cref{fig:main_results} and \cref{tab:main_results} show the final results. For Adam, we find that lower values of $\beta_1,\beta_2$ (compared to the default $(0.9,0.999)$) are generally better: the best values for each of them are $10^{-8}$ or $0.5$, across the four datasets.   

\begin{figure}[h!]
    \centering
    \includegraphics[width=0.95\linewidth]{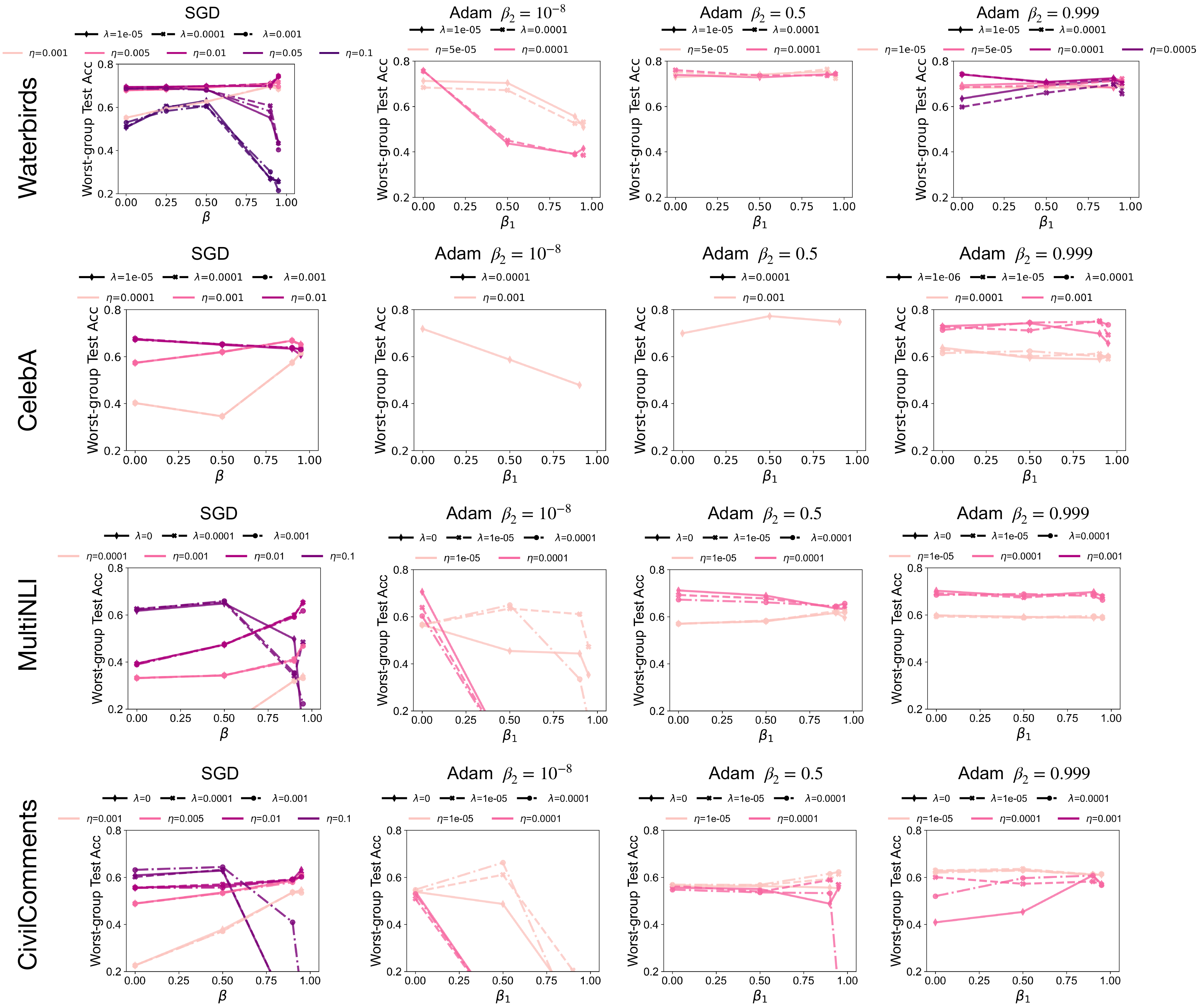}
    \caption{Hyperparameter sweep for SGD and Adam optimizers on four benchmark datasets for subgroup robustness, showing worst-group test accuracy at the last fine-tuning epoch. We sweep the momentum parameters ($\beta$ for SGD, $\beta_1,\beta_2$ for Adam), learning rate ($\eta$), and weight decay ($\lambda$). }
    \label{fig:tuning}
\end{figure}

\begin{table*}[h!]
\centering 
\resizebox{\linewidth}{!}{%
\begin{tabular}{ccccccccc}
\hline \multirow{2}{*}{ Optimizer } & \multicolumn{2}{c}{ Waterbirds } & \multicolumn{2}{c}{ CelebA } & \multicolumn{2}{c}{ MultiNLI } & \multicolumn{2}{c}{ CivilComments} \\
 & Average & Worst-group  & Average & Worst-group & Average & Worst-group & Average & Worst-group  \\
\hline SGD  & $86.9_{\pm 1.09}$ &	$74.35_{\pm 3.26}  $ & $93.74_{\pm 0.27}$ &	$67.56_{\pm 1.81}  $ & $81.12_{\pm 0.22}$ & $	65.92_{\pm 2.26} $ & $86.68_{\pm 0.18}$ &	$62.89_{\pm 1.24}  $ \\
 Adam 
  & $87.91_{\pm 0.52}$ &	$76.04_{\pm 2.04}  $ & $90.89_{\pm 1.70}$	& $77.22_{\pm 8.22}$ & $81.99_{\pm 0.32}$ & $	71.18_{\pm 2.74}$ & $85.73_{\pm 0.58}$ &	$66.25_{\pm 3.05}$ \\
\hline
\end{tabular}%
}\caption{Comparison of average and worst-group test accuracy on four benchmark datasets for subgroup robustness, using larger batch sizes: Adam outperforms SGD. 
See \cref{sec:expts,app:expt-settings} for details.
}
 \label{tab:main_results}
\end{table*}

\paragraph{Results with Smaller Batch Sizes.} Here, following \citet{jtt}, we use a batch size of $128$ for the image datasets and $32$ for the language datasets. The results are averaged over four independent runs, and presented in \cref{tab:main_results_smallbs}. We used the default momentum values for Adam in this setting, but tuned the learning rate and weight decay.

For the image datasets, we use SGD with momentum $0.9$ and set the learning rate as $10^{-3}$ for Waterbirds and $10^{-4}$ for CelebA and the weight decay as $10^{-4}$ for both datasets. For Adam, tried learning rates $\{10^{-5},10^{-4}\}$ and weight decays $\{10^{-6}, 10^{-4}\}$ for both datsets. We use learning rate and weight decay of $10^{-5}$ and $10^{-6}$ for Waterbirds, and $10^{-4}$ and $10^{-4}$ for CelebA for the final results.

For the language datasets, we tried the following settings. For Adam, we tried learning rates $\{10^{-5},2 \cdot 10^{-5}\}$ for both datasets. For SGD, we set the learning rate as $10^{-3}$ and tried momentum values $\{0,0.9\}$. For both optimizers, we tried weight decays $\{0,10^{-3}\}$. For the final results, we set weight decay as $0$ across all cases and use a learning rate of $10^{-5}$ for Adam for both datasets. The SGD-momentum is set as $0.9$ for MultiNLI and $0$ for CivilComments. 

Consistent with the results using larger batch size, we find that even with smaller batch sizes, Adam attains better worst-group test accuracy and comparable average test accuracy compared to SGD, as shown in \cref{tab:main_results_smallbs}. However, we observe that the gains are smaller for image datasets and larger for language datasets in this setting.

\begin{table*}[h!]
\centering 
\resizebox{\linewidth}{!}{%
\begin{tabular}{ccccccccc}
\hline \multirow{2}{*}{ Optimizer } & \multicolumn{2}{c}{ Waterbirds } & \multicolumn{2}{c}{ CelebA } & \multicolumn{2}{c}{ MultiNLI } & \multicolumn{2}{c}{ CivilComments} \\
 & Average & Worst-group  & Average & Worst-group & Average & Worst-group & Average & Worst-group  \\
\hline SGD  & $85.74_{\pm 0.34}$ &	$71.95_{\pm 0.94}  $ & $93.73_{\pm 0.32}$ &	$67.64_{\pm 3.09}  $ & $80.17_{\pm 0.78}$ & $	54.38_{\pm 1.59} $ & $72.99_{\pm 1.68}$ &	$43.72_{\pm 4.22}  $ \\
 Adam 
  & $86.33_{\pm 1.09}$ &	$73.44_{\pm 2.57}  $ & $94.10_{\pm 1.11}$	& $68.19_{\pm 2.66}$ & $81.78_{\pm 0.16}$ & $	67.21_{\pm 1.93}$ & $83.71_{\pm 0.98}$ &	$69.64_{\pm 2.18}$ \\
\hline
\end{tabular}%
}\caption{Comparison of average and worst-group test accuracy on four benchmark datasets for subgroup robustness, using smaller batch sizes: Adam outperforms SGD. 
See \cref{sec:expts,app:expt-settings} for details.
}
 \label{tab:main_results_smallbs}
\end{table*}

\subsection{Boolean Features Dataset}
\paragraph{Dataset.} Formally, let $d_c, d_s \in \mathbb N$ be the number of core and spurious features respectively, $d_u \in \mathbb N$ be the number of independent features that are independent of the label, and let $d := d_c + d_s + d_u$ be the total dimension of the vector. For some $\boldsymbol{ x }\in \{-1, +1\}^d$, denote $x_c \in \{-1,+1\}^{d_c} $, $x_s \in \{-1,+1\}^{d_s}$, and $x_u \in \{-1,+1\}^{d_u}$ to be the coordinates of $\boldsymbol{x}$ that correspond to the core, spurious, and independent features respectively. Let $\lambda \in [0,1]$ be the strength of the spurious correlation. Define two Boolean functions:
\begin{align*}
    f_c : \{+1, -1\}^{d_c} \rightarrow \{+1, -1\},\quad
    f_s : \{+1, -1\}^{d_s} \rightarrow \{+1, -1\}
\end{align*}

Next, we define the distributions $\mathcal D_\text{same}$, $\mathcal D_\text{diff}$, and $\mathcal D_\lambda$. Define $\mathcal D_\text{same}$ to be the uniform distribution over the set of points in $\{-1, +1\}^d$ where the core and spurious features agree:
\begin{equation*}
    \mathcal D_\text{same} := \text{Unif}\left ( \left \{\boldsymbol x \in \{-1, +1\}^d \ : \ f_c(x_c) = f_s(x_s) \right \} \right)
\end{equation*}

Similarly define $\mathcal D_\text{diff}$ as the uniform distribution over $\{-1, +1\}^d$ where the core and spurious features disagree:
\begin{equation*}
    \mathcal D_\text{diff} := \text{Unif}\left ( \left \{\boldsymbol x \in \{-1, +1\}^d \ : \ f_c(x_c) \neq f_s(x_s) \right \} \right)
\end{equation*}

Lastly, define $\mathcal D_\lambda$ as the distribution where with probability $\lambda$, a sample is drawn from $\mathcal D_\text{same}$, and otherwise a sample is drawn from $\mathcal D_\text{diff}$.


\paragraph{Details of Hyperparameters.} We use a width $20$ NN with Leaky ReLU activation with $0.01$ negative slope and train with cross-entropy loss. We use $10000$ training samples and $5000$ test samples drawn from a data distribution where $d = 50, d_u = 41$, training for $80000$ epochs. We consider normal initialization with $\mu=0$ and $\sigma = \sqrt{\frac{2}{\text{in\_features}}}$. For each optimizer, we did a sweep across various learning rates and picked the best performing learning rate. The results are reported after averaging across 5 random seeds.

\paragraph{Additional Results.} In \cref{spstaircase-results}, we consider 4 initialization schemes for the model parameters: (1) the default PyTorch linear initialization, which is a uniform initialization on $(-\sqrt{k}, \sqrt{k})$ where $k = \frac{1}{\text{in\_features}}$, and (2) normal initialization with $\mu=0$ and $\sigma = \alpha \sqrt{\frac{2}{\text{in\_features}}}$, with $\alpha = 10^{-2}, \ 10^{-1} \ \& \ 1$. These results are in line with those in \cref{tab:spstaircase-results} and show that Adam leads to richer feature learning. We report in \cref{spstaircase-end-results} the final test accuracies, decoded core, and decoded spurious correlations at the end of training for each optimizer, highlighting that Adam maintains its superior performance throughout training. We also report in \cref{spstaircase-results-9-1} the average test accuracies, decoded core, and decoded spurious correlations at lowest comparable training loss for SGD and Adam, where we use $9$ core features instead of $8$, so that $d_c = 9$, $d_s = 1$, $d_u = 40$. We use the same correlation strength $\lambda = 0.9$ and uniform initialization.

\begin{table*}[h!]
\centering
\resizebox{\columnwidth}{!}{%
\begin{tabular}{ccccccccccccc}
\hline \multirow{2}{*}{ Method }
 & \multicolumn{3}{c}{Uniform init} & \multicolumn{3}{c}{Normal init ($\alpha = 0.01$)} & \multicolumn{3}{c}{Normal init ($\alpha = 0.1$)} & \multicolumn{3}{c}{Normal init ($\alpha = 1$)}  \\
 & Test acc & DCC  & DSC & Test acc & DCC  & DSC & Test acc & DCC  & DSC & Test acc & DCC  & DSC   \\
\hline SGD  & $95.61_{\pm 2.12}$ & $0.75_{\pm 0.09}$ & $0.54_{\pm 0.15}$ & $95.00_{\pm 0.74}$ & $0.75_{\pm 0.03}$ & $0.95_{\pm 0.04}$ & $96.08_{\pm 1.44}$ & $0.79_{\pm 0.07}$ & $0.77_{\pm 0.20}$ & $89.58_{\pm 1.92}$ & $0.51_{\pm 0.08}$ & $0.78_{\pm 0.08}$ \\
 Adam  & $\mathbf{97.29}_{\pm 0.62}$ & $\mathbf{0.84}_{\pm 0.03}$ & $\mathbf{0.42}_{\pm 0.07}$ & $\mathbf{98.13}_{\pm 0.30}$ & $\mathbf{0.89}_{\pm 0.01}$ & $\mathbf{0.40}_{\pm 0.02}$ & $\mathbf{97.77}_{\pm 0.78}$ & $\mathbf{0.87}_{\pm 0.03}$ & $\mathbf{0.42}_{\pm 0.08}$ & $\mathbf{97.87}_{\pm 0.69}$ & $\mathbf{0.87}_{\pm 0.03}$ & $\mathbf{0.36}_{\pm 0.06}$ \\
\hline
\end{tabular}
} \caption{Test accuracy, decoded core, and decoded spurious correlations averaged across 5 random seeds for SGD and Adam at lowest comparable training loss. Higher test accuracy and decoded core correlation are better. Lower decoded spurious correlation is better.}
\label{spstaircase-results}
\end{table*}

\begin{table*}[h!]
\centering
\resizebox{\columnwidth}{!}{%
\begin{tabular}{ccccccccccccc}
\hline \multirow{2}{*}{ Method }
 & \multicolumn{3}{c}{Uniform init} & \multicolumn{3}{c}{Normal init ($\alpha = 0.01$)} & \multicolumn{3}{c}{Normal init ($\alpha = 0.1$)} & \multicolumn{3}{c}{Normal init ($\alpha = 1$)}  \\
 & Test acc & DCC  & DSC & Test acc & DCC  & DSC & Test acc & DCC  & DSC & Test acc & DCC  & DSC   \\
\hline SGD  & $95.54_{\pm 2.18}$ & $0.75_{\pm 0.13}$ & $0.49_{\pm 0.20}$ & $94.96_{\pm 0.75}$ & $0.75_{\pm 0.01}$ & $0.95_{\pm 0.04}$ & $96.12_{\pm 1.52}$ & $0.81_{\pm 0.07}$ & $0.76_{\pm 0.22}$ & $89.54_{\pm 1.90}$ & $0.51_{\pm 0.06}$ & $0.81_{\pm 0.10}$ \\
 Adam  & $\mathbf{99.01}_{\pm 0.71}$ & $\mathbf{0.94}_{\pm 0.04}$ & $\mathbf{0.31}_{\pm 0.09}$ & $\mathbf{99.28}_{\pm 0.52}$ & $\mathbf{0.96}_{\pm 0.03}$ & $\mathbf{0.30}_{\pm 0.07}$ & $\mathbf{99.20}_{\pm 0.39}$ & $\mathbf{0.95}_{\pm 0.02}$ & $\mathbf{0.36}_{\pm 0.03}$ & $\mathbf{99.22}_{\pm 0.37}$ & $\mathbf{0.95}_{\pm 0.01}$ & $\mathbf{0.20}_{\pm 0.08}$ \\
\hline
\end{tabular}
} \caption{Final test accuracy, decoded core, and decoded spurious correlations averaged across 5 random seeds for SGD and Adam at the end of training.}
\label{spstaircase-end-results}
\end{table*}

\begin{wraptable}[8]{r}{0.62\textwidth}
\centering
\vspace{-4mm}
\resizebox{6cm}{!}{%
\begin{tabular}{cccc}
\hline \multirow{2}{*}{ Method }
 & \multicolumn{3}{c}{Uniform init} \\
 & Test acc & DCC  & DSC \\
\hline SGD  & $83.47_{\pm 0.94}$ & $0.23_{\pm 0.03}$ & $0.74_{\pm 0.02}$  \\
 Adam  & $\mathbf{94.22}_{\pm 1.53}$ & $\mathbf{0.70}_{\pm 0.06}$ & $\mathbf{0.51}_{\pm 0.05}$ \\
\hline
\end{tabular}
}
\caption{Test accuracy, decoded core, and decoded spurious correlations on the Boolean features dataset for SGD and Adam at lowest comparable training loss. $d_c\!=\!9$, $d_s\!=\!1$, $d_u\!=\!40$.} 
\label{spstaircase-results-9-1}
\end{wraptable}

\paragraph{Training Dynamics.} We report in Figures~\ref{fig:spstaircase-SGDplot}~and~\ref{fig:spstaircase-Adamplot} the test accuracy, decoded core, and decoded spurious correlations throughout training with $d_c = 8$, $d_s = 1$, $\lambda = 0.9$, for 5 different seeds with normal initialization $(\alpha = 1)$ for SGD and Adam respectively. From these dynamics, we observe that SGD learned the spurious feature early in training and gradually learns some of the core feature but still retains the spurious feature information. In contrast, when training with Adam, we see the spurious feature is learned early in training, but forgotten quickly as training progresses to instead learn the core feature. 

\begin{figure}[h!]
    \centering
    \includegraphics[width=0.95\linewidth]{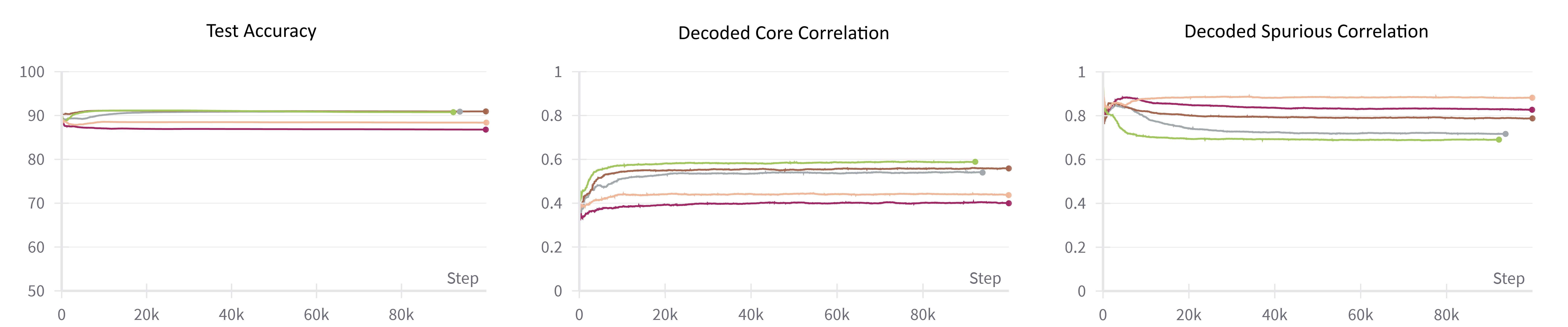}
    \caption{Training curves for SGD with normal weight initialization $(\alpha = 1)$ for 5 random seeds.}
    \label{fig:spstaircase-SGDplot}
\end{figure}

\begin{figure}[h!]
    \centering
    \includegraphics[width=0.95\linewidth]{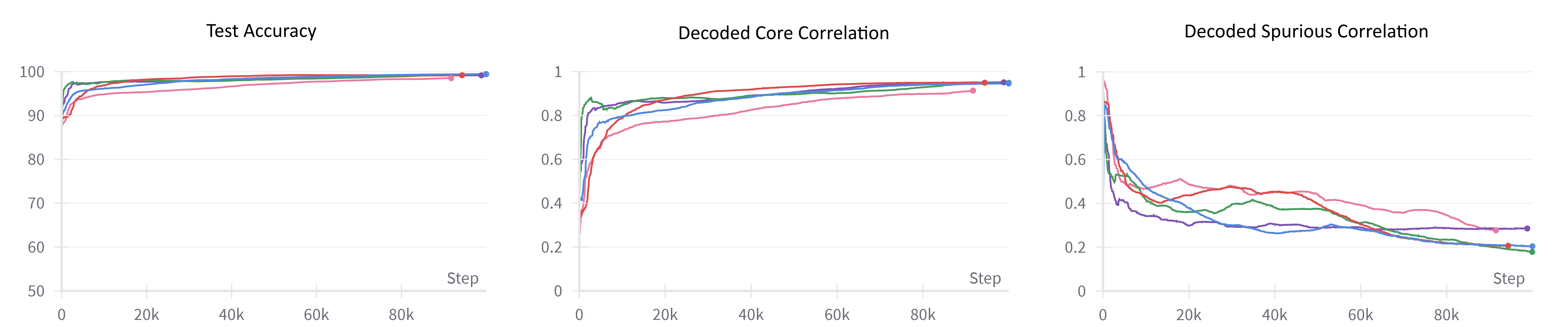}
    \caption{Training curves for Adam with normal weight initialization $(\alpha = 1)$ for 5 random seeds.}
    \label{fig:spstaircase-Adamplot}
\end{figure}
\vsn
\section{Discussion}
\label{sec:lim}
In this section, we discuss some limitations of our work. 

First, we note that the theoretical results hold under some assumptions, such as the population setting and with linear loss function, which are quite different from practice. However, given that this is the first result of its kind it is natural to start with some simplifying assumptions. We would like to note that there are no prior works analyzing the training dynamics and implicit bias of signGD or Adam for NNs. The only exception we are aware of is \citet{tsilivis2024flavorsmarginimplicitbias}, which analyzes steepest descent algorithms (including signGD) for homogeneous NNs trained with an exponentially-tailed loss function. This paper focuses on the late stage of training, and assumes separable data, which does not apply in our setting. Additionally, in their setting (for instance, using uniform disc distributions instead of Gaussian to ensure separability in the population setting), it can be shown that both the linear and the nonlinear predictors are KKT points of the max-$\ell_2$ and max-$\ell_\infty$ margin problems. Consequently, we cannot distinguish between the implicit biases of GD and signGD in their framework. Therefore, it is natural to make some additional assumptions to make the analysis more tractable and take a step towards understanding and contrasting the implicit biases of Adam/signGD and GD. Generalizing these results to broader settings is an interesting direction for future work.

Second, in this work, we focus on settings where simplicity bias in NNs hurts generalization. However, this is not always the case. There is a large body of work showing that simplicity bias in DL is helpful and can explain good in-distribution generalization \citep{arpit2017closer,valle2018deep,rahaman19a,barrett2021implicit}. Simultaneously, when the goal is to ensure good performance under certain distribution shifts, such as OOD generalization or robustness to spurious features, simplicity bias has been shown to be detrimental in such cases \citep{geirhos2020shortcut,shah2020pitfalls,vasudeva2024mitigating}. In this work, we focus on the latter setting to showcase the benefit of richer feature learning encouraged by Adam. However, in general, either one of richer or simpler feature learning may be desirable depending on the problem.
%


\end{document}